\algnewcommand\algorithmicinput{\textbf{Input:}}
\algnewcommand\Input{\item[\algorithmicinput]}
\algnewcommand\algorithmicoutput{\textbf{Output:}}
\algnewcommand\Output{\item[\algorithmicoutput]}
\DeclareRobustCommand\widecheck[1]{{\mathpalette\@widecheck{#1}}}
\def\@widecheck#1#2{%
    \setbox\z@\hbox{\m@th$#1#2$}%
    \setbox\tw@\hbox{\m@th$#1%
       \widehat{%
          \vrule\@width\z@\@height\ht\z@
          \vrule\@height\z@\@width\wd\z@}$}%
    \dp\tw@-\ht\z@
    \@tempdima\ht\z@ \advance\@tempdima2\ht\tw@ \divide\@tempdima\thr@@
    \setbox\tw@\hbox{%
       \raise\@tempdima\hbox{\scalebox{1}[-1]{\lower\@tempdima\box
\tw@}}}%
    {\ooalign{\box\tw@ \cr \box\z@}}}
\theoremstyle{plain} 
\newtheorem{theorem}{Theorem}[section]
\newtheorem{corollary}[theorem]{Corollary}
\newtheorem{lemma}[theorem]{Lemma}
\newtheorem*{lemma*}{Lemma}
\newtheorem{proposition}[theorem]{Proposition}
\theoremstyle{definition} 
\newtheorem{definition}[theorem]{Definition}
\theoremstyle{definition} 
\theoremstyle{remark} 
\theoremstyle{remark} 
\newtheorem{remark}[theorem]{Remark}
\numberwithin{equation}{section}
\newcommand{\I}{\mathcal{I}}
\newcommand{\pred}[1]{\boldsymbol{{1}}[#1]}
\renewcommand{\H}{{\cal H}}
\let\P\relax
\DeclareMathOperator{\P}{\mathbb{P}}
\DeclareMathOperator{\E}{\mathbb{E}}
\newcommand{\BP}{{\cal A}}
\newcommand{\SP}{{\cal V}}
\newcommand{\bp}{A} %
\renewcommand{\sp}{V} %
\newcommand{\bSP}{{\bar{\cal V}}}
\newcommand{\bsp}{\bar V} %
\newcommand{\teta}{\tilde\eta}
\newcommand{\tr}{\tilde r}
\newcommand{\X}{{\cal X}}
\newcommand{\Y}{{\cal Y}}
\newcommand{\N}{\mathbb{N}}
\newcommand{\R}{\mathbb{R}}
\newcommand{\nrm}[1]{\left\Vert #1 \right\Vert}
\newcommand{\beq}{\begin{eqnarray*}}
\newcommand{\eeq}{\end{eqnarray*}}
\newcommand{\beqn}{\begin{eqnarray}}
\newcommand{\eeqn}{\end{eqnarray}}
\newcommand{\abs}[1]{\left| #1 \right|}
\newcommand{\hide}[1]{}
\newcommand{\set}[1]{\left\{ #1 \right\}}
\newcommand{\eps}{\varepsilon}
\newcommand{\paren}[1]{\left( #1 \right)}
\newcommand{\sqprn}[1]{\left[ #1 \right]}
\newcommand{\err}{\mathrm{err}}
\newcommand{\serr}{\widehat{\err}}
\newcommand{\oo}[1]{\frac{1}{#1}}
\newcommand{\argmin}{\mathop{\mathrm{argmin}}}
\newcommand{\argmax}{\mathop{\mathrm{argmax}}}
\newcommand{\dist}{\rho} %
\newcommand{\g}{\gamma}
\newcommand{\ddim}{\operatorname{ddim}}
\newcommand{\diam}{\operatorname{diam}}
\newcommand{\gn}{\, | \,}
\newcommand{\bepf}{\begin{proof}}
\newcommand{\enpf}{\end{proof}}
\newcommand{\tS}{S'}
\newcommand{\tY}{Y'}
\newcommand{\tbY}{{\bm Y}'}
\newcommand{\tby}{{\bm y}'}
\newcommand{\nn}{\text{nn}}
\newcommand{\fns}{m} %
\newcommand{\rns}{M} %
\newcommand{\mss}{W} %
\renewcommand{\a}{\alpha}
\newcommand{\gnet}{{\bm X}}
\newcommand{\Ng}{t_\g}
\newcommand{\UB}{\textrm{UB}}
\newcommand{\missmass}{L}
\newcommand{\Vor}{\mathcal{V}}
\newcommand{\Borel}{\mathscr{B}}
\newcommand{\ind}{\boldsymbol{{1}}}
\newcommand{\UBC}{\text{UBC}}
\newcommand{\appref}[1]{Appendix \ref{ap:#1}}
\newcommand{\figref}[1]{Figure~\ref{FIG:#1}}
\newcommand{\reals}{\mathbb{R}}
\newcommand{\nats}{\mathbb{N}}
\newcommand{\lemref}[1]{Lemma \ref{lem:#1}}
\newcommand{\thmref}[1]{Theorem \ref{thm:#1}}
\renewcommand{\algref}[1]{Algorithm~\ref{alg:#1}}
\renewcommand{\eqref}[1]{(\ref{eq:#1})}
\newcommand{\secref}[1]{Section~\ref{SEC:#1}}
\newcommand{\remref}[1]{Remark~\ref{rem:#1}}
\newcommand{\ds}{\displaystyle}
\newcommand{\bmu}{{\bar{\mu}}}
\newcommand{\cX}{\mathcal{X}}
\newcommand*\diff{\mathop{}\!\mathrm{d}}
\newcommand{\subI}{_\mathrm{I}}
\newcommand{\subII}{_\mathrm{II}}
\newcommand{\subIII}{_\mathrm{III}}
\newcommand{\myi}{$\mathrm(\textup{i})$}
\newcommand{\myii}{$\mathrm(\textup{ii})$}
\newcommand{\bQ}[1]{\textup{{\bf Q#1}}}
\newcommand{\newname}{\textup{{\textsf{OptiNet}}}}
\newcommand{\Bcst}{Bayes-consistent}
\newcommand{\Bcstn}{Bayes consistent}
\newcommand{\Bcsy}{Bayes consistency}
\newcommand{\dismet}{\rho_{\text{dis}}}
\providecommand{\citet}[1]{\cite{#1}}
\providecommand{\citep}[1]{\cite{#1}}
\renewcommand{\gets}{:=}
\newcommand{\eor}{\scalebox{0.7}{${}^{{}_\blacktriangleleft}$}} %
\newcommand{\hst}{h^*}
\newcommand{\hh}{\hat h}
\newcommand{\bX}{{\bf X}}
\newcommand{\bY}{{\bf Y}}
\newcommand{\Uorel}{\mathscr{U}}
\newcommand{\bphi}{\bar\phi}
\newcommand{\mix}{\lambda}
\newcommand{\MS}{\mathfrak{X}}
\newcommand{\MSk}{\MS_{\kappa}}
\newcommand{\cont}{\mathfrak{c}}
\newcommand{\vectosetsym}{W}
\newcommand{\vectoset}[1]{\vectosetsym_{#1}}
\newcommand{\lrvmc}{\kappa_{\min}}
\newcommand{\muhlabel}{Z}
\newcommand{\Alg}{\textup{{\textsf{Alg}}}}
\newcommand{\discset}{D}
\newcommand{\MSD}{\MS_{|\discset|}}
\newcommand{\bS}{{\boldsymbol{S}}}
\newcommand{\essep}{ES}
\newcommand{\RVM}{\text{RVMC}}
\newcommand{\KSW}{\cite{DBLP:conf/nips/KontorovichSW17}}
\newcommand{\KSWfull}{\cite{KontorovichSW17arxiv}}
\newcommand{\KSU}{\cite{DBLP:journals/jmlr/KontorovichSU17+nips}}
\begin{document}

\begin{frontmatter}
  
\title{Universal Bayes consistency\\ in metric spaces}
\runtitle{Universal metric Bayes consistency}

\begin{aug}
  \author{\fnms{Steve} \snm{Hanneke}\thanksref{m1}
    \ead[label=e1]{steve.hanneke@gmail.com}
  }
  \and
  \author{\fnms{Aryeh} \snm{Kontorovich}\thanksref{m2}
    \ead[label=e2]{karyeh@cs.bgu.ac.il}
  } \\
  \and
  \author{\fnms{Sivan} \snm{Sabato}\thanksref{m2}
    \ead[label=e3]{sabatos@cs.bgu.ac.il}
  }
  \and
  \author{\fnms{Roi} \snm{Weiss}\thanksref{m3}
    \ead[label=e4]{roiw@ariel.ac.il}
  }

  \runauthor{S. Hanneke, A. Kontorovich, S. Sabato, R. Weiss}
  
  \affiliation{
    Toyota Technological Institute at Chicago\thanksmark{m1}\\
    and
    Ben-Gurion University of the Negev\thanksmark{m2}\\
    and
    Ariel University\thanksmark{m3}
  }

\end{aug}

\begin{abstract}
We extend a recently proposed 1-nearest-neighbor-based multiclass learning algorithm and prove that our modification is universally strongly \Bcstn\ in all metric spaces admitting {\em any} such learner, making it an ``optimistically universal'' \Bcst\ learner. 
This is the first learning algorithm known to enjoy this property;
by comparison, the $k$-NN classifier and its variants are not generally universally \Bcstn, except under additional structural assumptions, such as an inner product, a norm, finite dimension, or a Besicovitch-type property.

The metric spaces in which universal \Bcsy\ is possible are the ``essentially separable'' ones --- a notion that we define, which is more general than standard separability.
The existence of metric spaces that are not essentially separable is widely believed to be independent of the ZFC axioms of set theory.
We prove that essential separability exactly characterizes the existence of a universal \Bcst\ learner for the given metric space.
In particular, this yields the first impossibility result for universal \Bcsy.

Taken together, our results completely characterize strong and weak universal Bayes consistency in metric spaces.
\end{abstract}

\setattribute{keyword}{AMS}{AMS 2010 subject classifications:}
\begin{keyword}[class=AMS]
  \kwd[Primary ]{54E70} %
  \kwd{97K80} %
  \kwd{62C12} %
  \kwd[; secondary ]{03E17} %
  \kwd{03E55} %
\end{keyword}

\begin{keyword}
  \kwd{metric space}
  \kwd{nearest neighbor}
  \kwd{classification}
  \kwd{Bayes consistency}
\end{keyword}

\setattribute{keyword}{support}{Support:}
\begin{keyword}[class=support]
\kwd{Aryeh Kontorovich was supported in part by the Israel Science Foundation (grant No. 755/15), Paypal and IBM.
Sivan Sabato was supported in part by the Israel Science Foundation (grant No. 555/15).}
\end{keyword}

\end{frontmatter}

\section{Introduction}
\label{SEC:intro}

Since their inception in the 1950's \citep{FH1989} --- or, according to some accounts, nearly 1000 years earlier \citep{DBLP:journals/prl/Pelillo14} --- nearest-neighbor methods have provided an intuitive and reliable suite of techniques for performing classification in metric spaces.
For $k$-NN based methods, it has been generally understood that some notion of finite dimensionality is both necessary and sufficient for the methods to be \Bcstn\ under all distributions over the metric space --- a property known as {\em universal \Bcsy} (UBC). 
However, a complete characterization of the metric spaces in which {\em any} nearest neighbor method (or other learners, for that matter) is UBC has been so far unknown. 
For the problem of multiclass classification, we resolve these questions exhaustively.

To answer these questions, we study a compression-based $1$-NN algorithm for multiclass classification which was proposed in 2017 \cite{DBLP:conf/nips/KontorovichSW17}, and shown to be strongly UBC in all metric spaces of bounded diameter and doubling dimension. It was also shown that there exist infinite-dimensional spaces in which this algorithm is strongly \Bcstn, while classic $k$-NN based methods provably are not.
Left open was the full characterization of metric spaces in which this algorithm is UBC.
In this work, we provide this characterization. Moreover, we prove that this algorithm is UBC in any metric space for which a UBC algorithm exists, thus resolving the above fundamental open question about nearest-neighbor methods.

\paragraph{Main results}
We design a generalized version of the algorithm used in \cite{DBLP:conf/nips/KontorovichSW17}, which we call \newname{}
(see Algorithm~\ref{alg:simple}).
The contribution of this paper is twofold:
(i) We show that \newname{} is universally strongly \Bcstn\ in all essentially separable metric spaces. 
A formal definition of {\em essential separability} --- our broadening of the standard notion of separability ---
is given in \secref{COMPRESSION_SCHEME}.
Briefly, in an essentially separable metric space, the total mass of every probability measure is contained
in some separable subspace.
Whether {\em every} metric space is essentially separable is widely believed to hinge upon set-theoretic axioms that are independent of ZFC, having to do with the existence of certain measurable cardinals (we provide the relevant set-theoretic background in Section~\ref{SEC:RVMC_PRE}).
(ii) We show that in any set-theoretic model that allows the existence of non-essentially separable metric spaces, no (strong or weak) universally \Bcst\ learner is possible on such spaces.
To our knowledge, this is the first construction of a learning setting in which universal \Bcsy\ is impossible. 
In contrast, if one adopts a set-theoretic model in which every metric space is essentially separable, then \newname{} is always universally strongly \Bcstn.
As such, \newname{} is {\em optimistically} universally \Bcstn\ for metric spaces, in a sense analogous to \cite{DBLP:journals/corr/Hanneke17}: it succeeds whenever success is possible,
and
%\hide{\hide{,\footnote{This terminology of optimistic learning was introduced in \cite{DBLP:journals/corr/Hanneke17}, which applies the principle in the context of removing the iid assumption, exhibiting a learning rule which is optimistically consistent for function learning, in the sense that it asymptotically learns any function under any process $\{X_{i}\}_{i \in \nats}$ for which it is possible to do so.}} --- and}
is the first learning algorithm known to enjoy this property.
For comparison, $k$-NN and other existing nearest-neighbor approaches are only universally \Bcstn\ under additional structural assumptions, such as an inner product, a norm, a finite dimension, or a Besicovitch-type property \cite{cerou2006nearest,MR2235289,MR2654492}, all of which are significantly stronger assumptions than essential separability.

Taken together, our results completely characterize strong and weak UBC in metric spaces.

\paragraph{Related work}
Nearest-neighbor methods were initiated by Fix and Hodges in 1951 \cite{FH1989} and, in the celebrated $k$-NN formulation, have been placed on a solid theoretical foundation \cite{CoverHart67,stone1977,devroye1996probabilistic,MR877849,DBLP:journals/corr/ChaudhuriD14}.
Following the pioneering work of \citet{CoverHart67,stone1977} on nearest-neighbor classification, it was shown by \citet{MR877849,devroye1996probabilistic,gyorfi:02} that the $k$-NN classifier is universally strongly \Bcstn\ in $(\R^d,\nrm{\cdot}_2)$.
These results made extensive use of the Euclidean structure of $\R^d$, but in \citet{shwartz2014understanding} a weak Bayes-consistency result was shown for metric spaces with a bounded diameter and finite doubling dimension, and additional distributional smoothness assumptions.
%\hide{More recently, some of the classic results on the decay rates of $k$-NN risk were refined by \citet{DBLP:journals/corr/ChaudhuriD14}, in an analysis that captures the interplay between the metric and the sampling distribution. The worst-case rates have an exponential dependence on the dimension (the so-called {\em curse of dimensionality}), and \cite{MR1741506,MR3061714} examine this phenomenon closely under various distributional and structural assumptions.}

Consistency of NN-type algorithms in more general (and, in particular, infinite-dimensional) metric spaces was discussed in \cite{MR2327897,MR2235289,MR2654492,cerou2006nearest,MR1366756, forzani2012consistent}. 
Characterizations of \Bcsy\ for the standard $k$-NN \cite{cerou2006nearest,forzani2012consistent} and for a generalized ``moving window'' classification rule \cite{MR2327897} were given in terms of a Besicovitch-type condition (see Section~\ref{SEC:disc} for a more detailed discussion).
By Besicovitch's density theorem \cite{fremlin2000measure}, in $(\R^d,\nrm{\cdot}_2)$, and more generally in finite-dimensional normed spaces, the aforementioned condition holds for all distributions;
however, in infinite-dimensional spaces this condition may be violated \cite{preiss1979invalid,MR609946}.
The violation of the Besicovitch condition is not an isolated pathology --- occurring, for example, in the commonly used Gaussian Hilbert spaces \cite{MR1974687}.
Leveraging the consistency of $k$-NN in finite dimensions, the {\em filtering} technique (taking the first $d$ coordinates in some basis representation for an appropriate $d$) was shown to be universally weakly consistent in \cite{MR2235289}.
However, that technique is only applicable in separable Hilbert spaces, as opposed to more general metric spaces.
%\hide{The insight of \cite{MR2235289} was extended to the more general Banach spaces in \cite{MR2654492} under various regularity assumptions.}
For compact metric spaces, the SVM algorithm can be made universally \Bcstn\ by using an appropriate kernel \cite{DBLP:conf/nips/ChristmannS10}.

%hide{None of the aforementioned generalization results for proximity-based techniques are in the form of fully empirical, explicitly computable sample-dependent error bounds. Rather, they are stated in terms of the unknown Bayes-optimal risk, and some involve additional parameters quantifying the well-behavedness of the unknown distribution (see \cite{kontorovich2014bayes} for a detailed discussion). As such, these guarantees do not enable a practitioner to compute a numerical generalization error estimate for a given training sample, much less allow for a data-dependent selection of $k$, which must be tuned via cross-validation. The asymptotic expansions in \citet{MR1635410,335893,hall2005,samworth2012} likewise do not provide a computable finite-sample bound. The quest for such bounds was a key motivation behind the series of works \cite{DBLP:journals/tit/GottliebKK14+colt,kontorovich2014maximum,gkn-ieee18+nips,DBLP:journals/jmlr/KontorovichSU17+nips}.}

Although the classic $1$-NN classifier is well-known to be inconsistent in general, in recent years a series of papers has presented various ways of  learning a {\em regularized} $1$-NN classifier, as an alternative to $k$-NN.
Gottlieb et al. \cite{DBLP:journals/tit/GottliebKK14+colt} showed that an approximate nearest-neighbor search can act as a regularizer, actually improving generalization performance rather than just injecting noise.
This technique was extended to multiclass classification in \cite{kontorovich2014maximum}.
In a follow-up work, \cite{kontorovich2014bayes} showed that applying Structural Risk Minimization (SRM) to a margin-regularized data-dependent bound very similar to that in \cite{DBLP:journals/tit/GottliebKK14+colt} yields a strongly \Bcst\ $1$-NN classifier in doubling spaces with a bounded diameter.

Approaching the problem through the lens of sample compression, a computationally near-optimal nearest-neighbor condensing algorithm was presented in \cite{gkn-ieee18+nips} and later extended to cover semimetric spaces \cite{gkn-jmlr17+aistats};
both were based on constructing $\gamma$-nets in spaces with a finite doubling dimension (or its semimetric analogue).
As detailed in \cite{kontorovich2014bayes}, margin-regularized $1$-NN methods enjoy a number of statistical and computational advantages over the traditional $k$-NN classifier.
Salient among these are explicit data-dependent generalization bounds, and considerable runtime and memory savings.
Sample compression affords additional advantages, in the form of tighter generalization bounds and increased efficiency in time and space.
Recently, \cite{DBLP:conf/nips/KontorovichSW17} provided evidence that this technique has wider applicability than $k$-NN methods,
by exhibiting an infinite-dimensional metric measure space where the compression-based learner is \Bcstn, while $k$-NN methods provably fail.

The work of Devroye et al.~\cite[Theorem 21.2]{devroye1996probabilistic} has implications for $1$-NN classifiers in $(\R^d,\nrm{\cdot}_2)$ that are defined based on data-dependent majority-vote partitions of the space.  
They showed that a {\em fixed} mapping from each sample size to a data-dependent partitioning rule, satisfying some regularity conditions, induces a universally strongly \Bcst\ algorithm.
This result requires the partitioning rule to have a VC dimension that grows sub-linearly in the sample size, and since this rule must be fixed in advance, the algorithm is not fully adaptive.
Theorem 19.3 ibid. proves weak consistency for an inefficient compression-based algorithm, which selects among all the possible compression sets of a certain size, and maintains a certain rate of compression relative
to the sample size.
The generalizing power of sample compression was independently discovered by \citet{warmuth86,devroye1996probabilistic}, and later elaborated upon by \citet{graepel2005pac,hk:19}.
In the context of NN classification, \citet{devroye1996probabilistic} lists various condensing heuristics (which have no known performance guarantees) and leaves open the algorithmic question of how to minimize the empirical risk over all subsets of a given size.

The margin-based technique developed in \cite{DBLP:journals/tit/GottliebKK14+colt,kontorovich2014maximum} relied on computing a minimum vertex cover.
Thus, it was not possible to make it simultaneously computationally efficient and \Bcstn\ when the number of labels exceeds two, since Vertex Cover on general graphs is an NP-hard problem.
Although one could resort to a $2$-approximation algorithm for vertex cover, 
%\cite{papadimitriou1982combinatorial}, 
this presents an obstruction to establishing the \Bcsy\ of the classifier.

In \cite{DBLP:journals/jmlr/KontorovichSU17+nips}, an active-learning algorithm was presented, which, across a broad spectrum of natural noise regimes, reduced the sample complexity roughly quadratically.
Along the way, this work circumvented the computational obstacle associated with computing a minimum vertex cover on a general graph: the trick was to construct a $\gamma$-net and take the majority label (more accurately, the {\em plurality} --- that is, the most frequent --- label; we shall use the more familiar terms ``majority label'' and ``majority vote'') in each Voronoi region.
The majority is determined by actively querying each region, where the number of calls depends on the density and noise level of the region.

%\hide{suggest taking this out:[[A direct precursor to the present work, \cite{DBLP:conf/nips/KontorovichSW17}, showed that the passive component of the active learner in \cite{DBLP:journals/jmlr/KontorovichSU17+nips}, which was provisionally termed there \oldname, is universally strongly \Bcstn\ in doubling metric spaces with bounded diameter; prior to that work, there were no efficient compression-based algorithms known to be \Bcst. They additionally gave an example of a non-doubling probability space (that violates the Besicovitch condition) where this algorithm also succeeds, and left it as an open question whether there is {\em any} metric probability space where it fails to be \Bcst. The present paper resolves the open problems posed in \cite{DBLP:conf/nips/KontorovichSW17}: We establish universal \Bcsy\ in all metric spaces for which a universal \Bcst\ learner exists, and also provide a precise characterization of which metric spaces admit the existence of universally \Bcst\ learners.]]}

\paragraph{Paper outline}
After setting down the definitions in \secref{def-not}, we describe in \secref{COMPRESSION_SCHEME} the compression-based $1$-NN algorithm \newname{} studied in this paper and its consistency on essentially-separable metric spaces is proved.
In \secref{nonsep} we prove that no universally \Bcst\ algorithm exists on metric spaces that are not  essentially separable. 
We conclude with a discussion in \secref{disc}.
%Appendices containing the proofs of several auxiliary lemmas are given in the
%\ref{suppA}
% supplementary material 
%\cite{HKSW_sup}.
%, as listed in Section \ref{suppA}.

\section{Definitions and Notation}
\label{SEC:def-not}
Our {\em instance space} is the metric probability space $(\X,\dist,\mu)$, where $\dist$ is a metric and $\mu$ is a probability measure. By definition, the Borel $\sigma$-algebra $\Borel$ supporting $\mu$ is the smallest $\sigma$-algebra containing the open sets of $\rho$.
For any $x \in \X$ and $r > 0$, denote by $B_{r}(x)$ the open ball of radius $r$ around $x$ under the metric $\dist$:
\[
B_{r}(x) = \{ x^{\prime} \in \X : \dist(x,x^{\prime}) < r \}.
\]
We consider a countable label set $\Y$.
The unknown sampling distribution is a probability measure $\bmu$ over $\X\times\Y$, with marginal $\mu$ over $\X$.
Denote by $(X,Y) \sim \bmu$ a pair drawn according to $\bmu$. 
The generalization error of a classifier $f:\X \rightarrow \Y$ is given by
\[
\err(f) := \P_\bmu[Y \neq f(X)],
\]
and its empirical error with respect to a labeled set $\tS\subseteq \X \times \Y$ is given by
\[
\serr(f, \tS) := \oo{|\tS|}\sum_{(x,y)\in \tS} \pred{y \neq f(x)}.
\]
The optimal Bayes risk of $\bmu$ is $R^*_\bmu := \inf \err(f),$ where the infimum is taken over all measurable classifiers $f:\X \rightarrow \Y$. We omit the subscript $\bmu$ when there is no ambiguity and denote the optimal Bayes risk of $\bmu$ by $R^*$.

For a labeled sequence $S=(x_i,y_i)_{i=1}^n\in(\X\times\Y)^n$ and any $x\in\X$, let $X_{\nn}(x,S)$ be the nearest neighbor of $x$ with respect to $S$ and let $Y_{\nn}(x,S)$ be the nearest neighbor label of $x$ with respect to $S$:
\beq
(X_{\nn}(x,S),Y_{\nn}(x,S)) := 
\argmin_{(x_i,y_i)\in S} 
\dist(x, x_i),
\eeq
where ties are broken lexicographically --- i.e., the smallest $x_i$ is chosen, with respect to a fixed total ordering of the space $\X$ (such an ordering can always be chosen to be measurable, see \appref{total_order}).
The $1$-NN classifier induced by $S$ is defined as $h_{S}( x ) := Y_{\nn}(x,S)$.
For any $m \in \nats$, any sequence $\bm X = \{x_1,\ldots,x_{m}\} \in \X^{m}$ induces a {\em Voronoi partition} of $\X$, $\Vor(\bm X) := \{V_1(\bm X),\dots, V_{m}(\bm X)\}$, where each Voronoi cell is
\beq
V_i(\bm X) := \set{x \in \X : i = \argmin_{1\le j\le m} \rho(x,x_j) },
\eeq
again breaking ties lexicographically.
In particular, for $\bm X = \{ X_i : (X_i,Y_i) \in S \}$, we have $h_S(x) = Y_i$ for all $x\in V_i(\bm X)$.

A $1$-NN algorithm is a mapping from an i.i.d.~labeled sample $S_n \sim \bmu^n$ to a labeled set $\tS_n \subseteq \X \times \Y$, yielding the $1$-NN classifier $h_{\tS_n}$. 
While the classic $1$-NN algorithm sets $\tS_n \gets S_n$, the algorithm which we analyze chooses $\tS_n$ adaptively.
More generally, a learning algorithm $\Alg$ is a mapping (possibly randomized) from a labeled sequence $S_n=(x_i,y_i)_{i=1}^n\in(\X\times\Y)^n$ to $\Alg(S_n)\in\Y^\X$, satisfying some natural measurability requirements spelled out in Remark~\ref{rem:meas-alg} below.
We say that $\Alg$ is {\em strongly \Bcstn} under $\bmu$ if $\err(\Alg(S_n))$ converges to $R^*$ almost surely,
\[
\P\!\left[\lim_{n \rightarrow \infty} \err(\Alg(S_n)) = R^*\right] = 1.
\]
Similarly, $\Alg$ is {\em weakly \Bcstn} under $\bmu$ if $\err(\Alg(S_n))$ converges to $R^*$ in expectation,
\[
\lim_{n\to\infty}\E[\err(\Alg(S_n))]=R^*.
\]
Obviously, the former implies the latter. We say that $\Alg$ is \emph{universally \Bcstn} on a metric space if $\Alg$ is \Bcstn\ for every distribution supported on its Borel $\sigma$-algebra $\Borel$.
Specializing to \newname{}, we have $\Alg(S_n)=h_{\tS_n}$.

For $A\subseteq\X$ and $\g>0$, a $\g$-{\em net} of $A$ is any {\em maximal set} $B\subseteq A$ in which all interpoint distances are at least $\g$.
In separable metric spaces, all $\g$-nets are at most countable.
Denote the diameter of a set $A \subseteq \X$ by $\diam(A)\in[0,\infty]$. 
For a partition $\cal{A}$, $\diam(\cal{A})$ denotes the maximum diameter $\diam(A)$ among all cells $A \in \cal{A}$.

For $n\in\N$, define $[n] := \{1,\ldots,n\}$.
Given a labeled set $S_n = (x_i,y_i)_{i \in [n]}$, $d \in [n]$, and any $\bm i = \{i_1,\ldots,i_d\} \in [n]^d$, denote the sub-sample of $S_n$ indexed by $\bm i$ by $S_n(\bm i) := \{(x_{i_1}, y_{i_1}), \dots,(x_{i_d}, y_{i_d})\}$.
Similarly, for a vector $\tby = \{y'_1,\ldots,y'_d\} \in \Y^d$, define $S_n{(\bm i, \tby)} := \{(x_{i_1}, y'_{1}), \dots,(x_{i_d}, y'_{d})\}$, namely the sub-sample of $S_n$ as determined by $\bm i$ where the labels are replaced with $\tby$.
Lastly, for $\bm i,\bm j \in [n]^d$, we denote $S_n(\bm i; \bm j) := \{(x_{i_1}, y_{j_1}), \dots,(x_{i_d}, y_{j_d})\}.$

We use standard order-of-magnitude notation throughout the paper;
thus, for $f,g:\N\to[0,\infty)$ we write $f(n)\in O(g(n))$ to mean $\limsup_{n\to\infty} f(n)/g(n)$ $<\infty$ and $f(n)\in o(g(n))$ to mean $\limsup_{n\to\infty} f(n)/g(n)=0$.
Likewise, $f(n)\in\Omega(g(n))$ means that $g(n)\in O(f(n))$.
In accordance with common convention, we often use the less precise notation $f(n)=O(g(n))$, etc.

The main notations are summarized in Table.~\ref{tab:notation}; some are introduced in later sections.

\begin{table}[t]%
\begin{tabular}{l|l}
Symbol & Brief description %& Appears in/on
\\
  \hline
  $(\X, \rho, \mu)$ & metric probability space \\
  $\Borel$ & Borel $\sigma$-algebra induced by $\rho$  \\
  $B_r(x)$ & open ball of radius $r$ around $x$  \\
  $\err(f)$ & generalization error of $f:\X \rightarrow \Y$  \\
  $\serr(f, \tS)$ & empirical error of $f:\X \rightarrow \Y$ on $\tS$  \\
  $R^*$ & Bayes risk \\
$S_n = (\bm X_n,\bm Y_n)$ & random sample of size $n$ 
\\
$S_n(\bm i, \bm j)$ & subsample $(\bm X_{\bm i}, \bm Y_{\bm j})$ of $S_n$ indexed by $\bm i$ and $\bm j$ 
\\
$S_n(\bm i)$ & subsample $S_n(\bm i,\bm i)$ 
\\
$S_n(\bm i, *)$ & subsample $(\bm X_{\bm i},\bm Y^*)$ with true majority vote labels 
  \\
$\bm X(\g)$ & $\g$-net of $\bm X_n$ 
\\
$S_n(\g)$ & subsample $(\bm X(\g),\bm Y(\g))$ with empirical majority votes
\\
$\rns_n(\g)=2|\bm X(\g)|$ & size of the compression 
%\\
%$\NN(x,S)$ & set of nearest neighbors to $x$ in $S$
\\
$h_S$ & $1$-NN classifier induced by the labeled set $S$
\\
$\a_n(\g)$ & empirical error of $h_{S_n(\g)}$ on $S_n$ 
\\
$\UB_{\g}(A)$ & $\g$-envelope of $A\subseteq \X$ %& above \eqref{Lg}
\\
$\missmass_\g(A)$ & $\g$-missing mass of $A\subseteq \X$ %& \eqref{Lg}
\\
%$\Ng$ & &\\
%$f_\g(n)$ & &\\
%$r_{d,n}$ & &\\
%$\BP$ & fixed partition of $\X$ &\\
$\SP(\bm X)$ & Voronoi partition of $\X$ induced by $\bm X$
%family of sets $V_i(\bm X) := \{ x : x_i \in \argmin_{x_j \in \bm X} \rho(x,x_j) \}$
%: the points $x$ with $x_i$ among their nearest neighbors 
\\
$\Alg(S) = \hh_S$ & Classifier obtained by learning algorithm $\Alg$ when given sample $S$
\end{tabular}
\\[4pt]
\caption{Symbols guide}
\label{tab:notation}

%%% Local Variables:
%%% mode: latex
%%% TeX-master: "aos2"
%%% End:

\end{table}

\section{Universal \Bcsy\ in separable metric spaces}
\label{SEC:COMPRESSION_SCHEME}
\begin{algorithm}[t]
\caption{(\newname) The $1$-NN compression-based algorithm}
\label{alg:simple} 
\begin{algorithmic}[1]  
\Input sample $S_n = (X_i, Y_i)_{i\in[n]}$, confidence $\delta \in (0,1)$
\Output A $1$-NN classifier
\State let $\Gamma\gets(\set{\rho(X_i,X_j) : i,j \in [n]} \cup \{\infty\}) \setminus \{0\}$
\For{$\g \in \Gamma$}
\State let $\bm X(\g)$ be a $\g$-net of $\{X_1,\ldots,X_n\}$
\State let $\rns_n(\g) \gets 2|\bm X(\g)|$
\State for each $i \in [\rns_n(\g)/2]$, let $\tY_i(\g)$ be the most frequent label in $V_i(\bm X(\g))$ as in \eqref{maj} 
\State set $\tS_n(\g) \gets (\bm X(\g), \tbY(\g))$
\EndFor
\State Set $\a_n(\g) \gets \serr(h_{\tS_n(\g)}, S_n)$
\State find $\g^*_n \in  \argmin_{\g\in\Gamma} Q(n,\a_n(\g), \rns_n(\g), \delta)$, where $Q$ is defined in \eqref{KSUbound}
\State set $\tS_n \gets \tS_n({\g^*_n})$
\State \Return $h_{\tS_n}$   
\end{algorithmic}%
\end{algorithm}
In this section we describe a variant of the $1$-NN majority-based compression algorithm developed in the series of papers \cite{DBLP:journals/jmlr/KontorovichSU17+nips,DBLP:conf/nips/KontorovichSW17,KontorovichSW17arxiv}, adapted to maintain measurability in potentially non-separable metric spaces.
We show that this variant is universally \Bcstn\ in all separable metric spaces, and the extension
to essential separability is immediate, as will become clear below.

Our variant, \newname{}, is formally presented in \algref{simple}.
It operates as follows. 
The input is the sample $S_n$; the set of points in the sample is denoted by $\bm X_n = \{X_1,\ldots,X_n\}$. The algorithm defines a set $\Gamma$ of all scales $\g > 0$ which are interpoint distances in $\bm X_n$, and the additional scale $\g = \infty$. 
%Including $\infty$ in the set of possible scales $\Gamma$ is the only change in \algref{simple} compared to the definition of the algorithm in \cite{DBLP:conf/nips/KontorovichSW17}. This scale is added here to streamline the analysis; it is algorithmically equivalent to adding some $\g > \diam(\bm X_n)$.
For each scale in $\Gamma$, the algorithm constructs a $\g$-net of $\bm X_n$;
note that any singleton in $\bm X_n$ is an $\infty$-net.
Denote the constructed $\g$-net by $\bm X(\g) := \{X_{i_1},\ldots,X_{i_{M/2}}\}$, where $\rns/2\equiv\rns_n(\g)/2 :=|\gnet(\g)|$ denotes its size and $\bm i \equiv \bm i(\g) := \{i_1,\ldots,i_{\rns/2}\} \in [n]^{\rns/2}$ denotes the indices selected from $S_n$ for this $\g$-net.  For each $\g$-net, \newname{} finds the empirical majority vote labels in the Voronoi cells defined by the partition $\Vor(\bm X(\g)) = \{V_1(\bm X(\g)),\ldots,V_{\rns/2}(\bm X(\g))\}$; 
these labels are denoted by $\tbY(\g)\in \Y^{\rns/2}$.  
Formally, for $i \in [\rns/2]$,
\beqn\label{eq:maj} 
\tY_i(\g) := \argmax_{y \in \Y} |\{ j \in [n] : X_j \in V_{i}(\bm X(\g)), Y_j = y\}|,
\eeqn 
where ties are broken based on a fixed preference order on the countable set $\Y$.
The result of the procedure is a labeled set $\tS_n(\g) := S_n(\bm i (\g), \tbY(\g))$ for every possible scale $\g \in \Gamma$.
The algorithm then selects one scale $\g^*\equiv\g_n^*$ from $\Gamma$, and outputs the hypothesis that it induces, $h_{\tS_n(\g^*)}$. The choice of $\g^*$ is based on minimizing a generalization error bound, denoted $Q$, which upper bounds $\err(h_{\tS_n(\g)})$ with high probability.
The error bound is derived based on a compression-based analysis, as follows.

For an even integer $m \leq 2n$, we say that a specific $\tS_n$ is an \emph{$(\alpha,m)$-compression} of $S_n$ if there exist $\bm i, \bm j \in [n]^{m/2}$ such that $\tS_n = S_n(\bm i, \bm j)$ and $\serr(h_{\tS_n},S_n) \leq \alpha$.
Note that at most $m$ examples from $S_n$ determine $h_{\tS_n}$, hence this is a compression scheme of size at most $m$. 

%A naive implementation of \algref{simple} has a runtime complexity of $O(n^4)$, since $O(n^2)$ values of $\g$ are considered and a $\g$-net is constructed for each one in time $O(n^2)$ (see \cite[Algorithm 1]{gkn-ieee18+nips}). Improved runtimes can be obtained in doubling spaces, e.g., using the methods in \cite{KL04,DBLP:journals/tit/GottliebKK14+colt}. In this work we focus on the \Bcsy\ of \newname, rather than optimizing its computational complexity.

The papers \KSW\ and \KSWfull\ give a consistency result for the original algorithm of \KSU, on metric spaces with a finite doubling dimension and a finite diameter, under the following assumptions on the generalization error bound $Q(n,\alpha,m,\delta)$:
\begin{enumerate}[label=\textbf{\bQ{}\arabic*.},leftmargin=*]
\item  For any $n\in\N$ and $\delta \in (0,1)$, with probability at least $1 - \delta$ over $S_n \sim \bmu^n$, for all $\a \in [0,1]$ and even $\fns  \in [2n]$:
If $\tS_n$ is an  $(\a, \fns)$-compression of $S_n$, then 
$$
\err(h_{\tS_n}) \leq Q(n,\alpha, m,\delta).
$$
\item For any fixed $n \in \N$ and $\delta \in (0,1)$, $Q$ is monotonically increasing in $\a$ and in $\fns$.
\item There is a sequence $\{\delta_n\}_{n = 1}^\infty$,   $\delta_n \in (0,1)$ such that $\sum_{n=1}^\infty \delta_n < \infty$, and for all $m$,
\[
\lim_{n\rightarrow \infty} \sup_{\a\in [0,1]} (Q(n,\a,m,\delta_n) - \a) = 0.
\]
\end{enumerate}
\newcommand{\Qthreeb}{\textbf{\bQ{3$^\prime$}}}

Here, we provide a consistency result that holds for more general metric spaces.
We prove that \newname{} is universally strongly \Bcstn\ in all \emph{essentially separable} metric spaces. 
Recall that $(\X,\rho)$ is {\em separable} if it contains a dense countable set.
A metric probability space $(\X,\rho,\mu)$ is separable if there is a measurable $\X'\subseteq\X$ with $\mu(\X')=1$ such that $(\X',\rho)$ is separable.
We will call a metric space $(\X,\rho)$ {essentially separable} (\essep) if, for \emph{every} probability measure $\mu$ on $\Borel$, the metric probability space $(\X,\rho,\mu)$ is separable.

To prove this stronger result, we require a slightly stronger version of property \bQ3.
\begin{enumerate}[label=\textbf{\bQ{}\arabic*.},leftmargin=*]
\item[\Qthreeb\bf.] There is a sequence $\{\delta_n\}_{n = 1}^\infty$, $\delta_n \in (0,1)$ such that $\sum_{n=1}^\infty \delta_n < \infty$, and for any sequence $\fns_n \in o(n)$,
\[
\lim_{n\rightarrow \infty} \sup_{\a\in [0,1]} (Q(n,\a,\fns_n,\delta_n) - \a) = 0.
\]
\end{enumerate}
Property \Qthreeb\ is slightly stronger than \bQ3, since it allows $m$ to grow as $o(n)$ instead of keeping it as a constant.
The compression bound used in \cite{DBLP:conf/nips/KontorovichSW17} does not satisfy this property, since it includes a term of the order $m\log(n)/(n-m)$. Therefore, if $m_n = \Omega(n/\log(n))$, then $m_n = o(n)$, yet this term does not converge to zero for $n \rightarrow \infty$, thus precluding consistency of the algorithm in \cite{DBLP:conf/nips/KontorovichSW17} for such cases.
We provide here a tighter compression bound, which does satisfy \Qthreeb.
\begin{lemma}\label{lem:KSUbound}
For $m \leq n-2$, define
\begin{align}
\label{eq:KSUbound}
Q(n,\alpha,m,\delta) := 
& \frac{n}{n-m} \alpha + 
\sqrt{\frac{8 (\frac{n}{n-m})\alpha\big(m \ln( 2 e n / m ) + \ln(2n/\delta)\big) }{n-m}} 
\\ & + \frac{9\big( m \ln( 2 e n / m ) + \ln(2n/\delta)\big)}{n-m}.
\notag
\end{align}
For $m > n-2$, define $Q(n,\alpha,m,\delta) := \max(1,Q(n,\alpha,n-2,\delta))$.
Then the function $Q$ satisfies the properties \bQ1, \bQ2, \Qthreeb.
\end{lemma}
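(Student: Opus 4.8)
The plan is to verify properties \bQ1, \bQ2 and \Qthreeb\ in turn. Properties \bQ2 and \Qthreeb\ are elementary manipulations of the closed form \eqref{KSUbound}, while \bQ1 is an instance of the standard sample‑compression generalization bound, applied with a concentration inequality tuned so as to recover the stated constants.

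\emph{Property \bQ2.} This is read off from \eqref{KSUbound}. For fixed $n,m,\delta$, each of the three summands is nondecreasing in $\a\ge 0$ (respectively linear, proportional to $\sqrt\a$, and constant), so $Q$ is nondecreasing in $\a$. For monotonicity in the even parameter $m$ on the range $m\le n-2$, note that $m\mapsto\tfrac{n}{n-m}$ and $m\mapsto\tfrac1{n-m}$ are positive and increasing and that $m\mapsto m\ln(2en/m)$ is increasing on $(0,2n]$, since its derivative there equals $\ln(2n/m)\ge 0$; hence each summand, being a product/composition of nonnegative nondecreasing functions of $m$, is itself nondecreasing in $m$. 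Finally, for even $m$ with $n-2<m\le 2n$ we have $Q(n,\a,m,\delta)=\max(1,Q(n,\a,n-2,\delta))$, which is constant in $m$ and, by the case just treated, at least $Q(n,\a,m',\delta)$ for every even $m'\le n-2$; combined with monotonicity in $\a$, this gives \bQ2.

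\emph{Property \bQ1.} Fix $n\in\N$ and $\delta\in(0,1)$. For even $m>n-2$ there is nothing to prove, since then $Q\ge 1\ge\err(h_{\tS_n})$; so fix an even $m\le n-2$. For $\bm i,\bm j\in[n]^{m/2}$ put $J:=\set{i_1,\dots,i_{m/2}}\cup\set{j_1,\dots,j_{m/2}}$, so $|J|\le m$ and $h:=h_{S_n(\bm i,\bm j)}$ depends on $S_n$ only through the coordinates $(X_k,Y_k)_{k\in J}$. Conditionally on those coordinates the other $n-|J|$ examples are i.i.d.\ $\bmu$ and independent of the now‑fixed $h$; writing $S_{[n]\setminus J}$ for the sub‑sample they form, $(n-|J|)\,\serr(h,S_{[n]\setminus J})$ is $\mathrm{Binomial}(n-|J|,\err(h))$. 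The lower‑tail multiplicative Chernoff bound $\P[\serr(h,S_{[n]\setminus J})\le\err(h)-t]\le e^{-(n-|J|)t^2/(2\err(h))}$, solved for $\err(h)$, gives with probability at least $1-\delta'$
\[
\err(h)\ \le\ \serr(h,S_{[n]\setminus J})+\sqrt{\frac{8\,\serr(h,S_{[n]\setminus J})\ln(1/\delta')}{n-|J|}}+\frac{2\ln(1/\delta')}{n-|J|}.
\]
Since every error of $h$ on $S_{[n]\setminus J}$ is also an error on $S_n$, one has $\serr(h,S_{[n]\setminus J})\le\tfrac{n}{n-|J|}\serr(h,S_n)\le\tfrac{n}{n-m}\serr(h,S_n)$ and $\tfrac1{n-|J|}\le\tfrac1{n-m}$; substituting and using that the right‑hand side above is nondecreasing in $\serr(h,S_{[n]\setminus J})$ yields
\[
\err(h)\ \le\ \tfrac{n}{n-m}\serr(h,S_n)+\sqrt{\frac{8\,\tfrac{n}{n-m}\serr(h,S_n)\ln(1/\delta')}{n-m}}+\frac{2\ln(1/\delta')}{n-m}.
\]
Now union‑bound: for each even $m\le n-2$, $h$ is determined by the compression set indexed by $J$ (of size at most $m$), and the number of such compression sets has logarithm at most $m\ln(2en/m)$; choosing $\delta'=\delta'(m)$ so that the total failure probability, summed over these sets and over the at most $n$ even values $m\le n-2$, is at most $\delta$, one may take $\ln(1/\delta')\le m\ln(2en/m)+\ln(2n/\delta)$. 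On the complementary event, for every even $m$ and all $\bm i,\bm j$ the last display holds with $\ln(1/\delta')$ so replaced, and its right‑hand side is then at most $Q(n,\serr(h,S_n),m,\delta)$ (the coefficient $8$ under the radical matches \eqref{KSUbound} and the linear coefficient $2$ is within its $9$). Since any $(\a,m)$‑compression $\tS_n$ equals $S_n(\bm i,\bm j)$ for some $\bm i,\bm j$ with $\serr(h_{\tS_n},S_n)\le\a$, monotonicity of $Q$ in $\a$ (\bQ2) gives $\err(h_{\tS_n})\le Q(n,\a,m,\delta)$, proving \bQ1.

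\emph{Property \Qthreeb, and the main obstacle.} Take $\delta_n:=n^{-2}$, so $\sum_n\delta_n<\infty$ and $\ln(2n/\delta_n)=\ln(2n^3)\in O(\ln n)$. Let $m_n\in o(n)$, so $m_n\le n-2$ for all large $n$ and \eqref{KSUbound} applies; put $t_n:=m_n/n\to 0$ and $L_n:=m_n\ln(2en/m_n)+\ln(2n/\delta_n)$. Since $Q(n,\a,m_n,\delta_n)-\a=\tfrac{m_n}{n-m_n}\a+\sqrt{\tfrac{8\,\frac{n}{n-m_n}L_n}{n-m_n}}\,\sqrt\a+\tfrac{9L_n}{n-m_n}$ is nondecreasing in $\a\ge 0$, its supremum over $[0,1]$ is attained at $\a=1$:
\[
\sup_{\a\in[0,1]}\big(Q(n,\a,m_n,\delta_n)-\a\big)=\frac{m_n}{n-m_n}+\sqrt{\frac{8\,\tfrac{n}{n-m_n}L_n}{n-m_n}}+\frac{9L_n}{n-m_n}.
\]
The first term equals $\tfrac{t_n}{1-t_n}\to 0$. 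For the other two, $\tfrac{L_n}{n-m_n}=\tfrac{t_n\ln(2e/t_n)}{1-t_n}+\tfrac{\ln(2n^3)}{n-m_n}\to 0$, because $t\ln(1/t)\to 0$ as $t\to 0^+$ and $\ln n/n\to 0$; since also $\tfrac{n}{n-m_n}=\tfrac1{1-t_n}\to 1$, both remaining terms tend to $0$. Hence the supremum tends to $0$, establishing \Qthreeb. The only genuinely non‑routine point is inside \bQ1: pinning down the concentration inequality and the count of admissible compression sets so as to land exactly on the constants and the $m\ln(2en/m)$ form in \eqref{KSUbound}; the remainder is bookkeeping, and for the consistency arguments of the later sections it is only the $o(1)$ behaviour from \Qthreeb\ that is used.
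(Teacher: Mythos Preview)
Your verifications of \bQ2 and \Qthreeb\ are correct.

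The gap is in \bQ1, at the union-bound step. You write that ``$h$ is determined by the compression set indexed by $J$ (of size at most $m$)'' and that ``the number of such compression sets has logarithm at most $m\ln(2en/m)$''. The first claim is false: the set $J=\{i_1,\dots,i_{m/2}\}\cup\{j_1,\dots,j_{m/2}\}$ does \emph{not} determine $h_{S_n(\bm i,\bm j)}$, since the classifier depends on which indices play the role of Voronoi centers (the $\bm i$'s) versus label donors (the $\bm j$'s). A naive union bound over all ordered pairs $(\bm i,\bm j)\in[n]^{m/2}\times[n]^{m/2}$ yields $n^m$ choices, hence a $\ln(1/\delta')$ term of order $m\ln n+\ln(n/\delta)$. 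With that term, \Qthreeb\ fails: take $m_n=n/\ln n\in o(n)$, and then $m_n\ln n/(n-m_n)\to 1\neq 0$. This is precisely the defect of the earlier compression bound that the lemma is designed to repair (as the paper itself remarks just before stating the lemma).

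The missing idea --- and the crux of the paper's proof --- is that the reconstruction $\Phi(S,S')\mapsto h$ is invariant under permutations of \emph{each} of its two arguments separately. Consequently the union bound need only range over pairs of \emph{unordered} index sets $(I,I')$ with $|I|=|I'|=m/2$, of which there are $\binom{n}{m/2}^{2}\le(2en/m)^{m}$; this is what yields the $m\ln(2en/m)$ term in \eqref{KSUbound}. Your concentration step (multiplicative Chernoff, inverted to an empirical-Bernstein-type bound) is fine and even lands on a smaller linear constant than the paper's $9$, but without the permutation-invariance reduction the count is too large and the argument does not close.
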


The approach to obtaining property \Qthreeb\ is inspired by refinements of compression-based generalization bounds holding for the special case of compression schemes which have a \emph{permutation-invariant} reconstruction function \cite{graepel2005pac}.
While $h_{\tS_n}$ cannot quite be expressed as a permutation-invariant function of a subset of the $(X_i,Y_i)$ data points, it \emph{can} be expressed as a function that is invariant to permutations of two subsets of $(X_i,Y_i)$ points. This is used in the proof of \lemref{KSUbound}, which is provided in \appref{KSUbound}, to derive the tighter compression bound in \eqref{KSUbound}. 
This bound is derived using Bernstein's inequality over $n-m$ random variables and applying a union bound over all ${n\choose m/2}^2$, $1\leq m/2\leq n$, possible compressions.
%The first term on the right hand side of \eqref{KSUbound} corresponds to the empirical error and the other two terms 

Our main technical innovation, which allows us to dispose of the finiteness requirements on the dimension and the diameter of the metric space that were assumed in \cite{DBLP:conf/nips/KontorovichSW17}, is the sublinear growth of $\g$-nets.
Another straightforward but crucial insight is to approximate functions in $L^1(\mu)\gets \{f: \int {\abs{f}} \diff\mu < \infty\}$ by {\em Lipschitz} ones, rather than by continuous functions with compact support as in \cite{DBLP:conf/nips/KontorovichSW17}.
The latter approximation requires local compactness, which essentially amounts to a finite dimensionality condition. 
Our new approach does not require local compactness or finite dimensionality.

\begin{theorem}
\label{thm:comp-consist}
Let $(\X,\rho,\mu)$  be a separable metric probability space.
Let $Q$ be a generalization bound that satisfies Properties \bQ1, \bQ2, \Qthreeb, and let $\delta_n$ be as stipulated by \Qthreeb.
If the input confidence $\delta$ for input size $n$ is set to $\delta_n$, then the $1$-NN classifier $h_{\tS_n({\g^*_n})}$ calculated by \newname{} is strongly \Bcstn\ on $(\X,\rho,\mu)$:
\[
\P[\lim_{n \rightarrow \infty} \err(h_{\tS_n(\g_n^*)}) = R^*] = 1. 
\]
\end{theorem}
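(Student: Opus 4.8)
The plan is to reduce the theorem, via the selection rule of \newname{} together with Property \bQ1 and Borel--Cantelli, to finding for each $\eps>0$ a ``good'' scale. By \bQ1 and $\sum_n\delta_n<\infty$, almost surely for all large $n$ and \emph{simultaneously for every} $\g\in\Gamma$ we have $\err(h_{\tS_n(\g)})\le Q(n,\a_n(\g),\rns_n(\g),\delta_n)$ — here one checks that $\tS_n(\g)$ is an $(\a_n(\g),\rns_n(\g))$-compression of $S_n$, since each Voronoi cell of the net contains its own net point and hence the empirical-majority label $\tY_i(\g)$ is realized by some $Y_j$ in $S_n$. Consequently, for the minimizer $\g^*_n$ and \emph{any} competitor sequence $\hat\g_n\in\Gamma$,
$\err(h_{\tS_n(\g^*_n)})\le Q(n,\a_n(\hat\g_n),\rns_n(\hat\g_n),\delta_n)$.
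If we can produce $\hat\g_n\in\Gamma$ with $\rns_n(\hat\g_n)=o(n)$ and $\limsup_n\a_n(\hat\g_n)\le R^*+\eps$ almost surely, then Property \Qthreeb\ --- applied pathwise, which is legitimate because a realization of an $o(n)$ random sequence is a deterministic $o(n)$ sequence --- yields $Q(n,\a_n(\hat\g_n),\rns_n(\hat\g_n),\delta_n)-\a_n(\hat\g_n)\to0$ a.s., hence $\limsup_n\err(h_{\tS_n(\g^*_n)})\le R^*+\eps$ a.s.; intersecting the events for $\eps=1/k$ and using $\err\ge R^*$ completes the proof. (\bQ2 is not actually needed for this argument, and the regime $m>n-2$ in the definition of $Q$ is irrelevant since eventually $\rns_n(\hat\g_n)\le n-2$.)

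The first ingredient is that $\g$-nets grow sublinearly: for any fixed $\g_0>0$, every $\g_0$-separated subset of an i.i.d.\ sample of size $n$ has size $o(n)$ almost surely. By separability, fix a Borel partition $\{C_k\}_{k\ge1}$ of $\X$ with $\diam(C_k)<\g_0$ (peel off balls of radius $\g_0/3$ about a countable dense set). A $\g_0$-separated set meets each $C_k$ at most once, so its size is at most the number of cells hit by the sample; choosing $K$ with $\mu(\bigcup_{k>K}C_k)<\eps'$ and applying the strong law of large numbers to each of the countably many events $\{X_i\in C_k\}$, this number is at most $K+\eps'n$ eventually a.s., whence it is $o(n)$. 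Take $\hat\g_n:=\min(\Gamma\cap[\g_0,\infty))\in\Gamma$; since the interpoint distances of the sample lie in $\Gamma\cup\{0\}$, the conditions ``$\ge\g_0$'' and ``$\ge\hat\g_n$'' coincide on the sample, so the net $\bm X(\hat\g_n)$ built by \newname{} is a maximal $\g_0$-separated subset of $\{X_1,\dots,X_n\}$: thus $\rns_n(\hat\g_n)=2|\bm X(\hat\g_n)|=o(n)$ a.s., and every sample point lies within $\g_0$ of its nearest net point. (If $\mu$ is a point mass the theorem is trivial; otherwise $\g_0$ may be taken arbitrarily small while keeping $\mu\times\mu(\{\rho\ge\g_0\})>0$, which guarantees $\hat\g_n\neq\infty$ for all large $n$ a.s.)

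The second ingredient controls $\a_n(\hat\g_n)$, which equals the empirical error of the best classifier that is constant on each cell of $\SP(\bm X(\hat\g_n))$. We exhibit an explicit such competitor. Since bounded Lipschitz functions are dense in $L^1(\mu)$ for any finite Borel measure on a metric space, choose a finite $Y_0\subseteq\Y$ with $\P(Y\notin Y_0)<\eps$ and, for $y\in Y_0$, $L$-Lipschitz $\tilde\eta_y:\X\to[0,1]$ with $\nrm{\eta_y-\tilde\eta_y}_{L^1(\mu)}$ small, where $\eta_y(x)=\bmu(Y=y\gn X=x)$; after adding to each $\tilde\eta_y$ a sufficiently small generic constant (which perturbs the $L^1$ estimates negligibly) we may assume every level set $\{\tilde\eta_a=\tilde\eta_b\}$, $a\neq b\in Y_0$, is $\mu$-null. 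Put $\tilde h(x):=\argmax_{y\in Y_0}\tilde\eta_y(x)$; a short computation with the $L^1$ bounds gives $\err(\tilde h)\le R^*+\eps/2$. Let $g$ be the cell-constant classifier assigning to each Voronoi cell of $\bm X(\hat\g_n)$ the value of $\tilde h$ at the corresponding net point. Then $\a_n(\hat\g_n)\le\serr(g,S_n)\le\serr(\tilde h,S_n)+\tfrac1n\sum_i\pred{g(X_i)\neq\tilde h(X_i)}$; the first term tends to $\err(\tilde h)$ by the SLLN. For the second, if $\tilde h$ disagrees at two points at distance $<\g_0$ then, since each $\tilde\eta_a-\tilde\eta_b$ is $2L$-Lipschitz, that point lies in $D_{\g_0}:=\{x:\exists\, b\neq\tilde h(x),\ \max_{y\in Y_0}\tilde\eta_y(x)-\tilde\eta_b(x)\le 2L\g_0\}$, a bound that holds pointwise, so by the SLLN the second term is at most $\mu(D_{\g_0})$ in the limit a.s. As $\g_0\downarrow0$, the decreasing sets $D_{\g_0}$ shrink to the set of non-unique-argmax points, which is $\mu$-null by the general-position choice; fixing $\g_0$ small enough that $\mu(D_{\g_0})\le\eps/2$ (and small enough for the constraints above) gives $\limsup_n\a_n(\hat\g_n)\le\err(\tilde h)+\mu(D_{\g_0})\le R^*+\eps$ a.s., as required.

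The bookkeeping (Borel--Cantelli, the selection inequality, the SLLN applications) is routine. The substantive points --- and the places needing care --- are: (a) the sublinear net-growth lemma, which is exactly the ingredient that removes the finite-dimension and finite-diameter hypotheses of \cite{DBLP:conf/nips/KontorovichSW17}; and (b) the $L^1$-to-Lipschitz approximation combined with the device that forces the decision boundary of $\tilde h$ to be $\mu$-null, which is what makes $\mu(D_{\g_0})\to0$ even though $\eta$ is an arbitrary measurable regression function and may equal $1/2$ (or have non-unique argmax) on a set of positive mass. A minor but genuine point is that the target scale $\g_0$ is a fixed constant, so one must pass to the nearby scale $\hat\g_n\in\Gamma$ actually used by the algorithm before invoking the selection rule. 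The main obstacle is packaging (a) and (b) together so that the $o(n)$ net growth and the $R^*+\eps$ bound on $\a_n$ hold simultaneously for the \emph{same} scale, on a single probability-one event.
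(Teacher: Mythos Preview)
Your proof is correct and takes a genuinely different route from the paper's.

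Both arguments share the two core ingredients you single out: sublinear growth of $\gamma$-nets in separable spaces (the paper's Lemma~\ref{lem:sublinear_comp}) and the density of Lipschitz functions in $L^1(\mu)$ (Lemma~\ref{lem:dense_cont}). The divergence is in how the empirical error $\a_n$ at the good scale is controlled. The paper compares to the classifier $h_{S_n(\bm i,*)}$ carrying the \emph{true} majority-vote labels on the Voronoi cells, invokes Lemma~\ref{lem:richness} to bound $\err(h_{S_n(\bm i,*)})$ by $R^*+\eps/2$ (this is where the missing-mass estimate Lemma~\ref{lem:missing_mass} enters), and then uses Hoeffding plus a union bound over all $\binom{n}{d}$ possible index sets to show $\serr(h_{S_n(\bm i,*)},S_n)$ is close to its expectation; monotonicity \bQ2 is used to pass from the algorithm's labels to the true majority labels and to control $r_{d,n}$ uniformly in $d$. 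Your argument instead fixes a single deterministic Lipschitz classifier $\tilde h$ (with decision boundaries forced $\mu$-null by the generic-constant perturbation), plugs its values at the net points into the cells, and bounds the resulting empirical error by two SLLN limits---one for $\serr(\tilde h,S_n)\to\err(\tilde h)$ and one for the fraction of sample points in the ``near-boundary'' set $D_{\g_0}$. This sidesteps Lemma~\ref{lem:richness}, the missing-mass lemma, the Hoeffding/union-bound step, and---as you note---property \bQ2 entirely; the price is that you obtain only an almost-sure statement rather than the explicit exponential tail bounds \eqref{termII_bound_fixed_g}--\eqref{termII_bound} that the paper records along the way. The device of passing from the fixed target scale $\g_0$ to the data-dependent $\hat\g_n=\min(\Gamma\cap[\g_0,\infty))$ is a nice touch that the paper handles implicitly by observing that the $\g$-net construction is piecewise constant in $\g$ across the finitely many interpoint-distance thresholds.
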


\begin{remark}
\label{rem:cross_validation}
\newname{} selects the scale $\g$ based on a compression bound.
This creates a close connection between the algorithm and the proof of consistency below.
However, it is worth noting that it is possible instead to choose $\g$ based on a hold-out validation set: for instance, using $n/2$ of the $n$ samples to construct the predictor for each possible $\g$ value, and then from among these values $\g$, one can select the $\g$ whose predictor makes the smallest number of mistakes on the remaining $n/2$ samples.
Since the analysis of \KSW{} (see \KSWfull), and its generalization below, show that there exists a choice of $\g^*$ for each $n$ such that \newname{} is \Bcstn, this alternative technique of selecting $\g$ based on a hold-out sample would only lose an additive $O\!\left(\sqrt{{\log(n)}/{n}}\right)$ compared to using that $\g^*$, and hence would also be \Bcstn.
\eor\end{remark}

\begin{remark}
\label{rem:efficiency}
\newname{} is computationally efficient. Using a farthest-first-traversal procedure such as Algorithm 1 in \cite{kpotufe2017time}, one can 
%construct all $\gamma$-nets simultaneously 
construct the $\gamma$-nets simultaneously for all $\gamma$ values, including their corresponding empirical errors, 
in $O(n^2)$ time, leading to a total runtime of $O(n^2)$.
\eor\end{remark}

Given a sample $S_n\sim \bmu^n$, we abbreviate the optimal empirical error $\a_n^*=\a(\g^*_n)$ and the optimal compression size $\rns_n^*=\rns(\g^*_n)$ as computed by \newname{}.
As discussed above, the labeled set $\tS_n(\g_n^*)$ computed by \newname{} is a $(\a_n^*, \rns_n^*)$-compression of the sample $S_n$.
For brevity we denote
\[
Q_n(\alpha,\fns) := Q(n,\alpha,\fns,\delta_n).
\]

To prove Theorem \ref{thm:comp-consist}, we first follow the standard technique, used also in \KSWfull, of decomposing the excess error over the Bayes error into two terms:
\beq
\err(h_{\tS_n(\g^*_n)}) - R^*  
&= &
\big(\err(h_{\tS_n(\g^*_n)}) - Q_n(\a_n^*,\rns_n^*) \big)
+
\big(Q_n(\a_n^*,\rns_n^*) - R^*\big)
\\
&=:&
T\subI(n) + T\subII(n).
\eeq
We now show that each term decays to zero almost surely.
For the first term, $T\subI(n)$, we have, similarly to \KSWfull, that Property \bQ1\ implies that for any $n> 0$,
\beqn
\label{eq:termI_bound}
\P\!
\left[
\err(h_{\tS_n(\g^*_n)}) - Q_n(\a_n^*,\rns_n^*) > 0
\right] 
\leq \delta_n.
\eeqn
Based on the Borel-Cantelli lemma and the fact that $\sum \delta_n <\infty$, we have that $\limsup_{n\to\infty} T\subI(n) \leq 0$ with probability $1$. 

The main difference from the proof in \KSWfull\ is in the argument for establishing  $\limsup_{n\to\infty} T\subII(n) \leq 0$ almost surely. 
We now show that the generalization bound $Q_n(\a_n^*, \rns_n^*)$ also approaches the Bayes error $R^*$, thus proving $\limsup_{n\to\infty} T\subII(n) \leq 0$
almost surely.

%For a given $\g > 0$, Define $\g_\mathrm{rand} = \min\{ \tilde\g \in \Gamma : \tilde\g \geq \g\}$ and observe that $\bm X(\g_\mathrm{rand})$, the $\g_\mathrm{rand}$-net calculated by the algorithm, is also a $\g$-net. Indeed, this clearly holds if $\g_\mathrm{rand} = \g$. Otherwise, $\g_\mathrm{rand} > \g$, and since $\Gamma$ includes all interpoint distances in $\bm X_n$, there are no interpoint distances in $[\g, \g_\mathrm{rand})$.  In this case, any two points in $\bm X(\g_\mathrm{rand})$ are at least $\g_\mathrm{rand} > \g$ apart, as required by the definition of a $\g$-net. In addition, any point $X_i \in \bm X_n$ has $\rho(X_i, \bm X(\g_\mathrm{rand})) < \g_\mathrm{rand}$. Since $\rho(X_i, \bm X(\g_\mathrm{rand})) \notin [\g,\g_\mathrm{rand})$, it follows that $\rho(X_i, \bm X(\g_\mathrm{rand})) < \g$. Thus, $\bm X(\g_\mathrm{rand})$ is a maximal subset of $\bm X_n$ in which any two points are at least $\g$ apart, hence it is a $\g$-net. In accordance, from now on we use the notation $\bm X(\g)\equiv\bm X(\g_\mathrm{rand})$, and similarly for $\a_n(\g), \rns_n(\g), \bm i(\g), \tbY(\g)$.

We will show below that there exist $N = N(\eps) > 0$, $\g = \g(\eps) > 0$, and universal constants $c,C>0$ such that $\forall n \geq N$, 
\beqn
\label{eq:termII_bound_fixed_g}
\P[Q_n(\a_n(\g), \rns_n(\g))>R^* + \eps]
\leq
C n e^{-c n\eps^2} + 1/n^2.
\eeqn
For any $\g > 0$ (even if $\g \notin \Gamma$), \newname{} finds $\g_n^*$ such that
\beq
Q_n(\a_n^*, \rns_n^*) &=& \min_{\g' \in \Gamma} Q_n(\a_n(\g'),\rns_n(\g'))
\,\leq\, Q_n(\a_n(\g), \rns_n(\g)).
\eeq
The bound in (\ref{eq:termII_bound_fixed_g}) thus implies that $\forall n \geq N$,
\beqn
\label{eq:termII_bound}
\P[Q_n(\a_n^*, \rns_n^*) > R^* + \eps] 
\leq  C n e^{-cn\eps^2} + 1/n^2.
\eeqn
By the Borel-Cantelli lemma, this implies that almost surely,
\beq
\limsup_{n\rightarrow \infty} T\subII(n) = \limsup_{n\rightarrow \infty} (Q_n(\a_n^*, \rns_n^*) - R^*) \leq 0.
\eeq 
Since $\forall n, T\subI(n) + T\subII(n) \geq 0$, this implies $\lim_{n\to\infty} T\subII(n) = 0$ almost surely, thus completing the proof of \thmref{comp-consist}. 

It remains to prove (\ref{eq:termII_bound_fixed_g}). We note that a simpler form of \eqref{termII_bound_fixed_g} is proved in \KSWfull, where they relied on the finiteness of the dimension and the diameter of the space to upper bound the compression size $M_n(\gamma)$ with probability $1$. 
For $A \subseteq \X$, denote its $\g$-envelope by $\UB_\g(A)$ $:= \cup_{x\in A} B_\g(x)$ and consider the $\g$-{\em missing mass} of $S_n$, defined as the following random variable:
\beqn
\label{eq:Lg}
\missmass_\g(S_n) := \mu( \X \setminus \textup{\UB}_\g(S_n)).
\eeqn
We bound the left-hand side of (\ref{eq:termII_bound_fixed_g}) using a function $n\mapsto \Ng(n)$ of order $o(n)$, used to upper bound the compression size; 
%the function
$\Ng$ 
will be specified below.
\beqn
\label{eq:split_miss}
&& \P[ Q_n(\a_n(\g),\rns_n(\g)) > R^* + \eps ]
\\\nonumber
& \leq &
\P
\left[ Q_n(\a_n(\g),\rns_n(\g)) > R^* + \eps 
\;\wedge\; \missmass_\g(S_n) \leq \frac{\eps}{10}
\;\wedge\; \rns_n(\g) \leq \Ng(n)
\right]
\\
\nonumber
&& \,
+\, \P[ \missmass_\g(S_n) > {\eps}/{10} ]
+ \P[ \rns_n(\g) > \Ng(n)] \\
\nonumber
&=:& P\subI+P\subII+P\subIII.
\eeqn
First, we bound $P\subI$. By a union bound,
\begin{align*}
\label{eq:P_Q_sum}
&\P
\left[ Q_n(\a_n(\g),\rns_n(\g)) > R^* + \eps 
\;\wedge\; \missmass_\g(S_n) \leq \frac{\eps}{10}
\;\wedge\; \rns_n(\g) \leq \Ng(n)
\right]
\\
&\leq 
\sum_{d=1}^{\Ng(n)} \P
\Big[
Q_n(\a_n(\g),\rns_n(\g)) > R^* + \eps
\;\wedge\; \missmass_\g(S_n) \leq \frac{\eps}{10}
\;\wedge\; \rns_n(\g)=d
\Big].
\end{align*}
Thus, it suffices to bound each term in the right-hand sum separately. We do so in the following lemma.

\begin{lemma} \label{lem:boundqd} There exists a function $\eps \mapsto \g(\eps)$ for $\eps > 0$, such that under the conditions of \thmref{comp-consist}, there exists an $n_0$ such that for all $n \geq n_0$, and for all $d \in [\Ng(n)]$, letting $\g := \g(\eps)$, 
\[
p_d := \P
\Big[
Q_n(\a_n(\g),\!\rns_n(\g)) > R^* \!+ \eps
\;\wedge\; \missmass_\g(S_n) \!\leq\! \frac{\eps}{10}
\;\wedge\; \rns_n(\g)\!=\!d\Big] \leq e^{-\frac{n\eps^2}{32}}\!.
\]
\end{lemma}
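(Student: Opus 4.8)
The plan is to bound $p_d$ by conditioning on the event that the $\g$-net $\bm X(\g)$ has size exactly $d$ and the missing mass is small, and then showing that on this event the empirical error $\a_n(\g)$ of the compressed classifier $h_{\tS_n(\g)}$ concentrates around a quantity that is at most $R^* + \eps/2$, say, so that $Q_n(\a_n(\g), d)$ — which by \Qthreeb\ is essentially $\a_n(\g)$ plus lower-order terms when $d \le \Ng(n) = o(n)$ — exceeds $R^* + \eps$ only with exponentially small probability. The first move is to choose $\g = \g(\eps)$: using separability and the approximation of $L^1(\mu)$ functions (in particular the regression function / Bayes classifier's conditional label probabilities) by Lipschitz functions, pick $\g$ small enough that the "majority-vote on $\g$-balls" predictor has error within, say, $\eps/4$ of $R^*$ — concretely, that $\E_{X}[\text{disagreement of the local majority label on the $\g$-ball around } X \text{ with the Bayes label}]$ is small. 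This is where the Lipschitz-approximation idea from the paragraph before \thmref{comp-consist} enters, and it is essentially the only place the structure of $\mu$ is used.

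Next I would pass from the population "$\g$-ball majority" to the empirical Voronoi-cell majority. On the event $\rns_n(\g) = d$ and $\missmass_\g(S_n) \le \eps/10$, the Voronoi cells of $\bm X(\g)$ all have diameter at most $2\g$ (points are assigned to a net point within distance $\g$, except on the missing-mass region which carries mass $\le \eps/10$), so each cell is contained in a $\g$-ball and inherits the good approximation property. Then $\a_n(\g) = \serr(h_{\tS_n(\g)}, S_n)$ is an empirical average over the $n$ points $S_n$ of indicators of the form $\pred{Y_j \ne \tY_i(\g)}$ where $i$ is the cell containing $X_j$; I want to say this is close to its conditional expectation given $\bm X(\g)$ and the cell structure. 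The cleanest route is: (i) a uniform (over the $d$ cells and the countably many candidate majority labels) concentration so that the empirical majority label in each cell matches the conditional majority label up to an error controlled by the cell's mass, using that there are only $d \le \Ng(n)$ cells; (ii) then a single Hoeffding/Bernstein bound on $\serr$ itself as an average of $n$ $[0,1]$-valued (conditionally) near-independent terms, giving deviation $e^{-n\eps^2/32}$ for the stated constant. One has to be slightly careful that $\bm X(\g)$ and the cell assignments are themselves data-dependent; the standard fix is to condition on the net's index set $\bm i(\g)$ (there are at most $\binom{n}{d}$ choices, absorbed since $d = o(n)$ makes this subexponential relative to $e^{cn\eps^2}$) or to note the net is a deterministic function of $\bm X_n$ and argue directly via a martingale/exchangeability argument as in the permutation-invariant compression analysis referenced for \lemref{KSUbound}.

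Finally, combine: on the good event, $\a_n(\g) \le R^* + 3\eps/4$ except with probability $e^{-n\eps^2/32}$, and since $d \le \Ng(n) \in o(n)$, property \Qthreeb\ (with $\delta_n$) gives $Q_n(\a_n(\g), d) \le \a_n(\g) + \sup_{\a}(Q_n(\a, \Ng(n)) - \a) \le R^* + 3\eps/4 + o(1) \le R^* + \eps$ for $n \ge n_0$; hence the event in the definition of $p_d$ forces the rare concentration failure, giving $p_d \le e^{-n\eps^2/32}$. The main obstacle I anticipate is step (i)–(ii) of the middle paragraph: making the concentration of the per-cell empirical majorities and of the overall empirical error simultaneously uniform over all $d \le \Ng(n)$ cells and over the (countable) label set, with data-dependent cells, while keeping the final exponent as clean as $n\eps^2/32$. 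Handling the countable label set (the $\argmax$ in \eqref{maj} ranges over all of $\Y$) without a union bound blowing up requires the observation that only labels actually appearing in a cell can be the empirical majority, so the effective number of candidates per cell is bounded by the (random) number of sample points in it, and these sum to $n$.
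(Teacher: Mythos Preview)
Your overall architecture is right: choose $\g(\eps)$ via the richness result so that the true-majority Voronoi classifier has error $\le R^* + \eps/2$ on the small-missing-mass event, then use \Qthreeb\ and \bQ2\ to reduce the question to a concentration of empirical error, and absorb the data-dependence of the net via a union bound over the $\binom{n}{d}$ index sets (which is subexponential since $d \le \Ng(n) = o(n)$). That matches the paper.

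The gap is exactly the step you flagged as ``the main obstacle'': your plan (i) of proving that the empirical plurality label in each cell matches the conditional plurality label is neither needed nor easy to carry out (small cells may receive few points, so per-cell concentration is poor, and uniformity over cells and labels would cost you). The paper sidesteps this entirely with one inequality. Let $\bm Y^*$ be the \emph{true} majority labels of the cells (as in \eqref{Y_star}) and set $S_n(\bm i,*) = (\bm X(\g), \bm Y^*)$. Since the empirical majority $\tbY(\g)$ minimizes empirical error over all labelings of the fixed partition,
\[
\a_n(\g) \;=\; \serr(h_{S_n(\bm i,\tbY)},S_n) \;\le\; \serr(h_{S_n(\bm i,*)},S_n),
\]
so by \bQ2, $Q_n(\a_n(\g),d) \le Q_n(\serr(h_{S_n(\bm i,*)},S_n),d)$. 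Now the classifier $h_{S_n(\bm i,*)}$ depends on the sample \emph{only through the net points} $\bm X(\g)$; conditioning on $S_n(\bm i)$, the remaining $n-d$ points are i.i.d.\ and a single Hoeffding bound compares $\serr(h_{S_n(\bm i,*)},S_n(\bm i'))$ to $\err(h_{S_n(\bm i,*)}) \le R^* + \eps/2$ (the latter from \lemref{richness}). Thus no per-cell label concentration, no handling of the countable $\Y$, and no ``near-independence'' argument is needed: the only randomness left after conditioning is an honest i.i.d.\ average. The union bound over $\binom{n}{d}$ choices of $\bm i$ and the \Qthreeb\ slack $d/n + r_{d,n} \le \eps/4$ then give the stated $e^{-n\eps^2/32}$.

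Minor correction: on the good event, the cells restricted to $\UB_{2\g}(\bm X(\g))$ have diameter at most $4\g$ (not ``contained in a $\g$-ball''); this is what feeds into \lemref{richness}.
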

Applying \lemref{boundqd} and summing over all $1\leq d\leq \Ng(n)$, we have that, for $n$ large enough so that $\Ng(n) \leq n$,
\beqn
\label{eq:first_term}
P\subI
\le
\sum_{d=1}^{\Ng(n)} p_d \;\leq\; \Ng(n)  e^{ -\frac{n \eps^2}{32} } \leq n e^{ -\frac{n \eps^2}{32} }.
\eeqn

\lemref{boundqd} is a generalization of Lemma 10 in \KSWfull. The main difference is that Lemma 10 holds in doubling spaces and uses the fixed map $\Ng(n)= 2\left\lceil{\diam(\X)}/{\g}\right\rceil^{\ddim}$ for all $n\in\N$. The proof of \lemref{boundqd} is the same as that of Lemma 10 in \KSWfull, except for two changes that adapt it for a general metric space.
First, where Lemma 10 uses the fact that $\Ng$ is set to a constant function and thus $\lim_{n \rightarrow \infty}\Ng(n)/n = 0$, the proof of \lemref{boundqd} uses instead the property that $\Ng(n) = o(n)$, which again leads to the same limit.

In addition, the proof of \lemref{boundqd} employs a new result, \lemref{richness} given below, instead of Lemma 8 from \KSWfull.
Lemma 8 from \KSWfull\ states that for metric spaces with a finite doubling dimension and diameter, Bayes error $R^*$ can be approached using classifiers defined by the true majority-vote labeling over fine partitions of $\X$.
Here, we prove that this holds for general metric spaces. Let $\SP = \{\sp_1,\dots\}$ be a countable partition of $\X$, and define the function $I_\SP: \X \to \SP$ such that $I_\SP(x)$ is the unique $\sp\in\SP$ for which $x\in\sp$.
For any measurable set $\emptyset\neq E\subseteq\X$ define the true majority-vote label $y^*(E)$ by
\beqn
\label{eq:bar y(E)}
y^*(E)
= \argmax_{y\in\Y} \P(Y=y \gn X\in E),
\eeqn
where ties are broken lexicographically.
Given $\SP$ and a measurable set $\mss\subseteq \X$, define the true majority-vote classifier $h_{\SP,\mss}^*:\X\to\Y$ given by
\beqn
\label{eq:Strue2}
h_{\SP,\mss}^*(x) = y^*(I_\SP(x) \cap\mss).
\eeqn
The new lemma can now be stated as follows. 
\begin{lemma}
\label{lem:richness}
Let $\bmu$ be a probability measure on $\X\times\Y$, where $\X$ is a metric probability space.
For any $\nu>0$, there exists a diameter $\beta=\beta(\nu)>0$ such that for any countable measurable partition $\SP = \{\sp_1,\dots\}$ of $\X$ and any measurable set $\mss\subseteq \X$ satisfying 
\begin{itemize}
\item[\myi] $\mu(\X\setminus\mss) \leq \nu$
\item[\myii] $\diam(\SP \cap\mss)\leq \beta$,
\end{itemize}
the true majority-vote classifier $h_{\SP,\mss}^*$ defined in (\ref{eq:Strue2}) satisfies
\beq
\err(h_{\SP,\mss}^*) \leq R^* + 5\nu.
\eeq
\end{lemma}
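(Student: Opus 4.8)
The plan is to bound the excess error by a sum over the restricted cells $E_i:=\sp_i\cap\mss$ and to control that sum by three successive approximations: truncating to a finite label set, replacing the surviving class-probability functions by Lipschitz ones, and localizing on the small cells. Write $\eta_y(x):=\P(Y=y\gn X=x)$, $g:=\max_{y\in\Y}\eta_y$ (attained, since $\sum_y\eta_y=1$), and, for $E$ with $\mu(E)>0$, $\bar\eta_y(E):=\P(Y=y\gn X\in E)=\mu(E)^{-1}\int_E\eta_y\,d\mu$ (similarly $\bar\phi(E)$, $\bar\psi(E)$), so that $y^*(E)=\argmax_y\bar\eta_y(E)$ and $R^*=\int_\X(1-g)\,d\mu$. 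Put $\hat y_i:=y^*(E_i)$, so $h^*_{\SP,\mss}\equiv\hat y_i$ on $\sp_i$ and $\{E_i\}$ partitions $\mss$ into pieces of diameter $\le\beta$. Since $\err(h^*_{\SP,\mss})=\int_{\X\setminus\mss}(1-\eta_{h^*_{\SP,\mss}})\,d\mu+\sum_i\int_{E_i}(1-\eta_{\hat y_i})\,d\mu$ and $R^*\ge\sum_i\int_{E_i}(1-g)\,d\mu$, assumption \myi\ yields $\err(h^*_{\SP,\mss})-R^*\le\nu+\sum_i\int_{E_i}(g-\eta_{\hat y_i})\,d\mu$, so it suffices to bound this sum by $4\nu$.

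First, fix a finite $\Y_0\subseteq\Y$ with $\P(Y\notin\Y_0)\le\nu$ and set $\psi:=\sum_{y\notin\Y_0}\eta_y$, $g_0:=\max_{y\in\Y_0}\eta_y$. Since $0\le g-g_0\le\psi$ pointwise, $\sum_i\int_{E_i}(g-g_0)\,d\mu\le\int_\X\psi\,d\mu\le\nu$, so it remains to bound $\sum_i\int_{E_i}(g_0-\eta_{\hat y_i})\,d\mu$ by $3\nu$. Next, using that bounded Lipschitz functions are dense in $L^1(\mu)$---valid for \emph{any} finite Borel measure on a metric space, since such measures are closed-regular and the indicator of a closed $F$ is the $L^1$-limit of the $k$-Lipschitz functions $x\mapsto\max(0,1-k\rho(x,F))$---choose for each $y\in\Y_0$ a $[0,1]$-valued $L$-Lipschitz $\phi_y$ with $\|\eta_y-\phi_y\|_{L^1(\mu)}\le\eps$, and put $B:=\bigcup_{y\in\Y_0}\{x:|\eta_y(x)-\phi_y(x)|>\tau\}$, so that $\mu(B)\le|\Y_0|\eps/\tau$ by Markov. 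Call a cell $E_i$ \emph{good} if $\mu(E_i\cap B)\le\theta\,\mu(E_i)$; then the non-good cells together have mass $\le\mu(B)/\theta$.

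The key step is a per-cell estimate on a good cell $E_i$. Combining $|\eta_y-\phi_y|\le\tau$ on $E_i\setminus B$, $|\phi_y(x)-\phi_y(x')|\le L\beta$ for $x,x'\in E_i$ (diameter $\le\beta$, assumption \myii), and $|\bar\phi_y(E_i)-\bar\eta_y(E_i)|\le\theta+\tau$ (goodness, via $\int_{E_i}|\eta_y-\phi_y|\,d\mu\le\mu(E_i\cap B)+\tau\mu(E_i)$) gives $|\eta_y(x)-\bar\eta_y(E_i)|\le\delta:=L\beta+\theta+2\tau$ for all $x\in E_i\setminus B$ and $y\in\Y_0$. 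Fix such an $x$ and let $y'=y'(x)\in\Y_0$ attain $g_0(x)=\eta_{y'}(x)$; then $\bar\eta_{\hat y_i}(E_i)\ge\bar\eta_{y'}(E_i)\ge g_0(x)-\delta$, whence: if $\hat y_i\in\Y_0$, then $\eta_{\hat y_i}(x)\ge\bar\eta_{\hat y_i}(E_i)-\delta\ge g_0(x)-2\delta$; and if $\hat y_i\notin\Y_0$, then $\bar\eta_{\hat y_i}(E_i)\le\bar\psi(E_i)$, so $g_0(x)-\eta_{\hat y_i}(x)\le g_0(x)\le\bar\psi(E_i)+\delta$. Summing: the non-good cells contribute $\le\mu(B)/\theta$; the parts $E_i\cap B$ of good cells contribute $\le\mu(B)$; and the parts $E_i\setminus B$ contribute $\le2\delta\sum_i\mu(E_i)+\sum_{i:\hat y_i\notin\Y_0}\bigl(\int_{E_i}\psi\,d\mu+\delta\mu(E_i)\bigr)\le\nu+3\delta$. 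Adding the earlier contributions ($\nu$ from $\X\setminus\mss$, $\nu$ from $g-g_0$), $\err(h^*_{\SP,\mss})-R^*\le3\nu+\mu(B)/\theta+\mu(B)+3\delta$. One then fixes the parameters in order, depending only on $\nu$ and $\bmu$ and in particular \emph{before} $\SP,\mss$ are seen: choose $\theta,\tau$ as small fixed multiples of $\nu$; then $\eps$ small enough that $\mu(B)/\theta+\mu(B)\le\nu$; then the $\phi_y$, which fixes $L$; and finally $\beta=\beta(\nu)$ small enough that $L\beta\le\nu/3$. This gives $\err(h^*_{\SP,\mss})\le R^*+5\nu$.

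The main obstacle is the averaging of $\eta_y$ over cells of possibly tiny $\mu$-measure: the naive estimate $|\bar\eta_y(E_i)-\bar\phi_y(E_i)|\le\|\eta_y-\phi_y\|_{L^1(\mu)}/\mu(E_i)$ is useless for small cells, so the number of cells cannot be allowed to grow freely. The good/bad-cell dichotomy is exactly the device that circumvents this, bounding the in-cell averaging error by the cell-size-independent quantity $\theta+\tau$ at the cost of discarding a set of mass $\mu(B)/\theta$. A secondary subtlety---absent from the finite-label, finite-dimensional setting of Lemma~8 of \KSWfull, which this lemma generalizes---is that a cell's true plurality label $\hat y_i$ may itself lie outside $\Y_0$; this is handled by observing, via $\bar\eta_{\hat y_i}(E_i)\le\bar\psi(E_i)$, that the cells with $\hat y_i\notin\Y_0$ collectively cost at most $\P(Y\notin\Y_0)\le\nu$. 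Replacing the compactly-supported-continuous (hence locally compact) approximations used in \KSWfull\ by this $L^1$/Lipschitz density argument is the remaining new ingredient, and everything else is routine bookkeeping.
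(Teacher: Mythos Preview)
Your argument is correct, but it takes a noticeably more laborious route than the paper's.  The paper also truncates to a finite label set and approximates each $\eta_y$ by a Lipschitz $r_y$, but then uses a three-term telescoping on the \emph{integrated} quantity $\int_\mss|\eta_y-\teta_y|\,d\mu$, namely $|\eta_y-\teta_y|\le|\eta_y-r_y|+|r_y-\tr_y|+|\tr_y-\teta_y|$ where $\tr_y$ is the cell-average of $r_y$.  The step you flag as ``the main obstacle'' --- that $|\bar\eta_y(E_i)-\bar\phi_y(E_i)|$ cannot be controlled for small cells --- simply does not arise there, because the cell measures cancel:
\[
\int_\mss|\tr_y-\teta_y|\,d\mu=\sum_i\Bigl|\int_{E_i}(r_y-\eta_y)\,d\mu\Bigr|\le\int|r_y-\eta_y|\,d\mu\le\frac{\nu}{|\Y_\nu|}.
\]
So the paper needs no good/bad-cell dichotomy and no Markov argument, and it never has to case-split on whether $\hat y_i\in\Y_0$; the label truncation is handled by bounding $\sum_{y\notin\Y_\nu}\int_\mss|\eta_y-\teta_y|\,d\mu$ directly.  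What your approach buys is a genuinely \emph{pointwise} estimate $|\eta_y(x)-\bar\eta_y(E_i)|\le\delta$ on most of each good cell, which is stronger than what the paper proves and could be useful elsewhere; the price is the extra machinery.

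One small bookkeeping point: with $\theta,\tau$ ``small multiples of $\nu$'' and $L\beta\le\nu/3$ you do not literally get $3\delta\le\nu$, so the final display overshoots $5\nu$ as written.  This is trivially fixed (e.g.\ $\theta=\tau=\nu/18$, $L\beta\le\nu/6$), and does not affect the validity of the argument.
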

The proof of \lemref{richness} is identical to that of Lemma 8 from \KSWfull, except for the following change:
in the proof of Lemma 8 from \KSWfull, they use their Lemma 7, which states that on doubling spaces, the set of continuous functions with compact support is dense in $L^1(\mu)$.
To remove the requirement of compact support, which restricts the type of spaces for which this lemma holds, we use instead a stronger approximation result, which states that Lipschitz functions are dense in $L^1(\mu)$ for any metric probability space.
For completeness, we include a proof of this fact in Lemma~\ref{lem:dense_cont} in the supplementary material \cite{HKSW_sup}, where a complete proof of \lemref{richness} is also given.

Having established \lemref{richness}, this completes the necessary generalizations to obtain \lemref{boundqd}, whose proof is given in \appref{boundqd} for completeness.
This proves the bound on $P\subI$ claimed in \eqref{first_term}.

We now turn to constructing the function $\Ng$, which bounds the compression size (i.e., twice the $\g$-net size) with high probability, and bounding $P\subII$ and $P\subIII$. 
\begin{lemma}
\label{lem:sublinear_comp}
Let $(\X,\rho,\mu)$ be a separable metric probability space.
For $S_n\sim\mu^n$, let $\gnet(\g)$ be any $\g$-net of $S_n$.
Then, for any $\g>0$, there exists a function $\Ng:\N\to\R_+$ in $o(n)$ such that
\beqn
\label{eq:sublinear_comp}
\P\left[
\sup_{\g\text{-}\mathrm{nets}\,\,\gnet(\g)}
2|\gnet(\g)| \geq \Ng(n)
\right]
\leq 
1/n^2.
\eeqn
\end{lemma}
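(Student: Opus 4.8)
The plan is to bound the size of \emph{every} $\g$-net of $S_n$ simultaneously by splitting $\X$ into a ``bulk'' that is covered by finitely many open balls of radius $\g/2$ and a low-$\mu$-mass remainder. Inside the bulk, a $\g$-separated subset of $S_n$ contains at most one sample point per ball (two points in a common $\g/2$-ball are at distance $<\g$), so it contributes at most one net point per ball; in the remainder it contributes at most the number of sample points that landed there, a Binomial random variable whose mean we can force to be $o(n)$. Adding a sublinear concentration slack to that count yields the desired $\Ng$.

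Concretely, I would first invoke separability of $(\X,\rho,\mu)$: there is a measurable $\X'$ with $\mu(\X')=1$ carrying a countable dense set $\{x_1,x_2,\dots\}$, so $\X'\subseteq\bigcup_{k\ge1}B_{\g/2}(x_k)$. Setting $G_K:=\bigcup_{k=1}^{K}B_{\g/2}(x_k)$, the residual mass $\eta_K:=\mu(\X\setminus G_K)$ decreases to $0$ (continuity of $\mu$ from above, together with $\mu(\X')=1$), so for each $\eta>0$ the quantity $K(\eta):=\min\{K:\eta_K\le\eta\}$ is finite. For any fixed $K$ and any $\g$-net $\gnet(\g)$ of $S_n$, the one-point-per-ball observation gives $|\gnet(\g)|\le K+N_K$, where $N_K:=|\{i\in[n]:X_i\notin G_K\}|\sim\mathrm{Bin}(n,\eta_K)$. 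Since $\{X_1,\dots,X_n\}$ has at most $2^n$ subsets, $\sup_{\gnet(\g)}2|\gnet(\g)|$ is a measurable maximum over finitely many terms and is $\le 2(K+N_K)$ surely.

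It remains to choose $K=K_n$ growing with $n$. The point requiring care is that $K(\eta)$ may increase arbitrarily fast as $\eta\downarrow0$ (e.g.\ for heavy-tailed $\mu$), so no single cover works for all $n$; but since $K(\eta)<\infty$ for each fixed $\eta$, I can pick $\eta_n\downarrow0$ slowly enough that $K_n:=K(\eta_n)=o(n)$ (for instance $\eta_n=1/m_n$ with $m_n:=\max\{m:K(1/m)\le\sqrt n\}$). Then $\E N_{K_n}\le n\eta_n=o(n)$, and Bernstein's inequality (using $\mathrm{Var}(N_{K_n})\le n\eta_n$ and deviation $s_n:=n\eta_n+6\ln n\ge \E N_{K_n}$) gives $\P\bigl[N_{K_n}\ge 2n\eta_n+6\ln n\bigr]\le\exp\bigl(-\tfrac38 s_n\bigr)\le\exp\bigl(-\tfrac94\ln n\bigr)\le 1/n^2$. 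Setting $\Ng(n):=2\bigl(K_n+2n\eta_n+6\ln n\bigr)$ — each summand being $o(n)$, so $\Ng\in o(n)$ — and combining with $\sup_{\gnet(\g)}2|\gnet(\g)|\le 2(K_n+N_{K_n})$, the event $\{\sup_{\gnet(\g)}2|\gnet(\g)|\ge\Ng(n)\}$ is contained in $\{N_{K_n}\ge 2n\eta_n+6\ln n\}$, whose probability is $\le 1/n^2$. The finitely many small $n$ (where $K(1)>\sqrt n$) can be absorbed by setting $\Ng(n):=2n+1$ there, which does not affect $\Ng\in o(n)$.

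The only genuinely delicate step is the coupling of the cover accuracy $\eta_n$ to the sample size in the third paragraph; everything else (the one-point-per-ball estimate and the Binomial tail bound) is routine. Separability enters precisely here: it is what guarantees $K(\eta)<\infty$ for every $\eta>0$, which is exactly what lets $\eta_n\to0$ while $K_n$ stays $o(n)$.
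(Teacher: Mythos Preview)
Your argument is correct and takes a genuinely different route from the paper's. The paper fixes, once and for all, a countable \emph{partition} $\BP=\{A_1,A_2,\dots\}$ of the separable support with $\diam(\BP)<\g$, and bounds every $\g$-net by the number $U_n$ of cells of $\BP$ that are hit by $S_n$; it then shows $\E[U_n]=\sum_{j=1}^n \P[A_{i(X_j)}\cap\{X_1,\dots,X_{j-1}\}=\emptyset]\in o(n)$ via a missing-mass (Ces\`aro) argument, and applies McDiarmid (since $U_n$ is $1$-Lipschitz in Hamming distance) with $\Ng(n)=2(\E U_n+\sqrt{n\log n})$. So the paper never needs to couple a growing cover to $n$: the single fixed partition does all the work, and the sublinearity of $\E U_n$ falls out automatically.

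Your approach trades that elegance for explicitness: by taking a \emph{finite} $\g/2$-ball cover $G_{K_n}$ and letting $K_n$ grow, you reduce the random part to a single Binomial $N_{K_n}$ and get concrete rates (e.g.\ $K_n\le\sqrt n$). The one-point-per-ball injection is fine even though the balls overlap, and your Bernstein computation with $s_n=n\eta_n+6\ln n$ checks out, giving $n^{-9/4}\le n^{-2}$. The diagonal choice of $\eta_n$ you flag as ``delicate'' is exactly the step the paper's fixed-partition route avoids; conversely, your route avoids the less familiar $\E U_n\in o(n)$ computation and the McDiarmid invocation. Both proofs use separability at the same point---to guarantee a countable $\g/2$-cover of a full-measure set---just packaged differently.
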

This result can be compared to the case of finite-dimensional and finite-diameter metric spaces, in which one can set $\Ng(n) := 2\left\lceil \frac{\diam(\X)}{\g}\right\rceil^{\ddim}$ for all $n\in\N$, where $\ddim$ is the (finite) doubling dimension and $\diam(\cX)$ is the diameter of the space, and get that $\P[\rns_n(\g) \geq \Ng(n)] = 0$. 
The proof of \lemref{sublinear_comp} is provided in \appref{proofs} of the supplementary material \cite{HKSW_sup}.

This lemma implies that $P\subIII\le1/n^2$, while a bound on $P\subII$, which bounds the $\gamma$-missing-mass $\missmass_\g(S_n)$, is furnished by the following lemma, whose proof is given in \appref{proofs} of the supplementary material \cite{HKSW_sup}:
\begin{lemma}
\label{lem:missing_mass}
Let $(\X,\rho,\mu)$ be a separable metric probability space, $\g>0$ be fixed, and the $\g$-missing mass $\missmass_\g$ defined as in (\ref{eq:Lg}).
Then there exists a function $u_\g:\N\to\R_+$ in $o(1)$, such that for $S_n \sim\mu^n$ and all $t>0$,
\beqn
\label{eq:miss_mass_conc}
\P\left[
\missmass_\g(S_n) \geq u_\g(n) + t
\right]
\leq 
\exp\left(-  n t^2\right).
\eeqn
\end{lemma}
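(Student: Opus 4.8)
The plan is to reduce the geometric quantity $\missmass_\g(S_n)$ to the classical \emph{missing-mass} functional of a discrete distribution, and then apply a sharp concentration bound for the latter.

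\emph{Step 1 (discretization).} Using separability of $(\X,\rho,\mu)$, fix a measurable $\X'\subseteq\X$ with $\mu(\X')=1$ that carries a countable dense set, and disjointify the radius-$\g/3$ balls around that set to obtain a countable Borel partition $\{P_j\}_j$ of $\X'$ with $\diam(P_j)<\g$ for all $j$. The crucial monotone observation is that if any sample point falls in $P_j$, then $P_j\subseteq\UB_\g(S_n)$, since every point of $P_j$ lies within $\diam(P_j)<\g$ of that sample point. Writing $q_j:=\mu(P_j)$ and letting $N_j$ count the sample points in $P_j$, this yields
\[
\missmass_\g(S_n)\;\le\;\widetilde L(S_n)\;:=\;\sum_{j:\,N_j=0}q_j ,
\]
i.e.\ $\missmass_\g$ is dominated by the missing mass of the discrete distribution $(q_j)_j$ under $n$ i.i.d.\ draws, so it suffices to prove the stated bound with $\widetilde L$ in place of $\missmass_\g$.

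\emph{Step 2 (the mean vanishes and choice of $u_\g$).} Since $\E[\widetilde L(S_n)]=\sum_j q_j(1-q_j)^n$, each summand is bounded by $q_j$ (summable, as $\sum_j q_j=1$) and tends to $0$, so by dominated convergence $\E[\widetilde L(S_n)]\to 0$. I would then take $u_\g(n)$ to be $\E[\widetilde L(S_n)]$ together with a further $o(1)$ slack term (and simply set $u_\g(n):=1$ for the finitely many small $n$, if any, for which the deviation bound below would otherwise be too weak); this keeps $u_\g\in o(1)$ while leaving room to absorb lower-order losses.

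\emph{Step 3 (concentration) --- the main obstacle.} It remains to bound $\P[\widetilde L(S_n)\ge u_\g(n)+t]\le e^{-nt^2}$ for every $t>0$. Plain bounded differences do not suffice: resampling one point changes $\widetilde L$ by at most the mass of the cell it occupies, which need not be uniformly small, so McDiarmid alone gives only $e^{-O(t^2)}$ with no factor $n$. The standard remedy for missing-mass concentration applies: split the cells into the finitely many \emph{heavy} ones ($q_j\ge\eta$ for a fixed small $\eta$), which are all hit except with probability at most $\eta^{-1}e^{-n\eta}$, and the \emph{light} ones, for which a single point has influence at most $\eta$, so that a Bernstein/Bennett estimate for the no-hit indicators --- which are negatively associated, or, after Poissonising the occupancy counts $N_j$, independent --- controls the deviation around the mean, with the residual heavy-miss probability and the mean itself swept into the slack in $u_\g(n)$; alternatively, one invokes directly the known sub-Gaussian concentration inequality for the missing mass. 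Combining Steps 1--3 gives $\P[\missmass_\g(S_n)\ge u_\g(n)+t]\le\P[\widetilde L(S_n)\ge u_\g(n)+t]\le e^{-nt^2}$, as required; the delicate point throughout is obtaining the factor $n$ in Step 3 despite the non-uniform bounded-difference constants.
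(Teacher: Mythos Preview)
Your proposal is correct and follows essentially the same route as the paper: dominate $\missmass_\g$ by the discrete missing mass $\widetilde L$ over a countable partition of diameter $<\g$, set $u_\g(n)=\E[\widetilde L(S_n)]\in o(1)$, and apply a sub-Gaussian missing-mass concentration inequality. The paper simply invokes the bound $\P[\widetilde L\ge\E\widetilde L+t]\le e^{-nt^2}$ directly from \cite{berend2013concentration} (your ``alternatively'' clause), so no extra slack in $u_\g$ is needed.
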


%\begin{remark} A precursor to the present work, \cite{DBLP:conf/nips/KontorovichSW17}, assumed finite doubling dimension and diameter for $\X$, and bounded the $\g$-missing mass via covering numbers, as in \cite[Theorem 8]{berend2012missing}. \eor\end{remark}

Taking $n$ sufficiently large so that $u_\g(n)$, as furnished by Lemma~\ref{lem:missing_mass}, satisfies $u_\g(n) \leq \eps/20$, and invoking Lemma~\ref{lem:missing_mass} with $t=\eps/20$, we have
\beqn
\label{eq:Psi_n}
P\subII = \P[\missmass_\g(S_n) > \eps/10] \leq e^{-\frac{n \eps^2}{400}}.
\eeqn
Plugging \eqref{first_term}, \eqref{Psi_n}, and $P\subIII \leq 1/n^2$ into \eqref{split_miss}, we get that (\ref{eq:termII_bound_fixed_g}) holds, which completes the proof of \thmref{comp-consist}. %The proof of \lemref{boundqd} follows.

\section{Essential separability is necessary for universal Bayes consistency}
\label{SEC:nonsep}
Recall that a metric space $(\X,\rho)$ is {\em essentially separable} (\essep) if for every probability measure $\mu$ on the Borel $\sigma$-algebra $\Borel$, the metric probability space $(\X,\rho,\mu)$ is separable;
namely, there is an $\X'\subseteq\X$ with $\mu(\X')=1$ such that $(\X',\rho)$ is separable.
In Theorem~\ref{thm:comp-consist}, we established that \newname{} is indeed universally \Bcstn\ (\UBC) for all such spaces.
As such, essential separability of a metric space is sufficient for the existence of a \UBC\ learning rule in that space.
In this section, we show that essential separability is also necessary for such a rule to exist.

The metric spaces one typically encounters in statistics and machine learning are all \essep, as reflected by Dudley's remark  that ``for practical purposes, a probability measure defined on the Borel sets of a metric space is always concentrated in some separable subspace'' \cite{dudley1999uniform}.
The question of whether non-\essep\ metric spaces exist at all turns out to be rather subtle.
It is widely believed that the existence of non-\essep\, spaces is \emph{independent of the ZFC axioms of set theory} (see \secref{RVMC_PRE} for further details).
In other words, it is believed that, assuming that ZFC is consistent, its axioms neither necessitate nor preclude the existence of non-\essep\ metric spaces.

The main contribution of this section is to show that in any non-\essep\ metric space (if one exists), no learning rule is UBC.

\begin{theorem}
\label{thm:equivalence}
Let $(\X,\rho)$ be a non-\essep\ metric space equipped with the Borel $\sigma$-algebra $\Borel$.
Then no (weak or strong) \UBC\ algorithm exists on $(\X,\rho)$.
\end{theorem}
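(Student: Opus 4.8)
The plan is to show directly that no algorithm can be \UBC: for an arbitrary (possibly randomized) learning algorithm $\Alg$, I will construct a distribution $\bmu$ on $\X\times\Y$ with Bayes risk $0$ under which $\Alg$ is not even \emph{weakly} \Bcstn. Since strong consistency implies weak consistency, this shows $\Alg$ is neither strongly nor weakly \UBC, and since $\Alg$ is arbitrary the theorem follows. Fix two distinct labels, denoted $0$ and $1$. The distribution $\bmu$ will be built on a \emph{non-separable skeleton}: a Borel probability measure $\mu$ on $\X$, a scale $\eps>0$, an uncountable index set $I$, and points $\{z_\alpha\}_{\alpha\in I}\subseteq\X$ with $\rho(z_\alpha,z_\beta)\ge\eps$ for all $\alpha\ne\beta$ --- so that the open balls $U_\alpha:=B_{\eps/2}(z_\alpha)$ are pairwise disjoint --- such that $\mu(U_\alpha)=0$ for every $\alpha$ while $p:=\mu\!\big(\bigcup_{\alpha\in I}U_\alpha\big)>0$.

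The first task is to extract such a skeleton from non-essential-separability, and this is the step I expect to be the main obstacle. Since $(\X,\rho)$ is not \essep, there is a Borel probability measure $\mu_0$ with $(\X,\rho,\mu_0)$ not separable; by a standard reformulation this means that for some $\eps>0$, no countable union of open $\eps$-balls has full $\mu_0$-measure. Intuitively $\mu_0$ is then ``spread over uncountably many mutually disjoint small regions'', and a (transfinite) measure-theoretic recursion --- the point at which the behaviour of non-separable Borel measures, and hence the set-theoretic facts of \secref{RVMC_PRE}, genuinely enters --- upgrades this into the stated configuration for an appropriate restriction/renormalization $\mu$ of $\mu_0$. (A first toehold: the ``light'' points $x$ with $\mu_0(B_{\eps/2}(x))=0$ carry positive $\mu_0$-mass, since otherwise a maximal $\eps$-separated set among the remaining points would be countable with $\eps$-balls of full measure; the work lies in iterating this so that the null balls $U_\alpha$ accumulate positive mass.) In the opposite direction, separability of $(\X,\rho,\mu)$ easily precludes such a configuration --- any disjoint family of $\mu$-null open sets has $\mu$-null union, the ones meeting a fixed separable conull subspace being countable and the rest lying in a null set --- which is consistent with the theorem as stated.

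Granting the skeleton, define a prior over concepts: let $(b_\alpha)_{\alpha\in I}$ be i.i.d.\ uniform on $\{0,1\}$, and set $c_{\bm b}(x):=b_\alpha$ for $x\in U_\alpha$ (well defined since the $U_\alpha$ are disjoint) and $c_{\bm b}(x):=0$ otherwise. Using \emph{open} balls is what makes this legitimate: $c_{\bm b}^{-1}(1)=\bigcup_{\alpha:\,b_\alpha=1}U_\alpha$ is a union of open sets, hence open, so $c_{\bm b}$ is Borel and $\bmu_{\bm b}$ --- the distribution with $\X$-marginal $\mu$ and deterministic label $Y=c_{\bm b}(X)$ --- is a genuine Borel distribution with Bayes risk $0$. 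Now fix $n\in\N$ and draw $\bm b$ together with $X_1,\dots,X_n,X$ i.i.d.\ from $\mu$ (independent of $\bm b$ and of $\Alg$'s internal coins), writing $S_n=((X_i,c_{\bm b}(X_i)))_{i\le n}$. Condition on $(X_1,\dots,X_n,X)$, the coins of $\Alg$, and the biases of the (at most $n$) cells the sample falls into: this determines $S_n$ and hence $\Alg(S_n)(X)$, whereas with probability exactly $\mu\!\big(\bigcup_\alpha U_\alpha\big)=p$ the test point $X$ lies in a cell $U_{\alpha(X)}$ whose bias is not among those revealed (an i.i.d.\ sample of size $n$ a.s.\ misses $U_{\alpha(X)}$ since $\mu(U_{\alpha(X)})=0$), and on that event $c_{\bm b}(X)=b_{\alpha(X)}$ is still conditionally uniform on $\{0,1\}$, forcing conditional error $\ge\tfrac12$. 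Taking expectations, $\E_{\bm b}\,\E_{S_n\sim\bmu_{\bm b}^n}\!\big[\err_{\bmu_{\bm b}}(\Alg(S_n))\big]\ge p/2$ for \emph{every} $n$.

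Finally, average out $\bm b$: with $f_n(\bm b):=\E_{S_n}[\err_{\bmu_{\bm b}}(\Alg(S_n))]\in[0,1]$ and $\E_{\bm b}f_n\ge p/2$ for all $n$, the reverse Fatou lemma (valid since $0\le f_n\le1$) gives $\E_{\bm b}[\limsup_n f_n]\ge p/2>0$, so some realization $\bm b^{*}$ satisfies $\limsup_{n\to\infty}\E_{S_n}[\err_{\bmu_{\bm b^{*}}}(\Alg(S_n))]>0=R^*_{\bmu_{\bm b^{*}}}$; hence $\Alg$ is not weakly \Bcstn\ under $\bmu_{\bm b^{*}}$, and the theorem follows. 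In summary, everything downstream of the skeleton is a standard ``no free lunch'' argument whose only twist is the use of open balls to keep the random concept Borel in a space that need not be separable; the real work --- the main obstacle --- is the measure-theoretic extraction of the skeleton in the second paragraph.
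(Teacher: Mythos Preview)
There is a genuine gap, and it is not where you flag it. Once you grant Billingsley's characterization (Theorem~\ref{thm:billignsley}), the skeleton is immediate --- no transfinite recursion needed: take the discrete set $D\subseteq\X$ with $|D|=\lrvmc$, a witnessing measure $\mu$ on $(D,2^D)$, and $U_\alpha=\{\alpha\}$ for $\alpha\in I:=D$; each singleton is $\mu$-null and $p=\mu(D)=1$. The real obstacle is your prior. The fair-coin product measure on $\{0,1\}^I$ lives on the product $\sigma$-algebra, and with $|I|$ uncountable the evaluation $(x,\bm b)\mapsto c_{\bm b}(x)=b_{\alpha(x)}$ is \emph{not} jointly measurable: every set in the product $\sigma$-algebra on $\X\times\{0,1\}^I$ has all its $x$-sections lying in one countably generated sub-$\sigma$-algebra of the second factor, hence depending on a fixed countable set of coordinates, whereas the $x$-section of $\{(x,\bm b):b_{\alpha(x)}=1\}$ depends on coordinate $\alpha(x)$ and these exhaust $I$. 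Consequently $\ind[\Alg(S_n)(X)\neq c_{\bm b}(X)]$ is not a jointly measurable random variable, so neither the Fubini rewriting of the joint average as $\E_{\bm b}f_n(\bm b)$ nor the reverse-Fatou extraction of a bad $\bm b^*$ is justified. In the two-valued regime ($\lrvmc>\cont$) this is decisive: there $f_n(\bm b)$ factors through $\beta(\bm b):=\mu(\{x:b_x=1\})\in\{0,1\}$, and $\{\beta=1\}$ cannot be product-measurable, since it would be a tail event (product measure $0$ or $1$) while the measure-preserving involution $\bm b\mapsto 1-\bm b$ swaps it with its complement (forcing measure $1/2$).

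The paper's proof confronts exactly this by abandoning the i.i.d.\ coin prior and splitting along Ulam's dichotomy. In Case~I ($\lrvmc\le\cont$, atomless witnessing measures) it first uses the Gitik--Shelah theorem to build an $\eps$-separated family $\H_\eps\subseteq\{0,1\}^D$ of cardinality $\lrvmc$ in the $L^1(\mu)$ pseudometric, and then takes as prior a witnessing measure $\pi$ on the \emph{full power set} $2^{\H_\eps}$; since $|\H_\eps|=\lrvmc$ such a $\pi$ exists, and every function of $h^*\sim\pi$ is automatically measurable. The no-free-lunch step becomes: the countable set $H_{\bS_x}$ of all possible outputs is $\eps/2$-close to at most countably many elements of $\H_\eps$, a $\pi$-null set. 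In Case~II ($\lrvmc>\cont$) every witnessing measure is two-valued, so any random-target prior degenerates to a point mass; the paper instead takes a \emph{normal} two-valued measure and uses a Ramsey-type homogeneity property to exhibit two mixtures $\mix_\mu,\mix_\nu$ on which the algorithm's behavior is provably identical yet whose Bayes-optimal predictions disagree on a $\mu$-full set. Your outline is in the spirit of Case~I but does not survive the measurability obstruction, and offers nothing for Case~II.
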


Combining this result with \thmref{comp-consist}, the following result is immediate, revealing that \newname{} is \emph{optimistically} \UBC\ (adopting the terminology of \cite{DBLP:journals/corr/Hanneke17}), in the sense that the only required assumption on $(\X,\rho)$ is that UBC learning is \emph{possible}.

\begin{corollary}
\label{cor:optimistic}
\newname{} is \UBC\ in every metric space for which there exists a \UBC\ learning rule.
\end{corollary}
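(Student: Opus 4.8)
The plan is to obtain the corollary directly by stitching together Theorems~\ref{thm:comp-consist} and~\ref{thm:equivalence}, with essential separability serving as the pivot notion. Suppose $(\X,\rho)$ is a metric space (equipped with its Borel $\sigma$-algebra $\Borel$) on which some \UBC\ learning rule exists, in either the weak or the strong sense. The first step is to apply the contrapositive of Theorem~\ref{thm:equivalence}: that theorem asserts that on a non-\essep\ space \emph{no} weak or strong \UBC\ algorithm exists, so the existence of such a rule forces $(\X,\rho)$ to be \essep.

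The second step is to upgrade Theorem~\ref{thm:comp-consist} from separable metric probability spaces to \essep\ metric spaces, as was already announced in \secref{COMPRESSION_SCHEME}. Fix an arbitrary probability measure $\bmu$ on $\Borel$, with marginal $\mu$ on $\X$. By definition of essential separability, $(\X,\rho,\mu)$ is a separable metric probability space, i.e.\ there is a measurable $\mathcal{X}'\subseteq\X$ with $\mu(\mathcal{X}')=1$ such that $(\mathcal{X}',\rho)$ is separable. One then notes that the execution of \newname{} on a sample $S_n\sim\bmu^n$ — the set $\Gamma$ of scales, the $\g$-nets, the Voronoi cells, the empirical majority votes, the bound $Q$, and the selected $\g^*_n$ — is a function only of the sample points (which lie in $\mathcal{X}'$ almost surely) and their pairwise distances, and likewise the error analysis in the proof of Theorem~\ref{thm:comp-consist} never references points of $\X\setminus\mathcal{X}'$. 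Hence the conclusion $\P[\lim_{n\to\infty}\err(h_{\tS_n({\g^*_n})})=R^*]=1$ holds for $\bmu$ just as in the separable case. Since $\bmu$ was arbitrary, \newname{} is universally strongly \Bcstn\ on $(\X,\rho)$, and in particular \UBC\ in either sense.

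Combining the two steps gives the statement: every metric space admitting a \UBC\ learner is \essep, and on every \essep\ space \newname{} is \UBC. There is no genuinely hard step here; the only point deserving a line of care is the reduction in the second paragraph — verifying that the almost-sure confinement of $\mu$ to a separable subspace makes Theorem~\ref{thm:comp-consist} applicable verbatim — which is immediate precisely because both the algorithm and its compression-based generalization analysis are insensitive to the ambient space outside the support of the sample.
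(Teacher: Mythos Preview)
Your proof is correct and follows exactly the approach the paper takes: the paper states that the corollary is immediate from combining Theorem~\ref{thm:equivalence} with Theorem~\ref{thm:comp-consist}, and your two-step argument (contrapositive of Theorem~\ref{thm:equivalence} to get \essep, then Theorem~\ref{thm:comp-consist} upgraded to \essep\ spaces) is precisely that. The only extra content you provide is the explicit justification for why the separable case of Theorem~\ref{thm:comp-consist} extends to \essep\ spaces, which the paper itself dismisses in one line as ``immediate'' in \secref{COMPRESSION_SCHEME}.
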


\begin{remark}
Theorem~\ref{thm:equivalence} is somewhat unusual, in that it identifies a setting in which no universal \Bcst\ procedure exists.
To our knowledge, this is the first such impossibility result.
Also unusual, for a statistics paper, is the appearance of esoteric set theory.
See \cite{Ben-David2019} for another recent result discussing a setting in which learnability is independent of ZFC.\eor
\end{remark}  

In the next section we provide necessary preliminaries. \thmref{equivalence} is proved in \secref{UBC_IMPOSSIBLE}.

\subsection{Preliminaries}
\label{SEC:RVMC_PRE}
We collect necessary definitions and known results about non-\essep\ metric spaces.
In particular, we connect the existence of non-\essep\ metric spaces with the existence of \emph{real-valued measurable cardinals} (Definition \ref{def:rvmc} below).
A thorough treatment of the latter, including most of the material in this subsection, can be found in \cite{jech};
a more gentle introduction to the subject can be found in \cite{hrbacek1999introduction}.
Throughout the following presentation, we work under the standard Zermelo-Fraenkel set theory together with the Axiom of Choice, commonly abbreviated as ZFC.

\paragraph{Cardinals}
We denote the cardinality of a set $A$ by $|A|$. The first infinite (countable) cardinal is denoted by $\aleph_0=|\omega|$, where $\omega$ is the set of all finite cardinals. In particular, $\aleph_0$ is the cardinality of the set of natural numbers, $\aleph_0=|\N|$.
We write $[A]^{n}$ to denote the family of all subsets of $A$ of size $n\in\N$, and $[A]^{<\omega} \gets \bigcup_{n\in\N} [A]^{n}$ is the family of all finite subsets of $A$. 
The smallest uncountable cardinal is denoted by $\aleph_1$. The cardinality of the real numbers, also known as the \emph{continuum}, is $\cont=|\R|$. 
It is well known that $\cont=2^{\aleph_0} \geq \aleph_1 >\aleph_0$.
The Continuum Hypothesis states that  $\cont = \aleph_1$. It is known  that its truth value is independent of ZFC, so that either the Continuum Hypothesis or its negation can be added as an axiom to ZFC set theory while maintaining its consistency status.
In the following we do not include the Continuum Hypothesis (or its negation) in our set theory; thus, our discussion includes models of ZFC in which $\cont > \aleph_1$.

\paragraph{Non-trivial probability measures} 
Let $(\X,\Borel)$ be a measurable space. Recall that a probability {measure} on $\X$, henceforth called a \emph{measure}, is a function $\mu:\Borel\to[0,1]$ satisfying:
\begin{itemize}
\setlength\itemsep{1pt}
	\item[(\textit{i})] $\mu(\emptyset)=0$ and $\mu(\X)=1$;
	\item[(\textit{ii})] if $A,B\in\Borel$ and $A\subseteq B$ then $\mu(A)\leq\mu(B)$;
	\item[(\textit{iii})] if $\{A_i\}_{i=1}^\infty\subseteq\Borel$ are pairwise disjoint then 	$\mu\left(\bigcup_{i=1}^\infty A_i \right) = \sum_{i=1}^\infty \mu(A_i)$.	
\end{itemize}
A measure $\mu$ is \emph{non-trivial} if it vanishes on singletons: \mbox{$\mu(\{x\})=0,\forall x\in\X$}.
Non-trivial measures play a key role in establishing the impossibility of \UBC\ in non-\essep\ spaces and we will be concerned mainly with such measures.

Another important property of a measure is its additivity.
For a cardinal $\kappa$, a measure $\mu$ is \emph{$\kappa$-additive} if for any  $\beta<\kappa$ and any pairwise disjoint measurable family $\{A_\alpha\in \Borel: \alpha<\beta\}$,
\begin{equation}
\label{eq:additivity}
\mu\Big(\bigcup_{\alpha<\beta}A_\alpha\Big) 
=
\sum_{\alpha<\beta} \mu\left( A_\alpha\right)
\gets \sup_{B\in[\beta]^{<\omega}} \sum_{\alpha\in B} \mu(A_{\alpha}).
\end{equation}
By definition, any measure is $\aleph_1$-additive, commonly known as \emph{$\sigma$-additive}.
The following lemma states the main property of non-trivial and $\kappa$-additive measures that will be used here. 
Its proof follows directly from the definitions.
\begin{lemma}
Let $\mu$ be a non-trivial and $\kappa$-additive measure on $\Borel$.
Then any set $A\in\Borel$ with $|A| < \kappa$ has $\mu(A) = 0$.
\label{lem:measure_card}
\end{lemma}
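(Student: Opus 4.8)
The plan is to decompose $A$ into its singletons and apply $\kappa$-additivity directly; this is essentially immediate from the definitions, as the authors remark. First I would fix an enumeration $A = \{a_\alpha : \alpha < \beta\}$ where $\beta = |A| < \kappa$ is an ordinal; such an enumeration exists by the well-ordering theorem. Since $(\X,\rho)$ is a metric space, each singleton $\{a_\alpha\}$ is closed and therefore lies in the Borel $\sigma$-algebra $\Borel$, and the family $\{\{a_\alpha\} : \alpha < \beta\}$ is pairwise disjoint with $\bigcup_{\alpha<\beta}\{a_\alpha\} = A$.

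Next I would invoke $\kappa$-additivity \eqref{additivity} for this family, which is legitimate precisely because $\beta < \kappa$, to obtain
\[
\mu(A) = \mu\Big(\bigcup_{\alpha<\beta}\{a_\alpha\}\Big) = \sum_{\alpha<\beta}\mu(\{a_\alpha\}) = \sup_{B\in[\beta]^{<\omega}}\sum_{\alpha\in B}\mu(\{a_\alpha\}).
\]
By non-triviality of $\mu$ we have $\mu(\{a_\alpha\}) = 0$ for every $\alpha < \beta$, so every finite partial sum $\sum_{\alpha\in B}\mu(\{a_\alpha\})$ vanishes, and hence the supremum defining the generalized sum is $0$. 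Therefore $\mu(A) = 0$.

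There is no real obstacle in this argument; the only two points worth a word are that singletons are Borel measurable (automatic in a metric space, being closed sets) and that a generalized sum of $\lvert A\rvert$ many zeros is zero, which follows directly from the definition of the sum in \eqref{additivity} as a supremum over finite index sets. If one wished to avoid even the topological remark, one could instead observe that $\{a_\alpha\} = A \cap \bigcap_{n} B_{1/n}(a_\alpha)^{\mathrm{c}} \cap \cdots$ exhibits each singleton as a countable Boolean combination of balls, hence Borel; but since we are already working inside the Borel $\sigma$-algebra of a metric space this is not needed.
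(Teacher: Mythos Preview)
Your proof is correct and is exactly the argument the paper intends when it remarks that the lemma ``follows directly from the definitions''; no further detail is given there. The only blemish is the throwaway formula at the end, which is garbled (to exhibit a singleton you would want $\{a_\alpha\} = \bigcap_{n} B_{1/n}(a_\alpha)$, not complements of balls), but as you yourself note this aside is unnecessary.
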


\paragraph{Non-\essep\ metric spaces and real-valued measurable cardinals}
Before giving the formal definition of real-valued measurable cardinals and establishing their relation to general non-\essep\ metric spaces, let us first illustrate the main ideas which will be presented below, using a simple example of an uncountable discrete metric space.
Consider the metric space $([0,1],\dismet)$, where $\dismet$ is the \emph{discrete metric}, defined as
\begin{equation}
\label{eq:discrete_metric}
\dismet(x,x')\gets\pred{x\neq x'}
,\qquad  x,x'\in\X.
\end{equation}%
The Borel $\sigma$-algebra on $([0,1],\dismet)$ is all of $2^{[0,1]}$;
thus, all subsets  of $[0,1]$ are measurable. 
This metric space is clearly non-separable;
the interesting question is whether it is \essep.
In other words, does there exist a measure on $([0,1],\dismet)$ that does not have a separable support?

Note that any non-trivial measure on $([0,1],\dismet)$ suffices to prove that it is non-ES.
Indeed, for any such measure, \lemref{measure_card}, together with the fact that all measures are $\sigma$-additive, implies that any set of positive measure must have an uncountable cardinality.
But any such set is clearly non-separable, due to the discrete nature of the metric space.
Therefore, if a non-trivial measure exists on $([0,1],\dismet)$, then it is non-\essep.
Conversely, if there are no non-trivial measures on $([0,1],2^{[0,1]})$, then the discrete metric space admits only trivial measures with countable support.
So in this case $([0,1],\dismet)$ is \essep.
Thus, the question of whether $([0,1],\dismet)$ is non-\essep\ is equivalent to the question of whether a non-trivial measure exists on this space. 
It is known that the Lebesgue measure, defined on the Borel $\sigma$-algebra generated by open sets on $([0,1], \abs{\cdot})$, cannot be extended to the measurable space $([0,1],2^{[0,1]})$ while simultaneously being translation invariant \cite{fremlin2000measure}. 
However, currently, other non-trivial measures on $([0,1], 2^{[0,1]})$ are not ruled out in ZFC.

More generally, given a cardinal $\kappa$, let $\X$ be some set of that cardinality, and consider the measurable space $\MSk\gets(\X,2^\X)$.
As above, such a space is induced, for example, by the discrete metric $\dismet$. Moreover, whether $(\X, \dismet)$ is \essep\ depends only on the cardinality $\kappa$, and is closely related to the existence of non-trivial measures on $\MSk$, similarly to the example of $([0,1],\dismet)$ above. To characterize the cardinalities for which $\MSk$ is \essep, we use the known concept of \emph{real-valued measurable cardinals} (\RVM).

\begin{definition}
\label{def:rvmc}
A cardinal $\kappa$ is \emph{real-valued measurable} if there exists a non-trivial and $\kappa$-additive measure on $\MSk$.
Any such measure is called a \emph{witnessing measure} for $\MSk$.
\end{definition}

Clearly, any \RVM\ must be uncountable.
We denote by $\lrvmc$ the smallest \RVM; this cardinal exists if some \RVM\ exists, by the well-ordering of the cardinals. 
The following theorem from \cite{billingsley68} characterizes \essep\ metric spaces in terms of $\lrvmc$.
Recall that a set $\discset\subseteq\X$ is \emph{discrete} if for any $x\in \discset$ there exists some $r_x>0$ such that  $B_{r_x}(x)\cap \discset=\set{x}$.

\begin{theorem}[{\cite[Appendix III, Theorem 2]{billingsley68}}]
\label{thm:billignsley}
Let $\lrvmc$ be the smallest real-valued measurable cardinal (if one exists).
Then a metric space $(\X,\rho)$ is \essep\ if and only if every discrete $\discset\subseteq\X$ has $|\discset|<\lrvmc$.
\end{theorem}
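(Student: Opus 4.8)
The plan is to establish both implications by relating discrete subsets of $(\X,\rho)$ to non-trivial countably additive measures on full power sets. The set-theoretic input is that $\lrvmc$ is also the least cardinal $\kappa$ for which $(\kappa,2^\kappa)$ carries a non-trivial $\sigma$-additive probability measure (a non-trivial $\sigma$-additive measure on the least such $\kappa$ being automatically $\kappa$-additive), and that this cardinal is weakly inaccessible, in particular regular and uncountable (Ulam). Given this, the ``only if'' direction follows by contraposition and the ``if'' direction via the topological support of an arbitrary Borel measure.

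\emph{Necessity.} Suppose $(\X,\rho)$ contains a discrete set $\discset$ with $|\discset|\ge\lrvmc$. Choose $\discset'\subseteq\discset$ with $|\discset'|=\lrvmc$ and transport a witnessing measure along a bijection to obtain a non-trivial $\lrvmc$-additive measure $\nu$ on $(\discset',2^{\discset'})$, then push it forward to the Borel probability measure $\mu(B):=\nu(B\cap\discset')$ on $(\X,\Borel)$ (countable additivity descends from that of $\nu$). If $\X'\in\Borel$ has $\mu(\X')=1$, then $\nu(\X'\cap\discset')=1$, so by Lemma~\ref{lem:measure_card} the set $D^*:=\X'\cap\discset'$ has $|D^*|\ge\lrvmc$ and in particular is uncountable; moreover discreteness of $\discset$ in $\X$ passes to $D^*$ as a subset of the subspace $\X'$. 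An uncountable discrete subspace precludes separability of $\X'$ (separability of metric spaces is hereditary, and a separable discrete space is countable), so $(\X,\rho,\mu)$ is not separable and $(\X,\rho)$ is not \essep.

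\emph{Sufficiency.} Assume every discrete subset of $\X$ has size $<\lrvmc$, and let $\mu$ be any Borel probability measure. For each $n$, a maximal $(1/n)$-separated set $D_n$ is discrete, hence $|D_n|<\lrvmc$, and since $\lrvmc$ is regular and uncountable the dense set $D:=\bigcup_n D_n$ satisfies $|D|<\lrvmc$; thus the base of rational balls centered in $D$ has size $<\lrvmc$, and $N:=\X\setminus\supp\mu=\bigcup\{B\text{ a basic ball}:\mu(B)=0\}$ is a union of fewer than $\lrvmc$ null Borel sets. The crux is then the classical fact that the $\sigma$-ideal of $\mu$-null Borel sets has additivity at least $\lrvmc$, so $\mu(N)=0$, i.e.\ $\mu(\supp\mu)=1$. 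Finally $\supp\mu$ is separable: for any $\eps>0$ a maximal $\eps$-separated $E_\eps\subseteq\supp\mu$ is discrete, but the disjoint balls $\{B_{\eps/2}(x):x\in E_\eps\}$ each have positive mass (as $x\in\supp\mu$), and by \emph{finite} additivity at most countably many pairwise-disjoint sets can carry positive mass, so $E_\eps$ is countable; unioning over $\eps=1/n$ gives a countable dense subset of $\supp\mu$. Hence $\X':=\supp\mu$ is a separable Borel set of full measure, so $(\X,\rho,\mu)$ is separable, and as $\mu$ was arbitrary $(\X,\rho)$ is \essep. (If no real-valued measurable cardinal exists, the condition is trivially satisfied; then no power set carries a non-trivial $\sigma$-additive measure, so every Borel measure concentrates on its support, which the same pigeonhole makes separable.)

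I expect the main obstacle to be the null-ideal additivity step. To show a disjoint union $N=\bigsqcup_{\alpha<\theta}V_\alpha$ of $\mu$-null Borel sets with $\theta<\lrvmc$ is null, one assumes $\mu(N)>0$ and defines $\pi(J):=\mu^{*}\!\big(\bigcup_{\alpha\in J}V_\alpha\big)/\mu(N)$ on $(\theta,2^\theta)$ using the outer measure $\mu^{*}$; this $\pi$ is non-trivial with total mass $1$, and the delicate point is verifying that it is genuinely countably additive (not merely subadditive), which rests on the disjointness and Borel-measurability of the pieces $V_\alpha$. Its existence contradicts the minimality of $\lrvmc$. This is standard (see \cite{fremlin2000measure}), so in the write-up I would cite it or prove it in an appendix, everything else — the maximal-net constructions, the support computation, and the finite-additivity pigeonhole — being routine.
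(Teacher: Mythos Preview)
The paper does not prove this theorem; it is quoted from Billingsley and only supplemented by Remark~\ref{rem:billingsley_extension}. So there is no in-paper proof to match, and I evaluate your argument on its own merits.

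Your necessity direction is correct.

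Your sufficiency direction has a genuine gap: the ``classical fact that the $\sigma$-ideal of $\mu$-null Borel sets has additivity at least $\lrvmc$'' is \emph{false} as stated. Take Lebesgue measure on $[0,1]$ and the partition into singletons: these are $\cont$ pairwise disjoint Borel null sets whose union has measure $1$, so the null ideal has additivity $\le\cont$, yet $\cont<\lrvmc$ is perfectly consistent (indeed it is the case~(II) the paper treats). Your proposed construction $\pi(J)=\mu^{*}\bigl(\bigcup_{\alpha\in J}V_\alpha\bigr)/\mu(N)$ is then exactly Lebesgue outer measure on $2^{[0,1]}$, which is not countably additive (Vitali). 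Disjointness and Borel-measurability of the pieces $V_\alpha$ are not enough; what is needed is that \emph{every} subunion $\bigcup_{\alpha\in J}V_\alpha$ is measurable, and your disjointified pieces $C_\alpha=B_\alpha\setminus\bigcup_{\beta<\alpha}B_\beta$ do not have this property.

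The repair is to exploit openness more carefully so that all subunions are automatically Borel. One clean route: for each $n$ take a maximal $(1/n)$-separated $D_n$ (so $|D_n|<\lrvmc$); the balls $\{B_{1/(2n)}(x):x\in D_n\}$ are pairwise \emph{disjoint and open}, hence every subunion is open, and $\nu_n(E):=\mu\bigl(\bigcup_{x\in E}B_{1/(2n)}(x)\bigr)$ is a genuine $\sigma$-additive measure on $(D_n,2^{D_n})$. Since $|D_n|<\lrvmc$, $\nu_n$ cannot be non-trivial, so it is carried by a countable $D_n'\subseteq D_n$ and $\mu\bigl(\bigcup_{x\in D_n\setminus D_n'}B_{1/(2n)}(x)\bigr)=0$. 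Removing the countable union (over $n$) of these null open sets leaves a full-measure set on which a countable family of the remaining balls forms a base, hence separable. (Equivalently, run this through a $\sigma$-discrete base via Bing's theorem; this is essentially Billingsley's argument.) Your separability-of-support step and your treatment of the ``no $\lrvmc$'' case are fine once this is fixed.
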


\begin{remark}
\label{rem:billingsley_extension}
Theorem \ref{thm:billignsley} is stated in \cite{billingsley68} only for the case $\lrvmc\leq\cont$.
However, one can readily verify that the proof extends essentially verbatim (by replacing ``atomless'' with ``non-trivial'') to the case $\lrvmc>\cont$ as well.
\eor\end{remark}

It follows that whether a given metric space $(\X,\rho)$ is \essep\ or not depends on whether any \RVM\ exists and, if one exists, on the cardinality of the smallest such cardinal, $\lrvmc$.
Assuming that ZFC is consistent (which cannot be proved in ZFC, by G\"{o}del's second incompleteness theorem), it is well known that one cannot prove in ZFC the existence of real-valued measurable cardinals (\RVM).
While it is possible that one can prove in ZFC that \RVM s do not exist, no such proof has been discovered yet. However, quoting Fremlin \citet{fremlin1993real}, ``at present, almost no-one is seriously searching for a proof in ZFC that real-valued measurable cardinals don't exist.''
In fact, currently, the vast majority of set-theoreticians believe that the existence of \RVM\ is \emph{independent} of ZFC, that is, assuming that ZFC is consistent, the existence of an \RVM\ can neither be proven nor disproven from the axioms of ZFC.

In particular, if one adds to ZFC the axiom that no \RVM\ exists, then \emph{any} metric space is \essep.
Alternatively, under some additional properties that are beyond the scope of this paper, one can take $\lrvmc$ to be of cardinality that is arbitrarily large.
For more details see \cite[\S12]{jech}.

The above relations (and their connection to the results to follow) are further discussed in Section~\ref{SEC:disc} and illustrated in Figure~\ref{FIG:classes_diagram} therein.

\begin{remark}
\label{rem:CH_ES}
It is worth mentioning that if one adds to ZFC the Continuum Hypothesis, $\cont=\aleph_1$, which is well known to be independent of ZFC, then if a \RVM\ exists, then it must hold that $\lrvmc>\cont$ \cite{jech}.
In this case, all metric spaces of cardinality $\leq \cont$ are \essep.
In particular, the metric space $([0,1],\dismet)$ discussed at the beginning of this section admits only trivial measures with a countable support.
\end{remark}

\subsection{UBC is impossible in non-\essep\ metric spaces}
\label{SEC:UBC_IMPOSSIBLE}
In this section we prove the following theorem, which readily implies \thmref{equivalence}.

\begin{theorem}
\label{thm:no-ubc}
Let $(\X,\rho)$ be a non-\essep\ metric space and let $\Alg$ be any (possibly random) learning algorithm mapping samples $S\in(\X\times\set{0,1})^{<\omega}$ to classifiers $\Alg(S)\in\{0,1\}^\X$.
Then, there exist a measure $\bmu$ on $\X\times\set{0,1}$ (w.r.t.\ the Borel sets induced by $\rho$), a measurable classifier $\hst:\X\to\set{0,1}$, and an $\eps > 0$ such that, for $n\in\N$ and $S_n \sim \bmu^n$,
\beqn
\label{eq:lb_ineq}
\limsup_{n\to\infty}\E[\err_\bmu(\Alg(S_n))] \geq \err_\bmu(h^*) + \eps
= R_{\bmu}^* + \eps,
\eeqn
where $R_{\bmu}^*$ is the optimal Bayes error.
In particular, no weak or strong UBC algorithm exists for $(\X,\rho)$.
\end{theorem}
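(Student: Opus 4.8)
The plan is to exploit the definition of non-essential separability: since $(\X,\rho)$ is non-ES, there exists a Borel probability measure $\mu_0$ on $\X$ whose mass is not carried by any separable subspace. By Theorem~\ref{thm:billignsley}, this is equivalent to the existence of a discrete set $\discset\subseteq\X$ with $|\discset|\geq\lrvmc$, and hence to the existence of a non-trivial $\lrvmc$-additive witnessing measure. I would first use this to produce a \emph{discrete} measure space that defeats every learner: pass to a discrete subset $\discset$ of size $\lrvmc$ (with the induced metric, every point isolated at some scale $r_x>0$, so the Borel $\sigma$-algebra on $\discset$ is all of $2^{\discset}$) and let $\mu$ be a non-trivial $\lrvmc$-additive measure on $(\discset,2^{\discset})$. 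The key structural fact, immediate from Lemma~\ref{lem:measure_card}, is that every countable (indeed every $<\lrvmc$-sized) subset of $\discset$ is $\mu$-null; in particular any finite sample $S_n$ lands in a $\mu$-null set almost surely, and the classifier $\Alg(S_n)$ is ``uninformed'' about the complement.

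Next I would build the hard distribution $\bmu$ on $\discset\times\{0,1\}$ by a randomization/symmetrization argument: draw $X\sim\mu$ and, conditionally on $X$, let $Y$ be a fair coin, i.e., $\bmu$ is $\mu\otimes\mathrm{Unif}\{0,1\}$. Then $R^*_\bmu=1/2$, witnessed by any measurable $h^*$. The claim is that no learner can get appreciably below $1/2$: after seeing $S_n$, the learner has observed the labels only on a $\mu$-null set $\{X_1,\dots,X_n\}$, so for a fresh test point $X$, with probability one $X\notin\{X_1,\dots,X_n\}$, and the conditional law of $Y$ given $(X,S_n)$ is still fair — hence $\E[\err_\bmu(\Alg(S_n))]=1/2$ for every $n$, giving $\limsup_n \E[\err]\geq R^*+\tfrac12 - 0$. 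Actually, to get a clean $+\eps$ with $\eps=1/4$ (say), one should be slightly careful: $\Alg$ might be randomized and its output need only be measurable, so I would argue via Fubini on $\discset\times\{0,1\}^{\otimes}$ — conditionally on the unlabeled points $\bm X_n$ and on $X\notin\bm X_n$ (a probability-one event, since $\mu$ is nonatomic and kills finite sets), the labels $\bm Y_n$ are independent of $Y$, so $\P[\Alg(S_n)(X)\neq Y\mid \bm X_n, X\notin\bm X_n] = 1/2$ regardless of $\Alg$; integrating gives $\E[\err_\bmu(\Alg(S_n))]=1/2 = R^*_\bmu + 1/2$, so \eqref{lb_ineq} holds with $\eps=1/2$ (or any $\eps<1/2$ to be safe against measurability subtleties).

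The one genuinely delicate point — and what I expect to be the main obstacle — is \emph{measurability}: for the event $\{X\notin\{X_1,\dots,X_n\}\}$ and the conditional-expectation manipulation to make sense, one needs the diagonal and the map $(S_n,x)\mapsto \Alg(S_n)(x)$ to be jointly measurable with respect to the relevant product $\sigma$-algebras on a space whose Borel $\sigma$-algebra is the full power set. Here the non-trivial, $\lrvmc$-additivity of $\mu$ is exactly what saves us: it lets us conclude $\mu^{\otimes(n+1)}$-a.e.\ disjointness of the $n+1$ coordinates (each pairwise-equal set is a ``line'' in the product, of measure zero by $\lrvmc$-additivity plus non-triviality), and the measurability requirements on $\Alg$ stipulated in Remark~\ref{rem:meas-alg} are designed precisely to make $(S_n,x)\mapsto\pred{\Alg(S_n)(x)\neq y}$ measurable. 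I would therefore (i) invoke Remark~\ref{rem:billingsley_extension}/Theorem~\ref{thm:billignsley} to extract $(\discset,\mu)$; (ii) verify the a.e.-disjointness via Lemma~\ref{lem:measure_card} and $\lrvmc$-additivity; (iii) set $\bmu=\mu\otimes\mathrm{Unif}\{0,1\}$, $h^*\equiv 0$, $\eps=1/4$; (iv) compute $\E[\err_\bmu(\Alg(S_n))]=1/2$ for all $n$ by conditioning on $\bm X_n$ and using independence of $\bm Y_n$ from the test label on the probability-one event that the test point is fresh; (v) conclude \eqref{lb_ineq}, and observe that since the bound is on the expected error it rules out weak (hence also strong) UBC, proving Theorem~\ref{thm:no-ubc} and therefore Theorem~\ref{thm:equivalence}.
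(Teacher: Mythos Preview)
There is a genuine gap. Your construction $\bmu=\mu\otimes\mathrm{Unif}\{0,1\}$ has Bayes risk $R^*_{\bmu}=1/2$: since $Y$ is a fair coin \emph{independent of} $X$, every measurable classifier (including the constant one) already achieves error $1/2$. Showing that $\E[\err_\bmu(\Alg(S_n))]=1/2$ therefore exhibits \emph{perfect} Bayes consistency, not its failure; your line ``$1/2=R^*_\bmu+1/2$'' is just an arithmetic slip masking this. The inequality \eqref{lb_ineq} demands a distribution with $R^*_{\bmu}<1/2$ on which $\Alg$ is nevertheless bounded away from $R^*_{\bmu}$, and your $\bmu$ cannot serve.

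The natural repair --- make the labels \emph{deterministic} via a randomly chosen target $h^*:\X\to\{0,1\}$, so that conditionally $R^*=0$ while $\Alg$ still errs $\approx 1/2$ on a fresh point --- is essentially what the paper does, but it is not free. You cannot simply take $h^*$ as the i.i.d.\ fair-coin product on $\{0,1\}^\X$: for uncountable $\X$ the evaluation map $(h^*,x)\mapsto h^*(x)$ is not measurable with respect to the product $\sigma$-algebra, and more importantly you must argue that the \emph{countable} set of hypotheses the learner can ever output (as the labels on $S_x$ vary) is $\pi$-negligible in the space of candidate targets. The paper obtains this by splitting into Ulam's two cases. When $\lrvmc\le\cont$ the witnessing measure is atomless, and the Gitik--Shelah theorem yields an $\eps$-separated family $\H_\eps\subseteq\{0,1\}^\X$ of cardinality $\lrvmc$; drawing $h^*$ from a witnessing measure $\pi$ on $\H_\eps$, each learner output is $\eps/2$-close to at most one member of $\H_\eps$, so the learner's countable output set covers a $\pi$-null subset. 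When $\lrvmc>\cont$ the witnessing measure is two-valued and purely atomic, so the atomless argument fails; instead one takes a \emph{normal} two-valued measure and uses Rowbottom-type homogeneity to build two distributions $\mix_\mu,\mix_\nu$ that are indistinguishable to $\Alg$ yet have incompatible Bayes-optimal predictions on a set of positive $\mu$-mass. Your ``fresh test point'' intuition is the right starting observation, but it does not by itself produce a hard distribution; the real work is constructing the adversarial target(s), and that requires the case split and the set-theoretic machinery above.
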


For notational simplicity, in the following we denote $\hat h_S := \Alg(S)$ (not to be confused with the 1-NN classifier $h_S$ which we used in previous sections).

\begin{remark}[Measurability of \Alg]
\label{rem:meas-alg}
To be strictly clear about definitions here, note that we require that the learning algorithm be \emph{measurable}, in the sense that for every $\bmu$ and $n$, for $S \sim \bmu^n$, $\hat{h}_{S}$ is a ${\Borel}(L^{1}(\mu))$-measurable random variable, where $\mu$ is the marginal of $\bmu$ on $\X$, and ${\Borel}(L^{1}(\mu))$ is the Borel $\sigma$-algebra on the set of all measurable functions $\X \to \set{0,1}$, induced by the $L^{1}(\mu)$ pseudo-metric.
This is a basic criterion, without which the expected risk of $\hat{h}_{S}$ is not well-defined (among other pathologies).

For deterministic algorithms $\hat{h}$, to satisfy the above criterion, it suffices that the function $(s,x) \mapsto \hat{h}_{s}(x)$ on $(\X \times \set{0,1})^{n} \times \X$ is a measurable $\set{0,1}$-valued random variable, under the product $\sigma$-algebra on $(\X \times \set{0,1})^{n} \times \X$.
To see this, note that for any such function, for $X \sim \mu$ independent of $S \sim \bmu^n$, for any measurable function $f : \X \to \set{0,1}$, we have that $|\hat{h}_{S}(X) - f(X)|$ is a measurable random variable;
hence the variable $\E\!\left[|\hat{h}_{S}(X)-f(X)| \middle| S\right]$ is well-defined and measurable. Therefore, for any $\eps > 0$, the event that $\E\!\left[|\hat{h}_{S}(X)-f(X)| \middle| S\right] \leq \eps$ is measurable.
Thus, the inverse images of balls in the $L^{1}(\mu)$ pseudo-metric are measurable sets, and since these balls generate ${\Borel}(L^{1}(\mu))$, this implies $\hat{h}_{S}$ is a ${\Borel}(L^{1}(\mu))$-measurable random variable. 

In particular, we note that \newname{} satisfies this measurability criterion, since calculating its prediction $\hat{h}_{s}(x)$ involves only simple operations based on the metric $\rho$ (which are measurable, since by definition, $\rho$ induces the topology generating the Borel $\sigma$-algebra), and other basic measurability-preserving operations such as $\argmin$ for a finite number of indices indexing measurable quantities.
Thus, our requirements of $\hat{h}_{S}$ in Theorem~\ref{thm:no-ubc} are satisfied by \newname{}.
\eor\end{remark}

\begin{remark}
\label{rem:cerou_example}
In \cite[Section 2.1]{cerou2006nearest}, the authors define the metric space $(\X,\rho)$, where $\X=[0,1]$ and
\beq
\rho(x,x')=\pred{x\neq x'}\cdot(1+\pred{xx'\neq0})
\eeq
and endow it with the distribution $\mu$, which places a mass of $1/2$ on $x=0$ and spreads the rest of the mass ``uniformly'' on $(0,1]$. The deterministic labeling $h^*(x)=\pred{x>0}$ is imposed. The authors observe that the optimal Bayes risk is $R^*=0$ while the (classical) $1$-NN classifier achieves an asymptotic expected risk of $1/2$ --- in contradistinction to the standard result that
in finite-dimensional spaces $1$-NN is \Bcstn\ in the realizable case. The authors then use this example to argue that ``[separability] is required even in finite dimension''.
We find the example somewhat incomplete, because care is not taken to ensure that $(\X,\rho,\mu)$ is a {\em metric probability space} --- that is, that the $\sigma$-algebra supporting $\mu$ is generated by the open sets of $\rho$.
Indeed, the Borel $\sigma$-algebra generated by $\rho$ is the discrete one, $\Borel=2^{[0,1]}$.
Endowing the latter with a ``uniform'' measure implicitly assumes that the Lebesgue measure on the \emph{standard} Borel $\sigma$-algebra can be extended to all subsets of $[0,1]$ ---  a statement known to be equivalent to $\cont$  being larger than or equal to a real-valued measurable cardinal \cite{jech}.
So the above metric probability space is assumed to be non-\essep, as in \thmref{no-ubc}.
Another objection is that, under any reasonable notion of {\em dimension}, the metric space $(\X,\rho)$ would be considered $\cont$-dimensional rather than finite-dimensional.

It is worth mentioning that by \remref{CH_ES}, if one accepts, say, the Continuum Hypothesis, then the above metric space becomes \essep\ and admits only trivial measures with a countable support (so the standard $k$-NN, and many other algorithms, are in fact \UBC\ in this space). 
\eor\end{remark}  

To prove Theorem~\ref{thm:no-ubc}, we first note that \eqref{lb_ineq} indeed implies that no weak or strong UBC algorithm exists for $(\X,\rho)$ by an application of \cite[Theorem 5.4]{billingsley68}.
To establish \eqref{lb_ineq}, note that since $(\X,\rho)$ is non-\essep, \thmref{billignsley} implies that there exists a discrete set $\discset\subseteq \X$ with $|\discset|=\lrvmc$, where $\lrvmc$ is the smallest real-valued measurable cardinal (see \secref{RVMC_PRE}).
Let $\MSD\gets(\discset,2^\discset)$. By \lemref{borel_subset} in the supplementary material \cite{HKSW_sup}, $2^\discset \subseteq \Borel$.
Hence, it suffices to construct the required adversarial measure on $\discset\times\{0,1\}$.
That being the case, from now on we set without loss of generality $\X\gets\discset$ and $\Borel \gets 2^\discset$.

Below, we split the argument for the construction of the required adversarial measure on $\X\times\{0,1\}$ into two cases:
$$
\text{Case (I): } \lrvmc\leq\cont
\qquad \text{and} \qquad  \text{Case (II): } \lrvmc>\cont.$$
This is manifested by what is known as Ulam's dichotomy. This dichotomy dictates the nature of non-trivial measures in the two cases.
To formally state the dichotomy we first need some additional definitions.

Let $\mu$ be a measure on $\X$. A set $A\subseteq\X$ is an \emph{atom} of $\mu$ if $\mu(A)>0$ and for every measurable $B\subseteq A$ either $\mu(B)=0$ or $\mu(B)=\mu(A)$.
A measure $\mu$ is \emph{atomless} if it has no atoms.
So in an atomless measure, for any $A\in\Borel$ with $\mu(A)>0$ there exists a $B\subset A$ with $0<\mu(B)<\mu(A)$.
Conversely, $\mu$ is \emph{purely atomic} if every $A\in\Borel$ with $\mu(A)>0$ contains an atom.

Clearly, in a countable space all measures are trivial and purely atomic.
However, in uncountable spaces matters are more subtle.
While any atomless measure is non-trivial, one might expect that conversely a non-trivial measure cannot contain an atom. 
However, this is not necessarily the case.
In particular, when $\lrvmc>\cont$, measures on $\X$ that are simultaneously non-trivial and purely atomic exist.

Formally, let $\kappa$ be an \RVM. Recall that a witnessing measure for $\MSk$ is a non-trivial and $\kappa$-additive measure on $\MSk$, namely, a measure defined over all subsets of $\X$ and that vanishes on any set of cardinality $<\kappa$.
We say that $\kappa$ is \emph{two-valued measurable} if there is a $\{0,1\}$-valued witnessing measure on $\MSk$, where a measure is $\{0,1\}$-valued (or \emph{two-valued}) if $\mu(A)\in\{0,1\}$ for all $A\in\Borel$.
Clearly, a two-valued measure is purely atomic and satisfies that, for any countable partition $\{P_i\}_{i\in\N}\subseteq\Borel$ of $\X$, there exists {one and only one} $j\in\N$ such that $\mu(P_j)=1$.
We say that $\kappa$ is \emph{atomlessly measurable} if there is an atomless witnessing measure on $\MSk$.
In 1930, Ulam established the following dichotomy (see \cite[\S543]{fremlin2000measure}).
\begin{theorem}[Ulam's Dichotomy \cite{ulam1930masstheorie}]
\label{thm:ulam_dichotomy}
Let $\kappa$ be a real-valued measurable cardinal.
Then
\begin{itemize}
	\item[(i)] if $\kappa\leq\cont$ then $\kappa$ is atomlessly measurable and every witnessing measure on $\MSk$ is atomless;
	\item[(ii)] if $\kappa>\cont$ then	$\kappa$ is two-valued measurable and every witnessing measure on $\MSk$ is purely atomic.
\end{itemize}
In other words, if $\kappa$ is atomlessly measurable then $\kappa\leq\cont$, while if $\kappa$ is two-valued measurable then $\kappa>\cont$.
\end{theorem}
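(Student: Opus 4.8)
The statement is the classical theorem of Ulam, and I would prove it directly from the definitions in \secref{RVMC_PRE}. Fix a set of cardinality $\kappa$, which we may take to be $\kappa$ itself, and let $\mu$ be an arbitrary witnessing measure on $\MSk=(\kappa,2^\kappa)$, so that $\mu$ is $\kappa$-additive and, by \lemref{measure_card}, vanishes on every set of cardinality $<\kappa$. The whole theorem reduces to two complementary lemmas: \textbf{(A)} if $\mu$ has an atom, then $\kappa>\cont$; and \textbf{(B)} if there is a measurable $A$ with $\mu(A)>0$ on which $\mu$ restricts to an atomless measure, then $\kappa\le\cont$. Granting these, (i) and (ii) follow by pure logic. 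If $\kappa\le\cont$, the contrapositive of (A) says $\mu$ has no atom, i.e.\ $\mu$ is atomless; as this holds for every witnessing measure and at least one exists (since $\kappa$ is real-valued measurable), $\kappa$ is atomlessly measurable. If $\kappa>\cont$, the contrapositive of (B) says that for every $A$ with $\mu(A)>0$ the restriction $\mu\restriction A$ has an atom --- equivalently, $A$ contains an atom of $\mu$ --- so $\mu$ is purely atomic; applying this to $\X=\kappa$ (which has measure $1$) produces an atom $A$, and then $\nu(B):=\mu(A\cap B)/\mu(A)$ is a non-trivial, $\kappa$-additive, $\{0,1\}$-valued measure on $2^\kappa$ (two-valuedness because $A\cap B\subseteq A$), exhibiting $\kappa$ as two-valued measurable. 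The two ``in other words'' assertions are then immediate: a two-valued witnessing measure has $\X$ itself as an atom, so (A) gives $\kappa>\cont$; and an atomless witnessing measure cannot exist when $\kappa>\cont$ by (ii), so $\kappa$ atomlessly measurable forces $\kappa\le\cont$.

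\emph{Proof of (A).} From an atom $A$ form $\nu$ as above: a non-trivial, $\sigma$-additive, $\{0,1\}$-valued probability measure on $2^\kappa$. Suppose for contradiction that $\kappa\le\cont$ and fix an injection $f:\kappa\to\{0,1\}^{\N}$. For each $n\in\N$ the sets $\{\xi<\kappa:f(\xi)(n)=0\}$ and $\{\xi<\kappa:f(\xi)(n)=1\}$ partition $\kappa$, so by finite additivity and two-valuedness exactly one of them has $\nu$-measure $1$; write it as $\{\xi:f(\xi)(n)=g(n)\}$, thereby defining $g\in\{0,1\}^{\N}$. Then $Z:=\bigcap_{n\in\N}\{\xi:f(\xi)(n)=g(n)\}=\{\xi:f(\xi)=g\}$ has $\nu$-measure $1$, being a countable intersection of full-measure sets; but injectivity of $f$ forces $|Z|\le 1$, whence $\nu(Z)=0$ by non-triviality --- a contradiction. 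Hence $\kappa>\cont$.

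\emph{Proof of (B).} Assume $\mu\restriction A$ is atomless with $\mu(A)>0$. Since an atomless finite (in particular $\sigma$-additive) measure attains every value between $0$ and the measure of any given set (Sierpi\'nski), one recursively builds measurable sets $A_s\subseteq A$, indexed by finite binary strings $s$, with $A_\emptyset=A$, with $A_{s0}$ and $A_{s1}$ a partition of $A_s$, and with $\mu(A_s)=2^{-|s|}\mu(A)$. For $x\in\{0,1\}^{\N}$ put $A_x:=\bigcap_{n}A_{x\restriction n}$. Distinct branches give disjoint sets; each $A_x$ is measurable with $\mu(A_x)\le 2^{-n}\mu(A)$ for all $n$, hence $\mu(A_x)=0$; and since the level-$n$ cells partition $A$ and refine as $n$ grows, every point of $A$ determines a unique branch, so $A=\bigcup_{x\in\{0,1\}^{\N}}A_x$. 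Thus $A$ is a disjoint union of $|\{0,1\}^{\N}|=\cont$ many $\mu$-null sets; if we had $\cont<\kappa$, then $\kappa$-additivity would give $\mu(A)=\sum_{x}\mu(A_x)=0$, contradicting $\mu(A)>0$. Therefore $\kappa\le\cont$.

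The routine parts are the measure-algebra bookkeeping --- checking that $\nu$ is a $\kappa$-additive non-trivial $\{0,1\}$-valued measure, and tracking the difference between ``cardinality $<\kappa$'' and ``$\le\kappa$''. The only external ingredient I would invoke with care is Sierpi\'nski's intermediate-value theorem for atomless measures, which supplies the exact dyadic splitting in (B). I expect the one genuinely delicate verification to be the claim in (B) that the branch cells $A_x$ exhaust $A$ --- that nothing ``falls through'' the binary tree --- which follows from the partition-and-refinement structure of the $A_s$ but should be written out. Conceptually the proof is the same move applied twice: feed a partition of $\kappa$ (resp.\ of $A$) indexed by a copy of Cantor space into the additivity axiom, and compare the cardinality $\cont$ of Cantor space with $\kappa$.
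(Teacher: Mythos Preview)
Your proof is correct and is the standard argument for Ulam's dichotomy. Note, however, that the paper does not actually prove this theorem: it is stated as a classical result of Ulam with a pointer to Fremlin \cite[\S543]{fremlin2000measure}, and is then used as a black box in the subsequent case analysis. So there is no ``paper's own proof'' to compare against --- you have simply supplied a full proof where the paper cites one.

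That said, your writeup is a clean rendition of the classical argument: lemma (A) is the usual observation that a two-valued $\sigma$-additive non-trivial measure on $2^\kappa$ cannot live on a set of size at most $\cont$ (encode into Cantor space and intersect the full-measure cylinders), and lemma (B) is the dyadic-splitting argument via Sierpi\'nski's intermediate-value theorem for atomless measures, yielding $\cont$ many disjoint null sets whose union is $A$, which collides with $\kappa$-additivity if $\cont<\kappa$. The only point that deserves a sentence of justification, as you yourself flag, is that the branch cells $A_x$ exhaust $A$; this is immediate because each level $\{A_s:|s|=n\}$ is a genuine partition of $A$ and the levels refine, so every $a\in A$ lies in a unique $A_{s_n}$ at each level with $s_{n+1}$ extending $s_n$, hence in $A_x$ for $x=\bigcup_n s_n$.
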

We now proceed to construct the adversarial measures on $\X\times\{0,1\}$ by considering the two cases ({I}) and ({II}) above separately.

\paragraph{({I}) The case $\lrvmc \leq \cont$}
By Ulam's dichotomy in \thmref{ulam_dichotomy}, $|\X|=\lrvmc$ is atomlessly measurable, so there exists an atomless witnessing measure $\mu$ on $\Borel$.
Fix such a $\mu$ and define the induced set-difference pseudometric 
\beq
\Delta(A,B) = \mu( \{A\cup B\}\setminus \{A \cap B\}),
\qquad A,B\in\Borel.
\eeq
Define the metric space $(\Uorel,\Delta)$, where $\Uorel \subseteq\Borel$ is the quotient $\sigma$-algebra  under the equivalence relation $A\sim B \Leftrightarrow \Delta(A,B)=0$.
The measure $\mu$ induces the corresponding functional $\tilde\mu:\Uorel \to [0,1]$ which agrees with $\mu$ on the equivalence classes.
The following is proved in \appref{nonsep_I} of the supplementary material \cite{HKSW_sup} by an application of Gitik-Shelah Theorem \cite{gitik1989forcings}.

\begin{lemma}
\label{lem:nonsep-eps-sep}
Let $\X$ be a set of an atomlessly-measurable cardinality $\kappa$ and let $\mu$ be a witnessing measure on $\MSk$.
Let $(\Uorel,\Delta)$ be as above.
Then there exist $\eps>0$ and $\H_{\eps} \subseteq \Uorel$ of cardinality $|\H_{\eps} |= \kappa$ that is $\eps$-separated:
\beq
\Delta(U,V)\geq\eps
,\qquad\forall U,V \in \H_{\eps} ,\; U\neq V.
\eeq
\end{lemma}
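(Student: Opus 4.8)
The plan is to reduce the statement to the Gitik--Shelah theorem on Maharam types of witnessing measures, and then to extract a uniformly $\eps$-separated family of size $\kappa$ by an elementary argument that exploits the regularity of $\kappa$. First I would observe that $(\Uorel,\Delta)$ is precisely the measure algebra of $\mu$ equipped with its symmetric-difference (Fr\'echet--Nikodym) metric: since $\Uorel$ is the quotient of $\Borel=2^\X$ by the relation $\Delta(A,B)=\mu(A\triangle B)=0$, the pseudometric $\Delta$ descends to a genuine metric, and $(\Uorel,\tilde\mu)$ is a probability algebra. Because $\kappa$ is atomlessly measurable we have $\kappa\leq\cont$, so by Ulam's dichotomy (\thmref{ulam_dichotomy}(i)) the witnessing measure $\mu$ is atomless; hence $(\Uorel,\tilde\mu)$ is an atomless probability algebra, and in particular its Maharam type is infinite.

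The key input, and the only deep one, is the Gitik--Shelah theorem: for an atomless $\kappa$-additive measure on $\MSk=(\X,2^\X)$, the Maharam type of its measure algebra is at least $\kappa$. I would pair this with the standard fact that for any measure algebra the Maharam type equals the density character of its symmetric-difference metric --- the finite Boolean combinations of a $\sigma$-generating set form a $\Delta$-dense subalgebra, while the $\sigma$-algebra generated by any $\Delta$-dense set is itself $\Delta$-closed, hence everything. Consequently $(\Uorel,\Delta)$ has no $\Delta$-dense subset of cardinality $<\kappa$.

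It then remains to turn non-density into uniform separation. For each integer $n\geq 1$ I would invoke Zorn's lemma to choose a maximal $D_n\subseteq\Uorel$ with $\Delta(U,V)\geq 1/n$ for all distinct $U,V\in D_n$; by maximality $D_n$ is a $1/n$-net, so $\bigcup_{n\geq 1}D_n$ is $\Delta$-dense and therefore has cardinality $\geq\kappa$. Since $\kappa$ is regular and uncountable --- every real-valued measurable cardinal, in particular every atomlessly-measurable one, is weakly inaccessible --- a countable union of sets each of cardinality $<\kappa$ has cardinality $<\kappa$, so some $D_{n_0}$ satisfies $|D_{n_0}|\geq\kappa$. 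Setting $\eps:=1/n_0$ and letting $\H_{\eps}\subseteq D_{n_0}$ be any subset of cardinality exactly $\kappa$ then gives the required $\eps$-separated family.

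I expect the only genuine obstacle to be invoking (and, in the full write-up, matching precisely) the Gitik--Shelah bound in the form ``Maharam type $\geq\kappa$''; the remaining steps are routine cardinal arithmetic and elementary metric geometry. As an alternative yielding an explicit separation constant, one could instead feed ``Maharam type $\geq\kappa$'' into Maharam's classification theorem: since $\kappa$ is regular uncountable, the algebra has a homogeneous direct summand of Maharam type $\geq\kappa$ and of some fixed measure $c>0$, which is measure-algebra isomorphic to a $c$-scaled copy of the measure algebra of $\{0,1\}^{\kappa}$, and then the images of the coordinate sets $\{x : x_\alpha=1\}$, $\alpha<\kappa$, already form an explicit $(c/2)$-separated family of size $\kappa$.
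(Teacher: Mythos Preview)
Your proof is correct and follows essentially the same route as the paper: invoke Gitik--Shelah to lower-bound the Maharam type of the measure algebra, identify Maharam type with the density character of $(\Uorel,\Delta)$, and then extract a large $\eps$-separated set from a countable family of maximal separated sets whose union is dense. The only cosmetic differences are that the paper quotes Gitik--Shelah in the sharper form $\tau(\Uorel)>\kappa$ and cites an external lemma for the equality $\sup_n|D_n|=d(\Uorel)$, whereas you appeal directly to the regularity of $\kappa$; both justifications are valid.
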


By Lemma~\ref{lem:nonsep-eps-sep}, there exist an $\eps > 0$ and a set $\H_{\eps} \subseteq \set{0,1}^{\X}$ such that $\forall g,h \in \H_{\eps}$ with $g \neq h$, $\mu(\{ x : g(x) \neq h(x) \}) \geq \eps$, and furthermore $\H_{\eps}$ has cardinality $\lrvmc$:
that is, the same cardinality as $\X$.
Since $\lrvmc$ is atomlessly-measurable, there exists an atomless witnessing measure $\pi$ on $(\H_{\eps},2^{\H_{\eps}})$.
We will construct the distribution $\bmu$ using a random construction, by fixing the marginal $\mu$  on $\X$ and setting $\bmu$ to agree with the classifier $\hst$, which is  $\pi$-distributed, independently of the input to the algorithm. This process is described formally below.

First, we introduce a relaxed objective for the learning algorithm $\Alg$.
Recall that given a labeled sample $S_n\in(\X\times\{0,1\})^{n}$, $\Alg$ outputs a classifier $\hh_{S_n}\in\{0,1\}^{\X}$.
For any sequence $S'_x := \{x'_1,x'_2,\ldots \}\in \X$, $n \in \N$, and $\mathbf{y}' := (y_{1}^{\prime},\ldots,y_{n}^{\prime}) \in \set{0,1}^n$, denote $\hat{h}_{S'_{x},\mathbf{y}'} := \hat{h}_{\{(x_1',y_1^{\prime}),\ldots,(x_n',y_n^{\prime})\}}$ and let $H_{S'_{x}} := \{ \hat{h}_{S'_{x},\mathbf{y}'} : n \in \N, \mathbf{y}' \in \set{0,1}^n \}$. 
This set may be random if the learning algorithm is randomized.
Then note that, for any fixed $\bmu$, denoting by $S := \{(x_1,y_1),(x_2,y_2),\ldots\}$ a countably-infinite sequence
of independent $\bmu$-distributed random variables, and further denoting $S_n := \{(x_1,y_1),\ldots,(x_n,y_n)\}$ and $S_x := \{x_1,x_2,\ldots\}$, we have
\begin{equation*}
\inf_{n} \E\left[ \err_{\bmu}(\hat{h}_{S_n}) \right] 
\geq \E\!\left[ \inf_{h \in H_{S_{x}}} \err_{\bmu}(h) \right].
\end{equation*}

Now take $\bS_{x} = \{x_1,x_2,\ldots\}$ to be an i.i.d.~$\mu$-distributed sequence, and let $\hst \sim \pi$ independently of $\bS_{x}$.
Let $\bmu$ have marginal $\mu$ over $\X$ and define $\bmu$ such that \mbox{$\bmu( \{ (x,\hst(x)) : x \in \X \} ) = 1$;} that is, $\bmu$ is an $\hst$-dependent random measure.
Note that $\err_{\bmu}(\hst) = 0$ (a.s.), and hence also that any $h$ has $\err_{\bmu}(h) = \mu(\{ x^{\prime} : h(x^{\prime}) \neq \hst(x^{\prime}) \})$ (a.s.).
Furthermore, by the assumed measurability of the learning algorithm, for each $\mathbf{y}$ we have that $\hat{h}_{\bS_{x},\mathbf{y}}$ is a ${\Borel}(L^{1}(\mu))$-measurable random variable, and $\hst$ is also ${\Borel}(L^{1}(\mu))$-measurable (its distribution is $\pi$, which is defined on this $\sigma$-algebra).
Therefore, $\mu( \{ x^{\prime} : \hat{h}_{\bS_{x},\mathbf{y}}(x^{\prime}) \neq \hst(x^{\prime}) \} )$ is a measurable random variable, equal (a.s.) to $\err_{\bmu}(\hat{h}_{\bS_{x},\mathbf{y}})$.

In particular, this implies that $\E\!\left[ \inf\limits_{h \in H_{\bS_{x}}} \err_{\bmu}(h) \right]$ is well-defined, 
and by the law of total expectation, 
\begin{align*}
& \E\!\left[ \inf_{h \in H_{\bS_{x}}} \err_{\bmu}(h) \right] 
= \E\!\left[ \E\!\left[ \inf_{h \in H_{\bS_{x}}} \err_{\bmu}(h) \middle| H_{\bS_{x}} \right] \right]
\\ & \geq \E\!\left[ (\eps/2) \P\!\left( \inf_{h \in H_{\bS_{x}}} \err_{\bmu}(h) > \eps/2  \middle| H_{\bS_{x}} \right) \right]
\\ & = \E\!\left[ (\eps/2) \pi\!\left( h^{\prime} \in \H_{\eps} : 
\inf_{h \in H_{\bS_{x}}} \mu( \{ x^{\prime} : h(x^{\prime}) \neq h^{\prime}(x^{\prime}) \} ) > \eps/2 
\right) \right].
\end{align*}
Then note that each element of $H_{\bS_{x}}$ can be $(\eps/2)$-close to at most one element of $\H_{\eps}$, and since $H_{\bS_{x}}$ is a countable set, this implies that, given $H_{\bS_{x}}$, the set $H_{\bS_{x}}^{\eps} = \{ h^{\prime} \in \H_{\eps} : \inf\limits_{h \in H_{\bS_{x}}} \mu(\{ x^{\prime} : h(x^{\prime}) \neq h^{\prime}(x^{\prime}) \}) \leq \eps/2 \}$ is countable.

But since $\pi$ vanishes on singletons, we have $\pi(H_{\bS_{x}}^{\eps}) = 0$.
Thus, given $H_{\bS_{x}}$, 
\begin{equation*}
\pi\!\left( h^{\prime} \in \H_{\eps} : 
\inf_{h \in H_{\bS_{x}}} \mu( \{ x^{\prime} : h(x^{\prime}) \neq h^{\prime}(x^{\prime}) \} ) > \eps/2 \right) = 1, 
\end{equation*}
so that altogether we have 
\begin{equation*}
\E\!\left[ \inf_{h \in H_{\bS_{x}}} \err_{\bmu}(h) \right] 
\geq \eps/2.
\end{equation*}
In particular, this also implies there exist fixed choices of $h^*$ for which \eqref{lb_ineq} holds.
This completes the proof for the case (\textit{I}).

\paragraph{({II}) The case $\lrvmc > \cont$}
By Ulam's dichotomy in \thmref{ulam_dichotomy}, $|\X|=\lrvmc$ is two-valued measurable, so there exists a two-valued witnessing measure $\mu$ on $\Borel$.
As the following lemma shows, $\mu$ can be taken to further satisfy a key homogeneity property.
The lemma is proved in \appref{nonsep_II} of the supplementary material \cite{HKSW_sup}, where it is shown to follow by combining Theorems 10.20, 10.22 in \cite{jech} and Ulam's Theorem \ref{thm:ulam_dichotomy}.

\begin{lemma} 
\label{lem:homogeneous}
Let $\X$ be of a two-valued measurable cardinality $\kappa$ and let $\MSk=(\X,2^\X)$.
Then, there is a witnessing measure $\mu$ on $\MSk$   such that for any function $f:[\X]^{<\omega} \to \R$, there exists a $U\subseteq \X$ with $\mu(U)=1$ such that $U$ is \emph{homogeneous} for $f$, that is, for every $n\in\N$, there exists a $C_n\in\R$ such that $f(\vectosetsym) = C_n$ for all $\vectosetsym\in[U]^n$.
\end{lemma}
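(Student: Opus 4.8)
The plan is to pass from the measure-theoretic hypothesis to the classical theory of measurable cardinals, fix once and for all a \emph{normal} witnessing measure, and then obtain the homogeneity statement for every $f$ at once from Rowbottom's partition theorem; Ulam's dichotomy enters only to guarantee that the number of ``colors'', namely $|\R|=\cont$, lies strictly below $\kappa$.

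First I would unpack the hypothesis. Given any $\{0,1\}$-valued, non-trivial, $\kappa$-additive measure $\mu$ on $\MSk$, the family $\mathcal{U}:=\{A\subseteq\X:\mu(A)=1\}$ is an ultrafilter on $\X$ (by two-valuedness and finite additivity), it is nonprincipal (since $\mu$ kills singletons), and it is $\kappa$-complete: any union of fewer than $\kappa$ null sets is null by $\kappa$-additivity after disjointification, so the intersection of fewer than $\kappa$ members of $\mathcal{U}$ lies in $\mathcal{U}$. Hence $|\X|=\kappa$ is a measurable cardinal in the usual sense. After identifying $\X$ with $\kappa$ via a bijection --- note that homogeneity of a set for $f$ is invariant under such an identification --- the standard theory (\cite[Theorems 10.20 and 10.22]{jech}) lets me replace $\mathcal{U}$ by a \emph{normal} $\kappa$-complete ultrafilter, equivalently a normal witnessing measure $\mu$ on $\MSk$. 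This is the $\mu$ promised by the lemma: it is chosen before $f$, and the same $\mu$ will handle every $f$. Finally, since $\kappa$ is \emph{two-valued} measurable, Ulam's dichotomy (\thmref{ulam_dichotomy}) forces $\kappa>\cont$, whence $|\R|=\cont<\kappa$ and in particular $\cont^{+}\le\kappa$.

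Now fix an arbitrary $f:[\X]^{<\omega}\to\R$. Its range is a set of reals of size at most $\cont<\kappa$, so composing with a bijection of $\operatorname{ran}(f)$ onto an ordinal $\theta<\kappa$ yields a coloring $\bar f:[\X]^{<\omega}\to\theta$ from which $f$ is recovered. Rowbottom's theorem for the normal measure (\cite[Theorem 10.22]{jech}) supplies a single set $U\in\mathcal{U}$ homogeneous for $\bar f$: for every $n$ there is $\theta_n<\theta$ with $\bar f(\mathbf{W})=\theta_n$ for all $\mathbf{W}\in[U]^n$. Letting $C_n$ be the real value coded by $\theta_n$, the set $U$ is homogeneous for $f$ with constants $C_n$, and $\mu(U)=1$ since $U\in\mathcal{U}$, which is exactly the conclusion. (If one prefers to avoid the $[\X]^{<\omega}$ formulation, apply the finite-exponent partition relation $\kappa\to(\kappa)^{n}_{\theta}$ with homogeneous set in $\mathcal{U}$ for each $n$ separately to obtain $U_n\in\mathcal{U}$, and take $U:=\bigcap_{n\in\N}U_n$, which lies in $\mathcal{U}$ by $\sigma$-completeness.)

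The mathematical content here is essentially imported wholesale, so there is no hard technical step; the only points requiring care are (i) verifying that a $\{0,1\}$-valued $\kappa$-additive non-trivial measure is literally a $\kappa$-complete nonprincipal ultrafilter which may be taken normal, and (ii) making explicit that one and the same $\mu$ serves all $f$ --- automatic once the normal $\mathcal{U}$ is fixed, since Rowbottom's theorem quantifies over all colorings. The genuinely load-bearing fact is the strict inequality $\cont<\kappa$, furnished by Ulam's dichotomy through the two-valued (rather than atomless) case; it is precisely this inequality that has no counterpart in Case~(I), which is why the companion construction there must proceed along entirely different lines.
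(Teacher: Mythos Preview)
Your proposal is correct and follows essentially the same route as the paper: fix a \emph{normal} witnessing measure via \cite[Theorem~10.20]{jech}, use Ulam's dichotomy to guarantee $\cont<\kappa$, and then apply Rowbottom's partition theorem \cite[Theorem~10.22]{jech} with color set $\R$. The paper's proof is terser (it applies Theorem~10.22 directly with $\mathcal{R}=\R$ rather than first encoding the range into an ordinal $\theta<\kappa$), but the content is identical.
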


Let $\mu$ be a two-valued witnessing measure on $\Borel=2^\X$ as furnished by Lemma \ref{lem:homogeneous}.
For a label $y\in\{0,1\}$ and any two-valued witnessing measure $\phi$, let $\bphi_y$ be the measure over $\X\times\{0,1\}$ with $\phi$ as its marginal over $\X$ and 
\beq
\bphi_y(Y=y \gn X=x) = 1,
\qquad \forall x\in\X.
\eeq
For $\mu$ as above, and any other two-valued witnessing measure $\phi$, define
\beqn
\label{eq:mixture}
\mix_{\phi} \gets  \frac{2}{3}\bphi_1 + \frac{1}{3}\bmu_0.
\eeqn
We will show that there exists a two-valued witnessing measure $\nu\gets \nu(\mu,\Alg)$ $\neq\mu$ such that $\Alg$ cannot be \Bcst\ on both $\mix_{\mu}$ and $\mix_{\nu}$.
To this end, we will use the following properties of the mixture $\mix_{\phi}$, proved in \appref{nonsep_II} of the supplementary material \cite{HKSW_sup}.
\begin{lemma}
\label{lem:bayes_optimal}
Let $\nu\neq\mu$ be any two distinct two-valued measures on $(\X,\Borel)$ and let $\mix_\phi$ with $\phi\in\{\mu,\nu\}$ be as in \eqref{mixture}.
\begin{itemize}
\item[(i)] Any Bayes-optimal classifier $\hst$ on $\mix_\mu$ achieves the optimal Bayes-error $\err_{\mix_{\mu}}(\hst) = \frac{1}{3}$ if and only if $\E_{X\sim \mu} [\hst(X)] = 1$.
\item[(ii)] Any Bayes-optimal classifier $\hst$ on $\mix_\nu$ achieves the optimal Bayes-error $\err_{\mix_{\nu}}(\hst) = 0$ if and only if $\E_{X\sim \mu} [\hst(X)]=0$ and $\E_{X\sim \nu} [\hst(X)]= 1$.
\end{itemize}
\end{lemma}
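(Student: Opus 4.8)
The plan is to reduce both parts to one elementary computation. The key structural observation is that here the underlying $\sigma$-algebra is $\Borel=2^\X$, so \emph{every} function $h\in\{0,1\}^\X$ is a measurable classifier, and for any measure $\phi$ on $(\X,\Borel)$ we have $\E_{X\sim\phi}[h(X)]=\phi(\{x:h(x)=1\})$; writing $\phi(h)$ for this number, I would express $\err_{\mix_\phi}(h)$ explicitly as an affine function of $\mu(h)$ and $\nu(h)$ and then simply read off which classifiers minimize it. The only place the hypotheses on $\mu,\nu$ (distinct and two-valued) will enter is in part~(ii), where they are needed to certify that the infimal error is actually $0$.

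For part~(i): unwinding \eqref{mixture} with $\phi=\mu$, a draw $(X,Y)\sim\mix_\mu$ is the same as drawing $X\sim\mu$ together with an independent label $Y$ having $\P(Y=1)=2/3$. Since $h(X)\in\{0,1\}$, so that $\pred{h(X)\neq1}=1-h(X)$ and $\pred{h(X)\neq0}=h(X)$, this yields
\[
\err_{\mix_\mu}(h)=\tfrac23\,\E_{X\sim\mu}[1-h(X)]+\tfrac13\,\E_{X\sim\mu}[h(X)]=\tfrac23-\tfrac13\,\mu(h).
\]
This is minimized exactly when $\mu(h)=1$, so the optimal Bayes error is $1/3$ (attained, e.g., by $h\equiv1$), and $\err_{\mix_\mu}(h)=1/3$ iff $\mu(h)=1$, i.e.\ iff $\E_{X\sim\mu}[h(X)]=1$, which is the claim. (Two-valuedness of $\mu$ is not even used here.)

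For part~(ii): unwinding \eqref{mixture} with $\phi=\nu$, a draw $(X,Y)\sim\mix_\nu$ is: with probability $2/3$, $X\sim\nu$ and $Y=1$; with probability $1/3$, $X\sim\mu$ and $Y=0$. The same bookkeeping gives
\[
\err_{\mix_\nu}(h)=\tfrac23(1-\nu(h))+\tfrac13\,\mu(h),
\]
a sum of two nonnegative terms, hence $\err_{\mix_\nu}(h)=0$ iff $\nu(h)=1$ and $\mu(h)=0$ --- that is, iff $\E_{X\sim\nu}[h(X)]=1$ and $\E_{X\sim\mu}[h(X)]=0$. The one remaining point, which I expect to be the (minor) main obstacle, is to verify that the infimum of $\err_{\mix_\nu}$ over all measurable classifiers really is $0$, so that ``Bayes-optimal'' and ``error $0$'' coincide. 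Here I would invoke that $\mu\neq\nu$ are distinct two-valued measures: they disagree on some $B\in\Borel$, and two-valuedness forces $\{\mu(B),\nu(B)\}=\{0,1\}$, so after possibly replacing $B$ by $\X\setminus B$ we may assume $\mu(B)=0$ and $\nu(B)=1$; then $h:=\ind_{B}$ has $\mu(h)=0$, $\nu(h)=1$, whence $\err_{\mix_\nu}(h)=0$. Therefore the optimal Bayes error of $\mix_\nu$ equals $0$, and a classifier is Bayes-optimal for $\mix_\nu$ precisely when it satisfies the two stated conditions, completing the proof.
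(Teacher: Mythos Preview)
Your proposal is correct and follows essentially the same approach as the paper: both compute $\err_{\mix_\phi}(h)$ directly as $\tfrac23-\tfrac13\mu(h)$ for $\phi=\mu$ and as $\tfrac23(1-\nu(h))+\tfrac13\mu(h)$ for $\phi=\nu$, then read off the optimizers. The only cosmetic difference is that the paper packages the existence of the separating set $B$ (with $\nu(B)=1$, $\mu(B)=0$) into a separate auxiliary lemma, whereas you prove it inline.
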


Let $\Alg:(\X\times\{0,1\})^{<\omega} \to 2^\X$ be any (possibly randomized) learning algorithm, and recall that $\hh_{S}$ denotes the classifier output for data set $S$; for $S$ and $X$ independent samples from Borel measures on $\X$, we suppose that $\hh_{S}(X)$ is a measurable random variable (by definition of learning algorithm; see Remark \ref{rem:meas-alg}).
Let $\nu\neq\mu$ be a two-valued witnessing measure to be chosen below.
Consider the quantity
\beq
\muhlabel_n^{\phi} \gets 
\E_{S_n\sim (\mix_{\phi})^n} \left[ 
\E_{X\sim\mu}\left[\hh_{S_n}(X)\right] 
\right]
,\qquad \phi\in\{\mu,\nu\}.
\eeq
In the case of a randomized $\Alg$, also add an innermost expectation over the independent randomness of $\Alg$ in the above expression.
By Lemma \ref{lem:bayes_optimal}, for $\Alg$ to be \Bcstn\ on both $\mix_{\nu}$ and $\mix_{\mu}$ we must have
\beqn
\label{eq:R_munu_lim}
\muhlabel_n^{\phi} 
\xrightarrow[n\to\infty]{} 
\delta_{\mu,\phi}
,\qquad \phi\in\{\mu,\nu\},
\eeqn
where $\delta_{\mu,\phi}$ is the Kronecker delta.
So to prove the claim it suffices to show that we can choose $\nu \gets \nu(\mu,\Alg)$ such that \eqref{R_munu_lim} does not hold.

Given a labeled sample $S_n =(\bX_n,\bY_n) \sim (\mix_{\phi})^n$ with $\phi\in\{\mu,\nu\}$, let
$n_1 \gets n_1(\bY_n)=\sum_{i=1}^n Y_i$
%\qquad\text{and}\qquad
and
$n_0 \gets  n-n_1$
%\eeq
%\beq
%n_1 \gets n_1(\bY_n)=\sum_{i=1}^n Y_i
%\qquad\text{and}\qquad
%n_0 \gets  n-n_1
%\eeq
be the random number of samples in $S_n$ with labels $1$ and $0$ respectively, and let $\bX_n^0\in\X^{n_0}$ and $\bX_n^1\in\X^{n_1}$ be the corresponding instances in $\bX_n$.
For notational simplicity we write $\bX_n = (\bX_n^0,\bX_n^1)$ where it is understood that the embedding of $\bX_n^0$ and $\bX_n^1$ in $\bX_n$ is in accordance with $\bY_n$.
Note that $\bY_n\sim (\text{Bernoulli}\!\left(\frac{2}{3})\right)^n$ irrespectively of $\mu$ and $\phi$.
In addition, given $\bY_n$ we have that $\bX_n^0$ and $\bX_n^1$ are independent and $\bX_n^0|\bY_n \sim \mu^{n_0}$ and $\bX_n^1|\bY_n \sim \phi^{n_1}$.
We decompose
\beqn
\nonumber
\muhlabel_n^{\phi} &=& \E_{S_n\sim (\mix_{\phi})^n} \left[ \E_{X\sim\mu}\left[\hh_{S_n}(X)\right] \right]
\\
\nonumber
&=&
\E_{\bY_n}\;
\E_{\bX_n | \bY_n} \left[ \E_{X\sim\mu}\left[\hh_{(\bX_n,\bY_n)}(X)\right]
 \right]
\\
\label{eq:L_n_phi}
& = &
\E_{\bY_n}
\E_{\bX_n^1\sim\phi^{n_1}}
\E_{\bX_n^0\sim\mu^{n_0}}
 \left[ \E_{X\sim\mu}\left[\hh_{((\bX_n^0,\bX_n^1),\bY_n)}(X)\right]
 \right].
\eeqn

Towards applying Lemma \ref{lem:homogeneous}, we first need to translate our reasoning about a random vector $\bX= (X_1,\dots,X_k)\sim\phi^k$ with $k\in\N$ into reasoning about the random set of its distinct elements, $\vectoset{\bX} \gets \bigcup_{i=1}^{k} \{X_i\}$.
Since $\phi$ vanishes on singletons, all instances in $\bX$ are distinct with probability one,
\beqn
\label{eq:|D|=k}
\P_{\bX\sim\phi^k}\left[|\vectoset{\bX}| = k\right] = 1.
\eeqn
Fixing an ordering on $\X$, for any finite set $\vectosetsym=\{w_1,\dots,w_k\}\in[\X]^k$, denote by $\Pi(\vectosetsym)$ the distribution over vectors $\bX' = (w_{\pi(1)},\dots,w_{\pi(k)}) \in \vectosetsym^{k}$ as induced by a random permutation $\pi$ of the instances in $\vectosetsym$.
Then, by \eqref{|D|=k} and the fact that $\phi^k$ is a product measure, we have that for any measurable function $f:\X^{k}\to[0,1]$ the following symmetrization holds,
\beqn
\label{eq:gen_f}
\E_{\bX\sim\phi^{k}}[f(\bX)] = \E_{\bX\sim\phi^{k}}
\left[
\E_{\bX'\sim\Pi(\vectoset{\bX})}[f(\bX')]
\;\middle|\;
|\vectoset{\bX}|=k
\right].
\eeqn
For every $\bY_n\in \{0,1\}^n$ define $F_{\bY_n}:[\X]^{n_1} \to \R$ by
\beq
F_{\bY_n}(\vectosetsym) = 
\E_{\bX^1\sim\Pi(\vectosetsym)}
\E_{\bX_n^0\sim\mu^{n_0}}\left[\E_{X\sim\mu}[\hh_{((\bX_n^0,\bX^1),\bY_n)}(X)]
\right]
,\qquad \vectosetsym\in[\X]^{n_1}.
\eeq
In the case of randomized $\Alg$, we also include an innermost conditional expectation over the value of $\hh_{((\bX_n^0,\bX^1),\bY_n)}(X)$ given $\bX_n^0,\bX^1,\bY_n,X$.
Putting this in \eqref{L_n_phi} while using \eqref{|D|=k} and \eqref{gen_f},
\beq
\muhlabel_n^{\phi} =
\E_{\bY_n}
\E_{\bX\sim\phi^{n_1}}
\left[ 
 F_{\bY_n}(\vectoset{\bX})
 \;\middle|\; |\vectoset{\bX}|=n_1\right].
\eeq
By the choice of $\mu$, Lemma \ref{lem:homogeneous} implies there exist $C_{\bY_n}\in\R$ and $U_{\bY_n}\subseteq \X$ with $\mu(U_{\bY_n})=1$ such that $U_{\bY_n}$  is {homogeneous} for $F_{\bY_n}$, namely, $F_{\bY_n}(\vectosetsym) = C_{\bY_n},\forall \vectosetsym \in [U_{\bY_n}]^{n_1}$.
Let
\beqn
\label{eq:U}
U = \bigcap_{n\in\N} \bigcap_{\bY_n\in\{0,1\}^n} U_{\bY_n}.
\eeqn
Then $U$ is simultaneously homogeneous for all $\{F_{\bY_n}\}$,
\begin{equation}
\label{eq:homog_F_Y}
F_{\bY_n}(\vectosetsym) = C_{\bY_n},
\qquad \forall n\in\N,\;\; \forall \bY_n\in\{0,1\}^n, \;\; \forall \vectosetsym\in [U]^{n_1}.
\end{equation}
In addition, by Lemma \ref{lem:measure_1_intersect} in the supplementary material \cite{HKSW_sup}, $\mu(U)=1$.

We are now in position to choose $\nu \gets \nu(\mu,\Alg)$.
By Lemma \ref{lem:measure_card}, we may split $U$ in \eqref{U} into two disjoint sets $B$ and $U\setminus B$ such that $|B|=|U\setminus B|=|\X|$.
Since $\mu$ is two-valued, we may assume without loss of generality that $\mu(B)=0$ (so $\mu(U\setminus B)=1$).
Since $|B|$ is a two-valued measurable cardinal, there exists a two-valued witnessing measure $\nu'$ on $(B,2^B)$ with $\nu'(B)=1$.
Extend $\nu'$ to a measure $\nu$ over all $\Borel$ by $\nu(A) = \nu'(A \cap B), \forall A\subseteq\X$.
Then, $\nu \neq \mu$ and $\nu(U)=\mu(U)=1$.
By the last equality, for $\phi\in\{\mu,\nu\}$ and $\forall k\in\N$,
%\beq
%\label{eq:distinct_mu}
$
\Pr_{\bX \sim \phi^{k}}
\left[
\vectoset{\bX} \in [U]^{k}
\;\middle|\; |\vectoset{\bX}|=k
\right]=1.
$
%\eeq
So, for $\phi\in\{\mu,\nu\}$,
\beqn
\nonumber
\muhlabel_n^{\phi} &=&
\E_{\bY_n}\E_{\bX\sim\phi^{n_1}} \big[ 
 F_{\bY_n}(\vectoset{\bX})
 \;\big|\; |\vectoset{\bX}|=n_1\big]
 \\
 \nonumber
 &=&
 \E_{\bY_n}\E_{\bX\sim\phi^{n_1}} \big[ 
 F_{\bY_n}(\vectoset{\bX})
 \;\big|\; |\vectoset{\bX}|=n_1 \; \wedge \; 
\vectoset{\bX} \in [U]^{n_1} \big]
\\
&=& 
\nonumber
 \E_{\bY_n}\E_{\bX\sim\phi^{n_1}} \big[ 
 \,C_{\bY_n}
 \;\big|\; |\vectoset{\bX}|=n_1 \; \wedge \; 
\vectoset{\bX} \in [U]^{n_1} \big]
 \\
 \nonumber
 \label{eq:E_C_Y_n}
 &=& 
 \E_{\bY_n} \left[C_{\bY_n}\right],
\eeqn
where we used \eqref{homog_F_Y} and the fact that $C_{\bY_n}$ does not depend on $\bX$. 
Since $\E_{\bY_n}[C_{\bY_n}]$ is independent of $\phi$, we conclude that
$\muhlabel_n^{\mu} = \muhlabel_n^{\nu}$ for all $n\in\N$.
However by \eqref{R_munu_lim}, for $\Alg$ to be \Bcstn\ on $\mix_{\mu}$ and $\mix_{\nu}$ we must have
$\muhlabel_n^{\mu}\xrightarrow[n\to\infty]{} 1$ and $\muhlabel_n^{\nu} \xrightarrow[n\to\infty]{} 0$.
%\beq
%%\label{eq:R_mumu_lim_1}
%\muhlabel_n^{\mu}
%\xrightarrow[n\to\infty]{} 1
%\qquad\text{and}\qquad
%\muhlabel_n^{\nu}
%\xrightarrow[n\to\infty]{} 0.
%\eeq
Thus $\Alg$ cannot be \Bcstn\ on both $\mix_{\mu}$ and $\mix_{\nu}$.
In particular, \eqref{lb_ineq} holds with $\eps=1/4$.

\section{Discussion}
\label{SEC:disc}
We have exhibited a computationally efficient multiclass learning algorithm, \newname, that is universally strongly \Bcstn\ (UBC) in all essentially separable (ES) metric spaces.
In contrast, we showed that in non-\essep\ spaces, no algorithm can be UBC.
As such, \newname{} is optimistically universal (in the terminology of \cite{DBLP:journals/corr/Hanneke17}) --- it is universally  \Bcstn\ in all metric spaces that admit such a learner.
We note that in this work, we do not study the rates of decay of the excess risk, leaving this challenging open problem for future study.

\begin{figure}%
\includegraphics[width=.3\columnwidth]{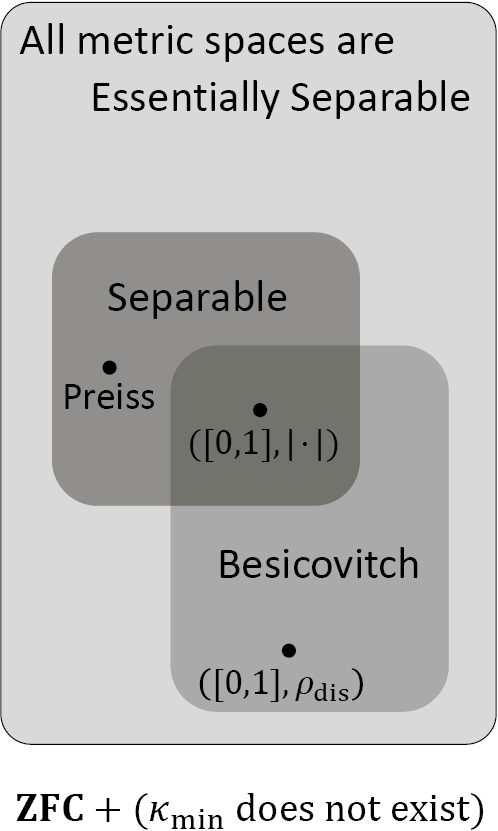}%
\hfill
\includegraphics[width=.3\columnwidth]{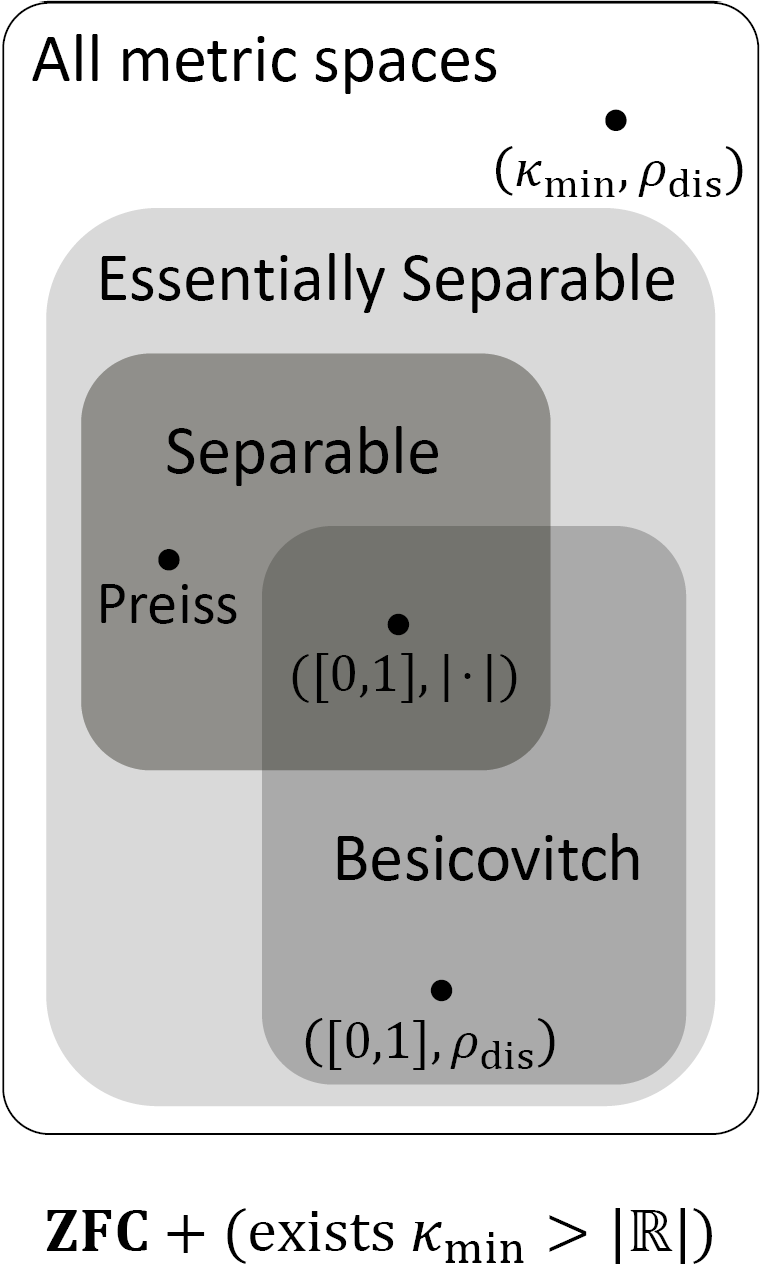}%
\hfill
\includegraphics[width=.3\columnwidth]{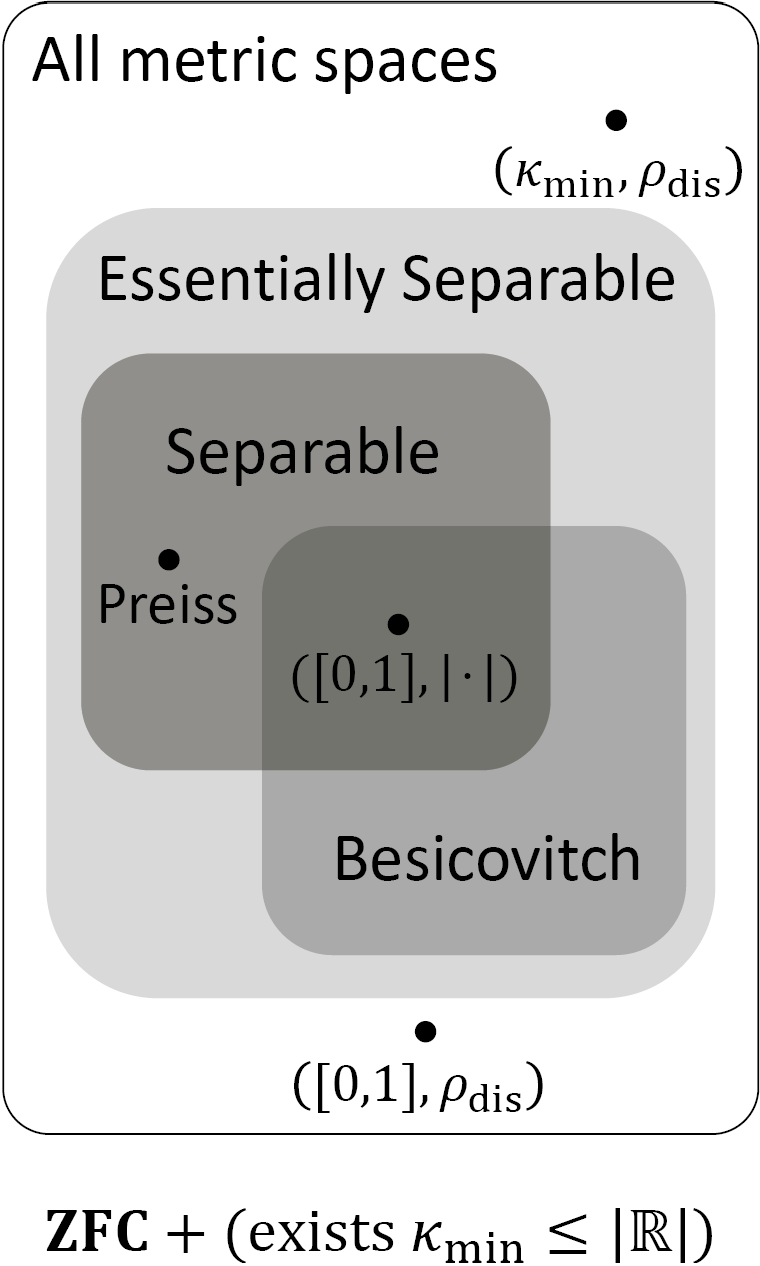}%
\caption{The classes of metric spaces discussed in this paper and their inclusion relationships in the three cases where: no \RVM\ exists (left); minimal \RVM\ is $\lrvmc>\cont$ (middle); and minimal \RVM\ is $\lrvmc\leq\cont$ (right). All three cases are believed to be valid extensions of ZFC.
The metric space $(\lrvmc,\rho_{\text{dis}})$ corresponds to a discrete one of cardinality $\lrvmc$; it is not \essep\ but any discrete metric space of cardinality $<\lrvmc$ is \essep.
The shaded area named ``Besicovitch'' and the specific metric space ``Preiss'' are as discussed in the text.}%
\label{FIG:classes_diagram}%
\end{figure}

By definition, any separable metric space is \essep.
As discussed in \secref{intro}, consistency of NN-type algorithms in general separable metric spaces was studied in \cite{MR2327897,MR2235289,MR2654492,cerou2006nearest,MR1366756, forzani2012consistent}.
In particular, in \cite{MR2327897,cerou2006nearest,forzani2012consistent}, a characterization of the metric spaces in which an algorithm is universally \Bcstn\ was given for several such algorithms, in terms of Besicovitch-type conditions.
As a notable example, it is shown in \cite{cerou2006nearest} that for any separable metric space $\X$, a sufficient condition for the $k$-NN algorithm (with an appropriate choice of the number of neighbors $k$) to be \Bcstn\ for a distribution $\bmu$ over $\X\times\{0,1\}$ is that for all $\eps>0$,
\begin{equation}
\label{eq:besic_cerou}
\lim_{r\to 0^+}
\P
\left\{ 
%x\in\supp(\mu):
\frac{1}{\mu(B_r(X))}  \int_{B_r(X)} |\eta(z) - \eta(X)| \diff\mu(z)
> \eps
\right\}
= 0,
\end{equation}
%\begin{equation}
%\label{eq:besic_cerou}
%\lim_{r\to 0}
%\mu
%\left\{ x\in
%\supp(\mu)
%:
%\frac{1}{\mu(B_r(x))}  \int_{B_r(x)} |\eta(z) - \eta(x)| \diff\mu(z)
%> \eps
%\right\}
%= 0,
%\end{equation}
where $\mu$ is the marginal of $\bmu$ over $\X$ and $\eta(x) \gets\P(Y=1 \gn X=x)$.
%, and $\supp(\mu)=\{x\in\X:\mu(B_r(x))>0,\forall r>0\}$ is the support of $\mu$.
It is also shown in \cite{cerou2006nearest} that in the realizable case, where $\eta(x)\in\{0,1\}$ for all $x\in X$, a violation of \eqref{besic_cerou} implies that $k$-NN is inconsistent.
Say that a metric space satisfies the \emph{universal Besicovitch condition} if \eqref{besic_cerou} holds \emph{for all} measures $\bmu$ over the Borel $\sigma$-algebra.
By Besicovitch's density theorem \cite[\S 472]{fremlin2000measure}, the metric space $(\R^d,\nrm{\cdot}_2)$ --- and more generally, any finite-dimensional normed space --- satisfies this condition, so $k$-NN is \UBC\ on such spaces. 
In contrast, in infinite-dimensional separable spaces, such as $\ell_2$, a violation of \eqref{besic_cerou} can occur \cite{preiss1979invalid,MR609946,MR1974687}. One such example is the separable metric probability space studied in \cite{DBLP:conf/nips/KontorovichSW17}, building upon a construction of Preiss \cite{preiss1979invalid}.
While the $k$-NN algorithm is provably not \UBC\ in this space, \newname{} is.
As far as we know, \newname{} is the first algorithm known to be \UBC\ (weakly or strongly) in any separable metric space.
%We are currently unaware of any other algorithm that achieves (weak or strong) universal Bayes consistency across all separable metric spaces, except some (unpublished) close variants of \newname, such as the one discussed in \remref{cross_validation}.

As discussed in \secref{RVMC_PRE}, the essential separability of non-separable metric spaces is believed to depend on set-theoretic axioms that are independent of ZFC, and in particular on the cardinality of the minimal \RVM, $\lrvmc$:
a metric space is non-\essep\ if and only if it contains a discrete subset of cardinality $\lrvmc$.
\figref{classes_diagram} gives a pictorial illustration of the possible relationships between the following types of metric spaces: separable, (uniform) Besicovitch, ES, and all spaces, depending on the set-theoretic model. 
If one adopts a model in which no \RVM\ exist, then any discrete subspace of a metric space admits only trivial, purely-atomic measures. In this case, abbreviated as $\textsc{ZFC}+(\lrvmc \text{ does not exist})$ in the left panel of \figref{classes_diagram}, all metric spaces are \essep, and \newname{} is \UBC\ on any metric space.
Alternatively, if one adopts a set-theoretic model in which an \RVM\ exists, then discrete subspaces of $\X$ of cardinality $\geq \lrvmc$ admit also non-trivial measures. As shown in \secref{UBC_IMPOSSIBLE},  such measures exclude the possibility of a \UBC\ algorithm.
The nature of the non-trivial measures, being atomless or purely atomic, depends on whether $\lrvmc>\cont$ or $\lrvmc\leq\cont$, which are illustrated on the middle and right panels of \figref{classes_diagram} respectively.

Lastly, we note that our argument for the impossibility of \UBC\ in non-\essep\ metric spaces is based solely on the real-valued measurability of the cardinality of discrete subspaces of $\X$. This raises a natural question: Assuming no cardinal is real-valued measurable, are there any topological spaces (which by the results above must be non-metric) in which no \UBC\ algorithm exists?

To summarize, in this work we provided the first multiclass learning algorithm that is universally \Bcstn\ in any metric space where
such an algorithm exists. Moreover, we provided a characterization of these metric spaces. The study of learnability in general spaces is fundamental, and provides many open questions for future research. 

%\begin{supplement}[id=suppA]%
%  \sname{Supplementary Material}  %
%%  \stitle{Proofs of Lemmas.}
%  \slink[doi]{COMPLETED BY THE TYPESETTER}%
%  \sdatatype{.pdf}%
%  \sdescription{In the supplementary material file we provide the proofs of Lemmas \ref{lem:richness}, \ref{lem:sublinear_comp},
%  \ref{lem:missing_mass}, \ref{lem:nonsep-eps-sep}, \ref{lem:homogeneous}, \ref{lem:bayes_optimal}, as well as the proofs of the auxiliary Lemmas \ref{lem:dense_cont}, \ref{lem:borel_subset}, \ref{lem:measure_1_intersect}, \ref{lem:two_valued_properties}, \ref{prop:total_order}.}
%\end{supplement}

\paragraph{Acknowledgments}
We thank Vladimir Pestov for sharing with us his proof of the existence of a measurable total order.
% in any metric space.
We also thank Robert Furber, Iosif Pinelis, Menachem Kojman, and Roberto Colomboni for 
%set-theoretic consultations and 
helpful discussions.

\newcommand{\BoundAbrv}{B}

\appendix

\section{Auxiliary lemmas for Section \ref{SEC:COMPRESSION_SCHEME}}
\label{ap:proofs}

%\subsection{Proof of \lemref{richness}}
\subsection{Lipschitz functions are dense in $L^1(\mu)$}
The following denseness result is used in proving \lemref{richness}. We believe this fact to be classical, 
but were unable to locate an appropriate citation, 
so for completeness we include a brief proof.

%\todo{RW: following lemma is new}
\begin{lemma}
\label{lem:dense_cont}
For every metric probability space
$(\X,\rho,\mu)$, 
the set of Lipschitz functions $f:\X\to\R$ is dense in 
$L^1(\mu)= \{f: \int {\abs{f}} \diff\mu < \infty\}$.
In other words, for any $\eps>0$ and $f\in L^1(\mu)$, 
there is an $L < \infty$ and an $L$-Lipschitz function $g\in L^1(\mu)$ such that
\(
\int \abs{f-g} \diff\mu < \eps.
\)
\end{lemma}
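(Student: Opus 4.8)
The standard route is to reduce to simpler and simpler function classes, at each stage approximating in $L^1(\mu)$. First I would recall that simple functions are dense in $L^1(\mu)$, so it suffices to approximate an indicator $\ind_E$ of a measurable set $E$ with $\mu(E)<\infty$ by a Lipschitz function. By outer regularity of the finite Borel measure $\mu$ on a metric space --- or, more elementarily, by the fact that the Borel $\sigma$-algebra is generated by open sets together with a monotone/Dynkin-class argument --- I can further reduce to the case where $E$ is open, and then, exhausting $E$ from inside, to the case where $E$ is open with $\mathrm{dist}(\cdot, \X\setminus E)$ bounded below on a large-measure piece. More directly: for an open set $U$ and $\tau>0$ define the Urysohn-type function
\[
g_\tau(x) := \min\!\left\{1,\ \tfrac{1}{\tau}\,\rho\big(x,\X\setminus U\big)\right\},
\]
which is $(1/\tau)$-Lipschitz (as $x\mapsto\rho(x,A)$ is $1$-Lipschitz and $\min\{1,\cdot\}$ and scaling preserve Lipschitzness), takes values in $[0,1]$, vanishes off $U$, and equals $1$ on $\{x : \rho(x,\X\setminus U)\ge\tau\}$. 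As $\tau\downarrow 0$, $g_\tau\uparrow \ind_U$ pointwise, so by dominated convergence ($0\le g_\tau\le \ind_U\le \ind_E$, and $\ind_E\in L^1$ when $\mu(E)<\infty$; here $\mu$ is a probability measure so this is automatic) $\int|\ind_U - g_\tau|\,d\mu\to 0$. Choosing $\tau$ small makes this $<\eps$.

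Assembling the reduction: given $f\in L^1(\mu)$ and $\eps>0$, pick a simple function $s=\sum_{k=1}^m c_k\ind_{E_k}$ with $\int|f-s|\,d\mu<\eps/2$; then approximate each $\ind_{E_k}$ by an open set $U_k\supseteq E_k$ with $\mu(U_k\setminus E_k)$ small (outer regularity), and each $\ind_{U_k}$ by a Lipschitz $g_k$ as above with $\int|\ind_{U_k}-g_k|\,d\mu$ small; set $g=\sum_k c_k g_k$, which is Lipschitz (finite sum of Lipschitz functions) and lies in $L^1(\mu)$ (bounded, finite measure space). A triangle-inequality bookkeeping with the weights $|c_k|$ gives $\int|f-g|\,d\mu<\eps$, with $L=\sum_k|c_k|/\tau_k$.

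The only genuine technical point --- and hence the step I would be most careful about --- is the outer regularity of $\mu$ on a general metric space, i.e.\ that every Borel set can be approximated in measure from outside by open sets (equivalently, from inside by closed sets). For a finite Borel measure on a metric space this is classical: the collection of sets enjoying this two-sided approximation property is a $\sigma$-algebra containing the closed sets (a closed set $F$ is the decreasing intersection of the open sets $\{x:\rho(x,F)<1/n\}$), hence contains all Borel sets. I would either cite this or include the one-paragraph Dynkin-class argument. Everything else --- density of simple functions, Lipschitzness of $x\mapsto\min\{1,\rho(x,A)/\tau\}$, and the dominated-convergence limit --- is routine, and no separability, local compactness, or finite-dimensionality of $\X$ is used anywhere, which is exactly the point needed for \lemref{richness}.
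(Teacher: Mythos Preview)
Your proposal is correct and follows essentially the same approach as the paper: reduce to indicators via simple functions, invoke regularity of finite Borel measures on metric spaces, and build a Lipschitz ``bump'' from distance-to-set functions. The only cosmetic differences are that the paper approximates from \emph{inside} by a closed set $F\subseteq A$ and then uses the Urysohn-type quotient $g(x)=\dfrac{\rho(x,F_r)}{\rho(x,F_r)+\rho(x,F)}$ (with $F_r=\X\setminus\{x:\rho(x,F)<r\}$), whereas you approximate from \emph{outside} by an open $U\supseteq E$ and use the truncated distance $\min\{1,\rho(x,\X\setminus U)/\tau\}$; both are standard and equivalent in strength.
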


\begin{proof}
The proof follows closely that of a weaker result from \cite[Section 37, Theorem 2]{kolmogorov:70}.
It relies on the fact that, for any probability measure $\mu$ on a Borel $\sigma$-algebra $\Borel$, $\mu$ is \emph{regular} \cite[Theorem 17.10]{kechris:95}.
In particular, for every $A \in \Borel$, $\mu(A) = \sup\limits_{F \in \mathcal{F} : F \subseteq A} \mu(F)$, where $\mathcal{F}$ is the \emph{closed} sets (under the topology that generates $\Borel$).

For any $A \in \Borel$ and $\eps > 0$, regularity implies that there is an $F \in \mathcal{F}$ with $F \subseteq A$ and $\mu(A \setminus F) < \eps/2$.
Now denote $G_{r} = \bigcup_{x \in F} B_{r}(x)$.
Since $\X \setminus F$ is open, for any $x^{\prime} \notin F$, there is an $r > 0$ with $B_{r}(x^{\prime}) \subseteq \X \setminus F$, and hence $x^{\prime} \notin G_{r}$.  Together with monotonicity of $G_{r}$ in $r$, this implies $G_{r} \setminus F \to \emptyset$  as $r \to 0$.
Thus, by continuity of probability measures, there is an $r > 0$ such that $\mu(G_{r} \setminus F) < \eps/2$.
Furthermore, for this $r$, $G_{r} \supseteq F$ and $G_{r}$ is a union of open sets, hence open.
Thus, denoting $F_{r} = \X \setminus G_{r}$, $F_{r}$ is a closed set, disjoint from $F$, and (by definition of $G_{r}$) satisfies $\inf_{x \in F, x^{\prime} \in F_{r}} \dist(x,x^{\prime}) \geq r > 0$.

Now define  
\begin{equation*}
g_{A,\eps}(x) = \frac{ \inf_{x^{\prime} \in F_{r}} \dist(x^{\prime},x) }{ \inf_{x^{\prime} \in F_{r}} \dist(x^{\prime},x) + \inf_{x^{\prime} \in F} \dist(x^{\prime},x) }.
\end{equation*}
In particular, note that $g_{A,\eps}(x) = 1$ for $x \in F$, $g_{A,\eps}(x) = 0$ for $x \in F_{r}$, and every other $x$ has $g_{A,\eps}(x) \in [0,1]$.  
This implies $\{ x : g_{A,\eps}(x) < 1, \ind_{A}(x) = 1 \} \subseteq A \setminus F$ and $\{ x : g_{A,\eps}(x) > 0, \ind_{A}(x) = 0 \} \subseteq (\X \setminus F_{r}) \setminus A = G_{r} \setminus A \subseteq G_{r} \setminus F$, so that 
\begin{equation*}
\int \abs{\ind_{A} - g_{A,\eps}} \diff\mu 
\leq \mu( A \setminus F ) + \mu( G_{r} \setminus F ) 
< \eps.
\end{equation*}
Furthermore, since $F$ and $F_{r}$ are $r$-separated, $g_{A,\eps}$ is $\frac{1}{r}$-Lipschitz, and since $g_{A,\eps}$ is bounded we also have $g_{A,\eps} \in L^1(\mu)$.
Thus, we have established the desired result for indicator functions.

To extend this to all of $L^1(\mu)$, we use the ``standard machinery'' technique.
By definition of Lebesgue integration, for any $f \in L^1(\mu)$ and $\eps > 0$, 
there exists a finite simple function $f_{\eps}$ with $\int \abs{f-f_{\eps}} \diff\mu < \eps/2$: 
that is, there is an $n \in \nats$, $a_{1},\ldots,a_{n} \in \reals$, and $A_{1},\ldots,A_{n} \in \Borel$ with $f_{\eps}(x) = \sum_{i=1}^{n} a_{i} \ind_{A_{i}}(x)$.
Now let $a^{*} = \max\{\abs{a_{1}},\ldots,\abs{a_{n}},1\}$ and denote $\eps^{\prime} = \eps / (2 n a^{*})$.
By the above, for each $i \in \{1,\ldots,n\}$, there exists a Lipschitz function $g_{A_{i},\eps^{\prime}} \in L^1(\mu)$ with $\int \abs{ \ind_{A_{i}} - g_{A_{i},\eps^{\prime}} } \diff\mu < \eps^{\prime}$.
Therefore, denoting $g = \sum_{i=1}^{n} a_{i} g_{A_{i},\eps^{\prime}}$, we have 
\begin{equation*}
\int \abs{ f_{\eps} - g } \diff\mu 
\leq a^{*} \sum_{i=1}^{n} \abs{ \ind_{A_{i}} - g_{A_{i},\eps^{\prime}} } \diff\mu 
< a^{*} n \eps^{\prime} 
= \eps/2.
\end{equation*}
Together we have that $\int \abs{ f - g } \diff\mu \leq \int \abs{ f - f_{\eps} } \diff\mu + \int \abs{ f_{\eps} - g } \diff\mu < \eps$.
Since a finite linear combination of Lipschitz functions is still Lipschitz, this establishes the claim for all $f \in L^1(\mu)$.
\end{proof}

\subsection{Proof of \lemref{KSUbound}} \label{ap:KSUbound}
First, note that $h_{\tS_n} = h_{S_n(\bm i , \bm j)}$ may be expressed as the value of a reconstruction function $h_{\tS_n} = \Phi(S_n(\bm i),S_n(\bm j))$, where the function $\Phi$ generally takes as arguments two equal-length sequences $S = \{(x_i,y_i)\}_{i=1}^{m/2} \in (\X\times\Y)^{m/2}$ and $S' = \{(x'_{i'},y'_{i'})\}_{i'=1}^{m/2} \in (\X\times\Y)^{m/2}$ (for any even $m \in \nats$).
The first sequence $S$ is used to reconstruct the Voronoi partition $\Vor(\{x_i\}_{i=1}^{m/2}) = \left\{ V_1(\{x_i\}_{i=1}^{m/2}),\ldots,V_{m/2}(\{x_i\}_{i=1}^{m/2}) \right\}$ and the second sequence $S'$ is used to specify the label predicted in each Voronoi cell: 
by construction, each cell $V_j(\{x_i\}_{i=1}^{m/2})$ contains exactly \emph{one} of the $x'_{i'}$ points in $S'$, so $h := \Phi(S,S')$ is defined as the unique function that, for every $j \in [m/2]$, $h(x) = y'_{i'}$ for every $x \in V_j(\{x_i\}_{i=1}^{m/2})$;
for completeness, $h$ may be defined as an arbitrary measurable function in the case that not every 
$V_j(\{x_i\}_{i=1}^{m/2})$ contains exactly one of the $x'_{i'}$ points in $S'$.
We may then note that, for any permutations $\sigma, \sigma' : [m/2] \to [m/2]$, we have 
$$\Phi(\{(x_{\sigma(i)},y_{\sigma(i)})\}_{i=1}^{m/2},\{(x'_{\sigma'(i')},y'_{\sigma'(i')})\}_{i'=1}^{m/2})
= \Phi(\{(x_{i},y_{i})\}_{i=1}^{m/2},\{(x'_{i'},y'_{i'})\}_{i'=1}^{m/2}).$$
Thus, $\Phi$ is invariant to permutations of each of the two sequences.

Proceeding analogously to \cite{graepel2005pac}, we can use the above invariance of $\Phi$ to arrive at a bound $Q$ for $(\alpha,m)$-compressions which satisfies the required properties.
Specifically, for any even $m \in [n-2]$, let $\I_{n,m}$ denote the set of all subsets of $[n]$ of size $m/2$.
For any $I \in \I_{n,m}$, let ${\bm i}(I)$ denote the sequence of elements of $I$ enumerated in increasing order.
For any $I,I' \in \I_{n,m}$, define 
\begin{equation*} 
\hat{R}(I,I';S_{n}) := 
\frac{1}{n - |I \cup I'|} \sum_{i \in [n] \setminus (I \cup I')} \ind[ \Phi(S_n({\bm i}(I)),S_n({\bm i}(I')))(X_i) \neq Y_i ].
\end{equation*}
Note that any $(\alpha,m)$-compression $S'_{n}$ of $S_{n}$ has $h_{S'_{n}} = \Phi(S_n({\bm i}(I)),S_n({\bm i}(I')))$ for some $I,I' \in \I_{n,m}$.
Thus, for any $\delta \in (0,1)$, letting 
\begin{equation*}
\BoundAbrv(a,b) := a + \sqrt{\frac{8a}{b} \ln\!\left(\frac{2n|\I_{n,m}|^2}{\delta}\right)} +  \frac{9}{b} \ln\!\left(\frac{2n|\I_{n,m}|^2}{\delta}\right),
\end{equation*}
we have, for an even $m \leq n-2$
%\todo{Sivan: for the empirical Bernstein bound to hold, must have $b \geq 2$ in $B(a,b)$}
\begin{align*}
& \P\!\left[ |S'_{n}| = m/2 \text{ and } 
\err(h_{S'_{n}}) > \BoundAbrv\!\left(\tfrac{n}{n-m} \serr(h_{S'_{n}}), n-m \right) \right]
\\ & \leq \P\!\left[ \exists I,I' \!\in\! \I_{n,m} : 
\err(\Phi(S_n({\bm i}(I)),S_n({\bm i}(I')))) \!>\! \BoundAbrv\!\left( \hat{R}(I,I';S_{n}), n \!-\! |I\!\cup\! I'| \right)
\right]
\\ & \leq \sum_{I,I' \in \I_{n,m}} \!\P\!\left[ \err(\Phi(S_n({\bm i}(I)),S_n({\bm i}(I')))) \!>\! \BoundAbrv\!\left(\hat{R}(I,I';S_{n}),n \!-\! |I \!\cup\! I'|\right)
\right]
\leq \frac{\delta}{n},
\end{align*}
where the last inequality is due to the empirical Bernstein inequality \cite{MaurerPo09}.
%\todo{Sivan: Added reference and fixed constants in B to match}
\iffalse
the  Bernstein inequality 
(which implies the above empirical form of 
the Bernstein 
inequality by solving a resulting quadratic 
expression).  
\todo{SH: maybe we should cite something that has the explicit derivation of the empirical Bernstein inequality?}
\fi

Taking the union bound over the $n$ possible values of $|S_n'|$, we get
\[
\P\!\left[ \err(h_{S'_{n}}) > \BoundAbrv\!\left(\tfrac{n}{n-m} \serr(h_{S'_{n}}), n-m \right) \right] \leq \delta.
\]

Noting that 
$|\I_{n,m}|^{2} = \binom{n}{m/2}^{2} \leq \left(\frac{2 e n}{m}\right)^{m}$, we have that for $Q$ as defined in \eqref{KSUbound},
which is given by
\begin{align*}
%\label{eq:KSUbound}
Q(n,\alpha,m,\delta) := 
& \frac{n}{n-m} \alpha + 
\sqrt{\frac{8 (\frac{n}{n-m})\alpha\big(m \ln( 2 e n / m ) + \ln(2n/\delta)\big) }{n-m}} 
\\ & + \frac{9\big( m \ln( 2 e n / m ) + \ln(2n/\delta)\big)}{n-m},
\notag
\end{align*}
it holds that
\[
\BoundAbrv\!\left(\tfrac{n}{n-m} \alpha, n-m \right) \leq Q(n, \alpha, m, \delta).
\]
Thus, $Q$ satisfies property $\bQ1$.
Furthermore, property \bQ2 (monotonicity in $\alpha$ and in $m$) can also be easily verified from the 
definition in \eqref{KSUbound}. 
For property \Qthreeb, 
observe that for $m_n = o(n)$ and
a sufficiently large $n$,
$\frac{m_n\log(n/m_n)}{n-m_n} \leq 2\frac{m_n}{n}\log(\frac{m_n}{n})$. Thus,
since $\frac{m_n}{n}\rightarrow 0$, we have
$\frac{m_n\log(n/m_n)}{n-m_n} \rightarrow 0$. \Qthreeb\ is thus satisfied via
any convergent series $\sum_{n=1}^\infty \delta_n < \infty$ such that
$\delta_n= e^{-o(n)}$; note that this requires the decay of $\delta_n$ to be
sufficiently slow. 

We considered above the case of $m \in [n-2]$. From the definition of $Q$, the required properties trivially hold also for larger values of $m$.

\subsection{Proof of \lemref{richness}}
\label{app:richness}
%\begin{proof}[Proof of \lemref{richness}] 
Let $\eta_y: \X \to [0,1]$ be the conditional probability function for label $y\in\Y$,
\beq
\eta_y(x) = \P( Y = y \gn X = x),
\eeq
which is measurable by \cite[Corollary B.22]{MR1354146}.
Define $\teta_y: \X \to [0,1]$ as $\eta_y$'s conditional expectation function with respect to $(\SP,\mss)$: For $x$ such that \mbox{$I_\SP(x) \cap\mss \neq \emptyset$},
\beq
\teta_y(x) = \P(Y = y \gn X \in I_\SP(x) \cap\mss)
= \frac{\int_{I_\SP(x) \cap\mss} \eta_y(z) \diff\mu(z) }{\mu(I_\SP(x) \cap\mss)}.
\eeq
For other $x$, define $\teta_y(x) = \pred{y\text{ is lexicographically first}}$.
Note that $(\teta_y)_{y\in\Y}$ are piecewise constant on the cells of the restricted partition~$\SP \cap\mss$.
By definition, the Bayes classifier $h^*$ and the true majority-vote classifier $h^*_{\SP,\mss}$ satisfy
\beq
h^*(x) &=& \argmax_{y\in\Y} \eta_y(x),
\\
h^*_{\SP,\mss}(x) &=& \argmax_{y\in\Y} \teta_y(x).
\eeq
It follows that
\beq
&&\P( h^*_{\SP,\mss}(X) \neq Y  \gn X = x) - \P( h^*(X) \neq Y  \gn X = x)
\\
&&= \eta_{h^*(x)}(x) - \eta_{h^*_{\SP,\mss}(x)}(x) 
\\ && =  \max_{y\in\Y} \eta_y(x) - \max_{y\in\Y} \teta_y(x)
\\
&& \leq \max_{y\in\Y} |\eta_y(x) - \teta_y(x) |.
\eeq
By condition \myi\ in the lemma statement, $\mu(\X\setminus\mss)\leq \nu $.
Thus,
\beqn
\nonumber
\err(h^*_{\SP,\mss}) - R^* & = &
\P( h^*_{\SP,\mss}(X) \neq Y) - \P( h^*(X) \neq Y)
\\
\nonumber
& \leq & \mu(\X\setminus\mss) +  \int_{\mss} \max_{y\in\Y}|\eta_y(x) - \teta_y(x)| \diff\mu(x)
\\
\nonumber
& \leq & \nu +  \sum_{y\in\Y}\int_{\mss} |\eta_y(x) - \teta_y(x)| \diff\mu(x). 
\eeqn
Let $\Y_\nu \subseteq \Y$ be a finite set of labels such that $\P[Y \in \Y_\nu] \geq 1-\nu$. Then 
\beqn
\label{eq:err_R_eta_teta}
\err(h^*_{\SP,\mss}) - R^* \leq 2\nu +  \sum_{y\in\Y_\nu}\int_{\mss} |\eta_y(x) - \teta_y(x)| \diff\mu(x).
\eeqn
To bound the integrals in (\ref{eq:err_R_eta_teta}), we approximate $(\eta_y)_{y\in\Y}$ with functions from the dense set of Lipschitz functions, applying \lemref{dense_cont} above.
%\todo{RW: here we use new denseness result \lemref{dense_cont}}
Since $\eta_y \in L^1(\mu)$ for all $y\in\Y_\nu$ and $|\Y_\nu|<\infty$,
Lemma~\ref{lem:dense_cont} implies that there are $|\Y_\nu|$ Lipschitz functions $(r_y)_{y\in\Y_\nu}$ such that
\beqn
\label{eq:eta_r_eps}
\max_{y\in\Y_\nu} \int_{\X} |\eta_y(x) - r_y(x)| \diff\mu(x)  \leq \nu/ |\Y_\nu|.
\eeqn
Similarly to $(\teta_y)_{y\in\Y_\nu}$, define the piecewise constant functions $(\tr_y)_{y\in\Y_\nu}$ by
\beq
\tr_y(x) = \E[r_y(X) \gn X \in I_\SP(x) \cap\mss ]
= \frac{{\ds\int_{I_\SP(x) \cap\mss} r_y(z) \diff\mu(z)} }{\mu(I_\SP(x) \cap\mss)}.
\eeq
We bound each integrand in (\ref{eq:err_R_eta_teta}) by
\beqn
\nonumber
&& |\eta_y(x) - \teta_y(x)| 
\\ 
\label{eq:abs_chain}
&& \quad \leq |\eta_y(x) - r_y(x)| + |r_y(x) - \tr_y(x)| 
+| \tr_y(x) - \teta_y(x)|.
\eeqn
The integral of the first term in (\ref{eq:abs_chain}) is smaller than $\nu/|\Y_\nu|$ by the definition of $r_y$ in (\ref{eq:eta_r_eps}).
For the integral of the third term in (\ref{eq:abs_chain}),
\beq
&& \int_{\mss} \abs{ \tr_y(x) - \teta_y(x)} \diff\mu(x)
\\
&&= \sum_{\sp\in\SP} 
\abs{\E[r_y(X) \pred{X \in \sp \cap\mss}] - \E[\eta_y(X)\pred{X \in \sp \cap\mss}]}
\\
&&=
\sum_{\sp\in\SP} \abs{\int_{\sp \cap\mss} r_y(x)\diff\mu(x) - \int_{\sp \cap\mss} \eta_y(x)\diff\mu(x)}
\\
&&=
\sum_{\sp\in\SP} \abs{\int_{\sp \cap\mss} (r_y(x)- \eta_y(x)) \diff\mu(x)}
\\
&&\leq  \int_{\mss} \abs{r_y(x) - \eta_y(x)} \diff\mu(x) \;\leq\; \nu/|\Y_\nu|.
\eeq
Finally, for the integral of the second term in (\ref{eq:abs_chain}),
we denote
\beq
\bSP = \set{\sp \cap \mss: \mu(\sp \cap \mss)\neq 0, \sp\in\SP}
\eeq
and note that
\beq
&& \int_{\mss} |r_y(x) - \tr_y(x)|\diff\mu(x)
\\
&& \quad = 
\sum_{\bsp\in\bSP} \int_{\bsp} \abs{r_y(x) - \frac{\E[r_y(X) \pred{X \in \bsp}]}{\mu(\bsp)}} \diff\mu(x)
\\
&& \quad = 
\sum_{\bsp\in\bSP}\frac{1}{\mu(\bsp)} 
\int_{\bsp} \abs{r_y(x) \mu(\bsp) - \E[r_y(X) \pred{X \in \bsp}] }
\diff\mu(x)
\\
&& \quad = 
\sum_{\bsp\in\bSP}\frac{1}{\mu(\bsp)} 
\int_{\bsp} \abs{r_y(x) \int_{\bsp}\diff\mu(z) -
\int_{\bsp} r_y(z) \diff\mu(z)}
\diff\mu(x)\\
&& \quad = 
\sum_{\bsp\in\bSP}\frac{1}{\mu(\bsp)} 
\int_{\bsp} \abs{ \int_{\bsp}(r_y(x) - r_y(z))\, \diff\mu(z)}
\diff\mu(x)
\\
&& \quad \leq 
\sum_{\bsp\in\bSP}
\frac{1}{\mu(\bsp)} 
\int_{\bsp} \int_{\bsp} \abs{r_y(x) -  r_y(z)} \diff\mu(x) \diff\mu(z).
\eeq
Since $|\Y_\nu|<\infty$ and any Lipschitz function is uniformly continuous on all of $\X$, the finite collection $\set{r_y:y\in\Y_\nu}$ is equicontinuous.
Namely, there exists a diameter $\beta = \beta(\nu)> 0$ such that for any $A\subseteq \X$ with $\diam(A) \leq \beta$,
$$\max_{y\in\Y_\nu}\abs{r_y(x) - r_y(z)} \leq \nu/|\Y_\nu|$$
for every $x,z\in A$  (note that $\beta(\nu)$ does not depend on $(\SP,\mss)$).
By condition \myii\ in the lemma statement, $\diam(\sp \cap\mss) \leq \beta$ for all $\sp\in\SP$.
Hence,
\beq
\frac{1}{\mu(\sp \cap\mss)} \int_{\sp\cap\mss} \int_{\sp\cap\mss} \abs{r_y(x) -  r_y(z)} \diff\mu(x) \diff\mu(z) \leq \frac{\nu}{|\Y_\nu|} \mu(\sp \cap\mss).
\eeq
Summing over all cells $\sp\in\SP$ with $\mu( \sp \cap\mss) \neq 0$, the integral of the second term in (\ref{eq:abs_chain}) satisfies
\beq
\int_{\mss} |r_y(x) - \tr_y(x)|\diff\mu(x) \leq \nu/|\Y_\nu|.
\eeq
Combining the bounds for the three terms,
\beq
\sum_{y\in\Y_\nu}\int_{\mss} |\eta_y(x) - \teta_y(x)| \diff\mu(x)
\leq 
\sum_{y\in\Y_\nu} \frac{3\nu}{|\Y_\nu|} = 3\nu.
\eeq
Applying this bound to (\ref{eq:err_R_eta_teta}), we conclude
\(
\err(h^*_{\SP,\mss}) - R^* \leq 5\nu.
\)
%\end{proof}

\subsection{Proof of \lemref{boundqd}}
\label{ap:boundqd}
%\begin{proof}[Proof of \lemref{boundqd}]
%\todo{RW: the proof of \lemref{boundqd} is essentially unchanged from KSW, except changing $\mathcal N_\g=\left\lceil \frac{\diam(\X)}{\g}\right\rceil^{\ddim}$ in KSW to the generally unknown $\Ng$ here.}
Let $\bm i = \bm i(\g) \in [n]^d$ be the set of indices in the net $\bm X = \bm X(\gamma)$ selected by the algorithm. %
Let ${\bm Y}^* \in \Y^d$ be the true majority-vote labels with respect to the restricted partition $\Vor(\bm X)\cap\UB_{2\g}(\bm X)$,
\beqn
\label{eq:Y_star}
({\bm Y}^*)_j = 
y^*(V_j \cap \UB_{2\g}(\bm X)),
\qquad j\in[d].
\eeqn
We pair $\bm X$ with the labels $\bm Y^*$ to obtain the labeled set
\beqn
\label{eq:Strue}
S_n(\bm i, *) := S_n(\bm i, {\bm Y}^*) = 
(\bm X,\bm Y^*)
\in (\X\times\Y)^d.
\eeqn
Note that conditioned on $\bm X$, $S_n(\bm i, *)$ does not depend on the rest of $S_n$. 

The induced $1$-NN classifier $h_{S_n(\bm i, *)}(x)$ can be expressed as 
$h_{\SP,\mss}^*(x) = y^*(I_\SP(x) \cap\mss)$
with $\SP = \Vor(\bm X)$
and $\mss=\UB_{2\g}(\bm X)$
(see \eqref{Strue2} for the definition of $h_{\SP,\mss}^*$). We now show that 
\beqn
\label{eq:err_R}
\missmass_\g(\bm X_n) \leq \frac{\eps}{10}
\quad \implies \quad
\err(h_{S_n(\bm i, *)})\leq R^* + \eps/2,
\eeqn
by showing that under the assumption $\missmass_\g(\bm X_n) \leq \frac{\eps}{10}$, the conditions of \lemref{richness} hold for $\Vor,\mss$ as defined above.
To this end, we bound the diameter of the partition $\Vor \cap\mss = \Vor \cap \UB_{2\g}(\bm X)$, and the measure of the missing mass $\mu(\X\setminus\mss) = \missmass_{2\g}(\bm X)$ under the assumption.

To bound the diameter of the partition $\Vor \cap \UB_{2\g}(\bm X)$, let $x \in V_j \cap  \UB_{2\g}(\bm X)$. Note that $V_j$ is the Voronoi cell centered at $x_{i_j} \in \bm X$. Then $\rho(x,x_{i_j}) = \min_{i \in \bm i} \rho(x,x_i)$ and, since $x \in \UB_{2\g}(\bm X)$, $\min_{i \in \bm i} \rho(x,x_i) \leq 2\gamma$. Therefore
\beq
\diam(\SP \cap\mss) = \max_j\diam(V_j \cap \UB_{2\g}(\bm X))\leq 4\g.
\eeq
To bound $\missmass_{2\g}(\bm X)$ under the assumption $\missmass_\g(\bm X_n) \leq \frac{\eps}{10}$,
observe that for all $z \in \UB_{\g}(\bm X_n)$,
there is some $i \in [n]$ such that $z \in B_\g(x_i)$.
For this $i$, there is some $j \in \bm i$ such that $x_i \in B_\g(x_j)$,
since $\bm X$ is a $\g$-net of $\bm X_n$. Therefore $z \in B_{2\g}(x_j)$.
Thus, $z \in \UB_{2\g}(\bm X)$. It follows that
$\UB_\g(\bm X_n) \subseteq \UB_{2\g}(\bm X)$, thus $\missmass_{2\g}(\bm X) \leq \missmass_\g(\bm X_n)$.
Under the assumption, we thus have $\missmass_{2\g}(\bm X) \leq \frac{\eps}{10}$.
Hence, by the choice of $\g=\g(\eps)$ in the statement of the lemma, \lemref{richness} implies \eqref{err_R}.

To bound $Q_n(\a_n(\g), \rns_n(\g))$, we consider the relationship between the hypothetical true majority-vote classifier $h_{S_n(\bm i, *)}$ and the actual classifier returned by the algorithm, $h_{S_n(\bm i, \bm \tbY)}$. Note that 
\beq
\a_n(\g) 
= \serr(h_{S_n(\bm i, \bm \tbY)}, S_n)
= \min_{\bm Y \in\Y^d}\serr(h_{S_n(\bm i, \bm Y)}, S_n)
\leq  \serr(h_{S_n(\bm i, *)}, S_n),
\eeq
and thus, from the monotonicity Property \bQ2 of $Q$, 
\beqn
Q_n(\a_n(\g), \rns_n(\g))
\leq
\label{eq:Q_Q}
Q_n(\serr(h_{S_n(\bm i, *)}, S_n), \rns_n(\g)).
\eeqn
Combining (\ref{eq:err_R}) and (\ref{eq:Q_Q}) we have that
\beq
&&
\left\{Q_n(\a_n(\g),\rns_n(\g))
>
R^* + \eps
\;\;\wedge\;\;
\missmass_\g(\bm X_n) \leq \frac{\eps}{10}
\;\;\wedge\;\;
\rns_n(\g)=d
\right\}
\\
&& \qquad \;\implies\;
\left\{ Q_n(\serr(h_{S_n(\bm i, *)}, S_n), d) > \err(h_{S_n(\bm i, *)}) + \frac{\eps}{2}  \;\;\wedge\;\; |\bm i|=d \right\}.
\eeq
Hence, for all $d \leq \Ng$,
\beqn
\nonumber
p_d &\leq& \P\left[
Q_n( \serr(h_{S_n(\bm i, *)}, S_n),d )
>
\err(h_{S_n(\bm i, *)}) + \frac{\eps}{2}
\;\wedge\;
|\bm i| = d
\right]
\\
\label{eq:sum-bound-lb}
&\leq& 
\P\left[
 \exists \bm i \in [n]^d
 : Q_n( \serr(h_{S_n(\bm i, *)}, S_n),d ) 
>
\err(h_{S_n(\bm i, *)}) +  \frac{\eps}{2}
\right].
\eeqn

To bound the last expression, let $\bm i\in [n]^d$ and denote
\beq
r_{d,n} = \sup_{\alpha \in (0,1)} (Q_n(\alpha,d) - \alpha).
\eeq
We thus have,
\beq
Q_n(\serr(h_{S_n(\bm i, *)},S_n) , d) \leq \serr(h_{S_n(\bm i, *)}, S_n) + r_{d,n}.
\eeq
Let $\bm i' = \{1,\dots,n\}\setminus \bm i$ and note that
\beq
\serr(h_{S_n(\bm i, *)}, S_n) \leq \frac{n-d}{n} \serr(h_{S_n(\bm i, *)}, S_n(\bm i')) + \frac{d}{n}.
\eeq
Combining the two inequalities above, we get
\[
Q_n(\serr(h_{S_n(\bm i, *)},S_n) , d) \leq \serr(h_{S_n(\bm i, *)}, S_n(\bm i')) + \frac{d}{n} + r_{d,n}.
\]
Recalling $\Ng\in o(n)$, 
by Property \Qthreeb,
\beq
\lim_{n\to\infty} \frac{\Ng}{n} + r_{\Ng,n} =0.
\eeq
In addition, by \bQ2,
we have
$r_{d,n} \leq r_{\Ng,n}$ for all $d\leq \Ng$.
Hence, we take $n$ sufficiently large so that for all $d\leq \Ng$,
\beq
\frac{d}{n}  +   r_{d,n}
 \leq \frac{\eps}{4},
\eeq 
and thus
\[
Q_n(\serr(h_{S_n(\bm i, *)},S_n) , d) \leq \serr(h_{S_n(\bm i, *)}, S_n(\bm i')) + \frac{\eps}{4} .
\]
Therefore, for such an $n$, 
\beq
&&
Q_n( \serr(h_{S_n(\bm i, *)}, S_n),d ) 
>
\err(h_{S_n(\bm i, *)}) +  \frac{\eps}{2}
\\
&& \qquad\qquad\implies\quad
\serr(h_{S_n(\bm i, *)}, S_n(\bm i')) 
>
\err(h_{S_n(\bm i, *)})  +  \frac{\eps}{4}
.
\eeq
Now,
\beqn
\label{eq:exp-bounds}
&& \P\left[
\serr(h_{S_n(\bm i, *)}, S_n(\bm i')) 
>
\err(h_{S_n(\bm i, *)})   +  \frac{\eps}{4}
\right]
\\
\nonumber
& = &
\E_{S_n(\bm i)}
\left[
\P_{S_n(\bm i') \gn S_n(\bm i)}
\left[
\serr(h_{S_n(\bm i, *)}, S_n(\bm i')) 
>
\err(h_{S_n(\bm i, *)})  +  \frac{\eps}{4}
\right]
\right]
.
\eeqn
Since $\P_{S_n(\bm i') \gn S_n(\bm i)}$ is a product distribution, by Hoeffding's inequality we have that (\ref{eq:exp-bounds}) is bounded above by $e^{-2(n-d)(\frac{\eps}{4})^2}$. 
Since $h_{S_n(\bm i, *)}$ is invariant to permutations of $\bm i$'s entries,
bounding (\ref{eq:sum-bound-lb}) by a union bound over $\bm i$ yields
\beq
p_d \leq
\binom{n}{d}
e^{-2(n-d)(\frac{\eps}{4})^2}
\leq
e^{d\log\left(\frac{en}{d}\right)  -2(n-d)(\frac{\eps}{4})^2},
\eeq
where we used
$\binom{n}{d} \leq \left(\frac{en}{d}\right)^d$.
Selecting $n$ large enough so that for all $d\leq \Ng$ we have $d\log(e n/d) \leq (n-d)(\frac{\eps}{4})^2$ and $d \leq n/4$, we get the statement of the lemma.
%\end{proof}

\subsection{Proof of \lemref{sublinear_comp}}
%\begin{proof}[Proof of \lemref{sublinear_comp}]
Let $\gnet(\g)$ be any $\g$-net of $S_n=(X_1,\dots,X_n)$ and let $\BP= \{\bp_1,\bp_2,\dots\}$ be a fixed countable partition of $\X'$ (for separable $\X' \subseteq \X$ of $\mu(\X')=1$) with 
\beq
\diam(\BP) = \sup_{i\in\N}(\diam(\bp_i)) < \g,
\eeq
which exists by the separability assumption.
Denote the number of occupied cells in $\BP$ by
\beq
U_n(X_1,\dots,X_n) = \sum_{\bp_i\in\BP} \pred{S_n \cap \bp_i \neq \emptyset}.
\eeq
Since $\gnet(\g)$ is a $\g$-net of $S_n$, any cell $\bp_i\in\BP$ contains at most one $X\in\gnet(\g)$.
Hence,
\beq
%\rns_n(\g) =
|\gnet(\g)| \leq U_n(X_1,\dots,X_n).
\eeq
So it suffices to bound $U_n$.
To this end, denote by $i(X)$ the cell in $\BP$ such that  $X\in A_{i(X)}$.
Then,
\beq
\E[U_n(X_1,\dots,X_n)] &=& \sum_{j=1}^{n} \P
\left[
A_{i(X_j)} \cap \{X_{1}\dots,X_{j-1}\} = \emptyset
\right]
\\
&=& 
\sum_{j=1}^{n} \E[1-\mu(\cup_{k=1}^{j-1} A_{i(X_k)})].
\eeq
Since 
\begin{equation}
\lim_{j\to\infty}
\E\sqprn{\mu\paren{\bigcup_{k=1}^{j} A_{i(X_k)}}}
= \lim_{n\to\infty} \E\sqprn{\mu\paren{\bigcup_{\bp_i\in\BP:\bp_i\cap S_n\neq\emptyset} \bp_i}} = 1,\label{eq:missmass0}
\end{equation}
we have
\beq
\E[U_n(X_1,\dots,X_n)] \in o(n).
\eeq
Let
\beq
\Ng := 2\left(\E[U_n(X_1,\dots,X_n)] + \sqrt{n\log n}\right) \in o(n).
\eeq
%\todo{Sivan: fixed the factor of 2}
Since $U_n$ is $1$-Lipschitz with respect to the Hamming distance, McDiarmid's inequality implies
\beq
\P[2U_n(X_1,\dots,X_n) \geq \Ng] \leq 1/n^2,
\eeq
concluding the proof.

\subsection{Proof of \lemref{missing_mass}}
%\todo{RW: a proof of the new \lemref{missing_mass}}
%\begin{proof}[Proof of \lemref{missing_mass}]
Let $\BP= \{\bp_1,\bp_2,\dots\}$ be a fixed countable partition of 
$\X$ with $\diam(\BP)<\g$
as in the proof of Lemma \ref{lem:sublinear_comp}.
Consider the random variable
\beq
F_{\BP}(S_n)=1-\mu(\cup_{\bp_i\in\BP:\bp_i\cap S_n\neq\emptyset} \bp_i),
\eeq
corresponding to the total mass of all cells not hit by the sample $S_n$.
Since $\diam(\BP) < \g$, we have that
$\bp_i\subseteq B_\g(x)$
for all $x\in \bp_i$.
Hence, with probability $1$
\beq
\missmass_\g(S_n) = 1-\mu(\UB_\g(S_n)) \leq F_{\BP}(S_n),
\eeq
whence
\beq
\P\left[
  \missmass_\g(S_n) \ge
\E[F_{\BP}(S_n)]  
  + t
\right]
&\leq &
\P\left[
F_{\BP}(S_n)
\geq 
\E[F_{\BP}(S_n)]
+ t
\right].
\eeq
Invoking the concentration bound for the missing mass in \cite[Theorem 1]{berend2013concentration},
\beq
\P\left[
F_{\BP}(S_n)
\geq 
\E[F_{\BP}(S_n)]
+ t
\right]
&\leq&
\exp\left(- n t^2\right)
\eeq
and observing that,
by
(\ref{eq:missmass0}),
$\lim_{n\to\infty}\E[F_{\BP}(S_n)]=0$,
shows that the choice $u_\g(n):=\E[F_{\BP}(S_n)]$
verifies the properties claimed.
%\end{proof} 

\section{Auxiliary lemmas for Section \ref{SEC:UBC_impossible} -- case (\textit{I})}
\label{ap:nonsep_I}

\subsection{Auxiliary Lemma \ref{lem:borel_subset}}

\begin{lemma}
\label{lem:borel_subset}
Suppose that $U$ is a discrete subset of a metric space $(\X,\rho)$.
Then every $E\subseteq U$ is Borel.
\end{lemma}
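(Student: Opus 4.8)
The plan is to reduce the possibly-uncountable set $E$ to a \emph{countable} union of closed (hence Borel) sets, by stratifying $U$ according to how isolated its points are. The naive attempt $E=\bigcup_{x\in E}\{x\}$ fails when $E$ is uncountable — although singletons are closed in any metric space, an uncountable union of them need not be Borel — so the real content is in producing the right countable decomposition.

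First I would unpack discreteness: for each $x\in U$ there is $r_x>0$ with $B_{r_x}(x)\cap U=\{x\}$. Since this property persists under shrinking $r_x$, I define, for each $n\in\N$,
\[
U_n:=\{\, x\in U \ :\ B_{1/n}(x)\cap U=\{x\} \,\},
\]
and note $U_1\subseteq U_2\subseteq\cdots$ and $U=\bigcup_{n\ge 1}U_n$, the latter because any $x\in U$ lies in $U_n$ once $1/n\le r_x$. The key observation is that each $U_n$ is $\tfrac1n$-\emph{separated}: if $x,y\in U_n$ with $x\neq y$, then $y\in U\setminus\{x\}$ forces $y\notin B_{1/n}(x)$, i.e. $\rho(x,y)\ge 1/n$. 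Consequently, for any $E\subseteq U$, each piece $E\cap U_n$ inherits $\tfrac1n$-separation.

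It then remains to observe that a $\delta$-separated set $S$ is closed: any point of $\overline S$ is the limit of a sequence in $S$, which is therefore Cauchy, hence eventually has all its terms within $\delta$ of one another, hence is eventually constant, so its limit lies in $S$. Thus each $E\cap U_n$ is closed, therefore Borel, and $E=\bigcup_{n\ge 1}(E\cap U_n)$ is a countable union of Borel sets, hence Borel. There is no serious obstacle here beyond resisting the naive uncountable-union argument; the only step requiring a moment's care is verifying that $\delta$-separated sets are closed, which follows directly from the Cauchy property of convergent sequences.
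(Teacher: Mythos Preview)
Your proof is correct and takes a genuinely different route from the paper's. The paper exhibits $E$ as a $G_\delta$ set: fixing isolation radii $r_x$, it sets $V_n:=\bigcup_{x\in E}B_{r_x/n}(x)$ (open) and claims $E=\bigcap_{n}V_n$. You instead exhibit $E$ as an $F_\sigma$ set by stratifying $U$ into the uniformly $\tfrac1n$-separated layers $U_n$ and observing that each $E\cap U_n$ is closed.

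Your argument is arguably more transparent: the only nontrivial step is that a $\delta$-separated set is closed, which you dispatch cleanly via sequential closure (valid since metric spaces are first-countable). The paper's proof, by contrast, leaves the inclusion $\bigcap_n V_n\subseteq E$ implicit; verifying it requires a short but non-obvious argument ruling out points $z\notin U$ that might sit in every $V_n$ via a varying sequence of centers $x_n$. On the other hand, the paper's construction has the minor aesthetic advantage of using the given radii $r_x$ directly rather than introducing a new stratification. Both approaches are elementary and of comparable length.
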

\bepf
By discreteness, for each $x\in U$ there is an $r_x>0$ such that $B_{r_x}(x)\cap U=\set{x}$. The latter property is satisfied by any other $0<r<r_x$. Further, for any $n\in\N$, the set $V_n:=\cup_{x\in E}B_{r_x/n}$ is open and
\beq
E = \bigcap_{n\in\N} V_n,
\eeq
whence $E$ is Borel.
\enpf

\subsection{Proof of Lemma \ref{lem:nonsep-eps-sep}}
%\begin{proof}[Proof of Lemma \ref{lem:nonsep-eps-sep}]
For $\eps>0$, consider the family of all $\eps$-separated subsets of $\Uorel$,
\beq
\mathscr{F}_\eps = \set{F\subseteq \Uorel: \Delta(A,B)\geq\eps, A\neq B\in F}.
\eeq
Let $\mathscr{F}_\eps^{\max}$ be the set of maximal elements in $\mathscr{F}_\eps$, which by Zorn's lemma is non-empty. 
Any $F\in \mathscr{F}_\eps^{\max}$ is an $\eps$-net of $\Uorel$ by maximality.
Let $\{\eps_i\}_{i\in\N}$ be a sequence such that $\eps_i>0$ and $\lim_{i\to\infty}\eps_i =0$ and let $\{D_i\}_{i\in\N}$ be such that $D_i\in \mathscr{F}_{\eps_i}^{\max}$ for all $i\in\N$.
Clearly, the set $D =\cup_{i\in\N} D_i$ is dense in $\Uorel$.
Moreover, by \cite[Lemma 2]{barbati1997density}, 
\beq
|D| = \sup_{i\in\N} |D_i| = d(\Uorel),
\eeq
where $d(\Uorel)$ is the density of $(\Uorel,\Delta)$ (namely, the smallest cardinality of any subset of $\Uorel$ which is dense for the metric space).
Hence, for any cardinal $\alpha < d(\Uorel)$, there is a finite $i\in\N$ such that $D_i$ is an $\eps_i$-separated set with cardinality $|D_i|\geq \alpha$.
Thus, to prove the Lemma it suffices to show that $d(\Uorel)>\kappa$.

To show that $d(\Uorel)>\kappa$, consider the measure algebra $(\Uorel,\tilde\mu)$.
Since $\tilde\mu$ is totally finite, the topology of the measure algebra is the same topology generated by the metric space $(\Uorel,\Delta)$ \cite[323A(d)]{fremlin2000measure}.
So the density of the measure algebra topology is $d(\Uorel)$ as well.
The {Maharam type} $\tau(\Uorel)$ of $(\Uorel,\tilde\mu)$  is defined as the smallest cardinality of any subset of $\Uorel$ which generates the topology of $(\Uorel,\tilde\mu)$ \cite[331E-F]{fremlin2000measure}.
Since $\Uorel$ is infinite,
\cite[521O(ii)]{fremlin2000measure} implies $d(\Uorel) = \tau(\Uorel)$.
Since $\mu$ is a finite, atomless, and $\kappa$-additive measure on $(\X,2^\X)$, Gitik-Shelah Theorem \cite[543F]{fremlin2000measure} implies that $(\Uorel,\tilde\mu)$ has Maharam type 
$\tau(\Uorel)>
\kappa$.
Hence, $d(\Uorel)=\tau(\Uorel)>\kappa$.
%\end{proof}

\section{Auxiliary lemmas for Section \ref{SEC:UBC_impossible} -- case (\textit{II})}
\label{ap:nonsep_II}

\subsection{Auxiliary Lemmas \ref{lem:measure_1_intersect} and \ref{lem:two_valued_properties}}

\begin{lemma} Let $\mu$ be a measure. 
\label{lem:measure_1_intersect}
For any countable family $\{U_i\}_{i=1}^\infty$ with $\mu(U_i)=1,\forall i\in\N$, we have $\mu\left(\bigcap_{i=1}^\infty U_i\right) = 1$.
\end{lemma}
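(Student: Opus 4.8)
The plan is to pass to complements and use countable subadditivity. Write $N_i := \X \setminus U_i$. From axiom (iii) applied to the disjoint pair $\{U_i, N_i\}$ together with $\mu(\X)=1$, we get $\mu(U_i) + \mu(N_i) = 1$, so $\mu(N_i) = 0$ for every $i \in \N$. Since $\X \setminus \bigcap_{i=1}^\infty U_i = \bigcup_{i=1}^\infty N_i$, it suffices to show $\mu\big(\bigcup_{i=1}^\infty N_i\big) = 0$, because then $\mu\big(\bigcap_{i=1}^\infty U_i\big) = 1 - \mu\big(\bigcup_{i=1}^\infty N_i\big) = 1$. Note that $\bigcap_{i=1}^\infty U_i$ and all the sets below are in $\Borel$ automatically, since each $U_i \in \Borel$ (as $\mu(U_i)$ is defined) and $\Borel$ is a $\sigma$-algebra.

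To bound the measure of the countable union, I would disjointify. Set $B_1 := N_1$ and $B_i := N_i \setminus \bigcup_{j<i} N_j$ for $i \geq 2$. The $B_i$ are pairwise disjoint, lie in $\Borel$, satisfy $B_i \subseteq N_i$, and $\bigcup_{i=1}^\infty B_i = \bigcup_{i=1}^\infty N_i$. By monotonicity (axiom (ii)), $\mu(B_i) \leq \mu(N_i) = 0$, hence $\mu(B_i)=0$; and by countable additivity (axiom (iii)), $\mu\big(\bigcup_{i=1}^\infty N_i\big) = \mu\big(\bigcup_{i=1}^\infty B_i\big) = \sum_{i=1}^\infty \mu(B_i) = 0$. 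This gives the claim.

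There is essentially no obstacle here; the only point worth spelling out is that the axioms as stated give countable additivity only for pairwise disjoint families, so the countable subadditivity used above must be obtained through the disjointification step rather than invoked directly. The argument uses only the definition of a measure and is independent of any metric or topological structure.
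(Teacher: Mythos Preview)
Your proof is correct and follows essentially the same approach as the paper: pass to complements $N_i = U_i^c$, disjointify to $B_i = N_i \setminus \bigcup_{j<i} N_j$ (the paper calls these $V_i$), and apply countable additivity. You are slightly more explicit in invoking monotonicity to conclude $\mu(B_i)=0$ and in noting that subadditivity must be derived via disjointification, but the argument is the same.
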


\begin{proof}[Proof of Lemma \ref{lem:measure_1_intersect}]
Let $U_i^c = \X\setminus U_i$ and $V_i = U_i^c\setminus\{\bigcup_{j<i} U_j^c\}$. 
Then $\mu(V_i)=0$, $\{V_i\}_{i=1}^\infty$ are pairwise disjoint, and $\left(\bigcap_{i=1}^\infty U_i \right)^c = \bigcup_{i=1}^\infty U_i^c  = \bigcup_{i=1}^\infty V_i$.
Thus, $\mu\left(\bigcap_{i=1}^\infty U_i\right) = 1 - \mu\left(\bigcup_{i=1}^\infty V_i\right) = 1 - \sum_{i=1}^\infty \mu(V_i)=1$.
\end{proof}

\begin{lemma}
\label{lem:two_valued_properties}
Let $\nu\neq\mu$ be any distinct two-valued measures on $(\X,\Borel)$.
Then there exists $B\subseteq \X$ such that $\nu(B) = \mu(\X\setminus B) = 1$.
\end{lemma}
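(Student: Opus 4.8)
The plan is to extract a single measurable set on which the two measures disagree and then orient it correctly. Since $\nu\neq\mu$ as set functions on $\Borel$, by definition there exists some $A\in\Borel$ with $\nu(A)\neq\mu(A)$. First I would invoke the two-valued property of both measures: each of $\nu(A)$ and $\mu(A)$ lies in $\set{0,1}$, so the inequality $\nu(A)\neq\mu(A)$ forces $\set{\nu(A),\mu(A)}=\set{0,1}$, i.e.\ one of the two equals $1$ and the other equals $0$.

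It then remains only to pick $B$ with the right orientation. If $\nu(A)=1$ and $\mu(A)=0$, set $B:=A$; then $\nu(B)=1$ and $\mu(\X\setminus B)=1-\mu(B)=1$. Otherwise $\nu(A)=0$ and $\mu(A)=1$, and I set $B:=\X\setminus A$; then $\nu(B)=1-\nu(A)=1$ and $\mu(\X\setminus B)=\mu(A)=1$. In either case the required set exists, which finishes the proof. There is essentially no obstacle here: the only ingredients are the definition of distinctness of measures and the fact that a two-valued measure and its complement of a set sum to $1$; the lemma is immediate and serves only to set up the choice of $\nu$ in case (\textit{II}).
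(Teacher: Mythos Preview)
Your proof is correct and follows essentially the same approach as the paper: pick a set $A$ on which the two measures disagree, use the two-valued property to conclude $\{\mu(A),\nu(A)\}=\{0,1\}$, and then take $B$ to be either $A$ or its complement. The only cosmetic difference is that the paper handles the two cases via a ``without loss of generality'' rather than writing them out explicitly.
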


%\subsection{Proof of Lemma \ref{lem:two_valued_properties}}

\begin{proof}[Proof of Lemma \ref{lem:two_valued_properties}]
By definition, $\mu$ and $\nu$ are distinct if $\exists A\subseteq\X$ such that $\mu(A)\neq\nu(A)$. Since $\mu$ and $\nu$ are two-valued, we must have that $\mu(A) = 1 - \nu(A)\in\{0,1\}$. In addition, either $\mu(A)=1$ or $\mu(\X\setminus A)=1$.
Assuming without loss of generality that $\mu(A)=1$, the set $B=\X\setminus A$ satisfies the required properties.
\end{proof}

\subsection{Proof of Lemma \ref{lem:homogeneous}}

%\begin{proof}[Proof of Lemma \ref{lem:homogeneous}]
The required measure $\mu$ is taken as a witnessing measure with the additional property of being \emph{normal}.
\begin{remark}
  %  \footnote{
  In the terminology of \cite[Chapter 10]{jech}, a two-valued witnessing measure $\mu$ on $\X$ is equivalent to a $\kappa$-complete non-principal ultrafilter on $2^\X$ consisting of all sets with measure $1$ under $\mu$.
  The latter
  is normal if the ultrafilter is also closed under diagonal intersection.
  %}
  \eor
\end{remark}
For our needs, it suffices that a normal measure $\mu$ exists on $\MSk$, a fact proved in \cite[Theorem 10.20]{jech}.
To establish the homogeneity property of $\mu$ we apply \cite[Theorem 10.22]{jech}, which in the terminology of current paper takes the following form.

\begin{theorem}[\cite{jech}, Theorem 10.22]
\label{lem:homogeneous_jech}
Let $\X$ be of two-valued measurable cardinality $\kappa$, let $\mu$ be a normal measure on $\MSk$, and let $F:[\X]^{<\omega}\to \mathcal{R}$ with $|\mathcal{R}|<\kappa$. 
Then, there exists a set $U\subseteq\X$ with $\mu(U)=1$ that is homogeneous for $F$.
\end{theorem}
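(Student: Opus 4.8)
The plan is to establish, by induction on the arity $n\in\N$, the following reformulation: for every set $Y$ with $|Y|=\kappa$ carrying a normal, $\kappa$-additive, two-valued measure $\mu$, and every $G:[Y]^{n}\to\mathcal{R}$ with $|\mathcal{R}|<\kappa$, there is a $U\subseteq Y$ with $\mu(U)=1$ on which $G$ is constant. Granting this for every $n$, the theorem follows at once: apply it to each restriction $F|_{[\X]^{n}}$ of $F:[\X]^{<\omega}\to\mathcal{R}$, obtaining measure-one sets $U_{n}$ with $F|_{[\X]^{n}}\equiv C_{n}$ on $[U_{n}]^{n}$, and set $U:=\bigcap_{n\in\N}U_{n}$; then $\mu(U)=1$ by Lemma~\ref{lem:measure_1_intersect}, and since $[U]^{n}\subseteq[U_{n}]^{n}$ the set $U$ is homogeneous for $F$. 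Throughout I would fix a bijection of $Y$ with the ordinal $\kappa$, so that $\mu$ becomes a normal measure on $2^{\kappa}$; recall that \emph{normality} means exactly that whenever $\mu(A_{x})=1$ for every $x\in Y$, the diagonal intersection $\triangle_{x}A_{x}:=\{y\in Y: y\in A_{x}\text{ for all }x<y\}$ again has $\mu(\triangle_{x}A_{x})=1$. By Lemma~\ref{lem:measure_card}, every proper initial segment $\{z:z<y\}$ is $\mu$-null, so $Y_{>y}:=\{z\in Y:z>y\}$ has $\mu(Y_{>y})=1$ for every $y$.

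The base case $n=1$ is immediate: $G:Y\to\mathcal{R}$ partitions $Y=\bigcup_{r\in\mathcal{R}}G^{-1}(\{r\})$ into fewer than $\kappa$ pieces, and since $\mu$ is $\kappa$-additive and $\{0,1\}$-valued, some value $r$ has $\mu(G^{-1}(\{r\}))=1$; take $U:=G^{-1}(\{r\})$. (In particular, any function on $Y$ with fewer than $\kappa$ values is constant on a measure-one set; I reuse this below.) For the inductive step, let $G:[Y]^{n+1}\to\mathcal{R}$, and for each $x\in Y$ define $g_{x}:[Y_{>x}]^{n}\to\mathcal{R}$ by $g_{x}(t):=G(\{x\}\cup t)$. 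The restriction of $\mu$ to $Y_{>x}$ makes $Y_{>x}$ a set of cardinality $\kappa$ carrying a normal, $\kappa$-additive, two-valued measure, so the inductive hypothesis gives $A_{x}\subseteq Y_{>x}$ with $\mu(A_{x})=1$ and a constant $c_{x}\in\mathcal{R}$ such that $g_{x}\equiv c_{x}$ on $[A_{x}]^{n}$. The map $x\mapsto c_{x}$ takes fewer than $\kappa$ values, so by the base case there are $C_{n+1}\in\mathcal{R}$ and $W$ with $\mu(W)=1$ and $c_{x}=C_{n+1}$ for all $x\in W$. Set $A_{x}:=Y$ for $x\notin W$ and $D:=W\cap\triangle_{x}A_{x}$; by normality and finite additivity, $\mu(D)=1$. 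For any $s=\{x_{0}<x_{1}<\dots<x_{n}\}\in[D]^{n+1}$ we have $x_{0}\in D\subseteq W$, hence $c_{x_{0}}=C_{n+1}$; and each $x_{i}$ with $i\geq1$ lies in $D\subseteq\triangle_{x}A_{x}$ with $x_{0}<x_{i}$, hence $x_{i}\in A_{x_{0}}$, so $\{x_{1},\dots,x_{n}\}\in[A_{x_{0}}]^{n}$ and $G(s)=g_{x_{0}}(\{x_{1},\dots,x_{n}\})=c_{x_{0}}=C_{n+1}$. Thus $G\equiv C_{n+1}$ on $[D]^{n+1}$, which closes the induction.

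The step I expect to be the genuine obstacle is this inductive passage from $n$ to $n+1$: the base case and the amalgamation of the constants $c_{x}$ rely only on $\kappa$-additivity and two-valuedness, but collapsing the measure-one sets $A_{x}$ into a \emph{single} measure-one set on which all the shifted functions $g_{x}$ are simultaneously controlled is precisely a diagonal-intersection (pressing-down/Fodor-type) argument, and this is the unique point where normality — not merely $\kappa$-additivity — is used; without it the conclusion fails. This also records how Lemma~\ref{lem:homogeneous} invokes the theorem: one takes $\mathcal{R}=\R$, which is legitimate because a two-valued measurable $\kappa$ satisfies $\kappa>\cont=|\R|$ by Ulam's dichotomy (Theorem~\ref{thm:ulam_dichotomy}), and takes $\mu$ to be a normal witnessing measure, whose existence is \cite[Theorem~10.20]{jech}.
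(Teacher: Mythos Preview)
The paper does not prove this theorem at all: it is quoted verbatim as Theorem~10.22 of Jech's \emph{Set Theory} and used as a black box in the proof of Lemma~\ref{lem:homogeneous}. So there is no ``paper's own proof'' to compare against.

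Your argument is correct and is essentially the standard Rowbottom-type proof that appears in Jech: induct on the arity $n$, freeze the least element $x$ to reduce $G:[Y]^{n+1}\to\mathcal{R}$ to a family $g_x:[Y_{>x}]^n\to\mathcal{R}$, apply the inductive hypothesis to obtain measure-one homogeneous sets $A_x$ with constant values $c_x$, stabilize $x\mapsto c_x$ on a measure-one set using $\kappa$-additivity, and finally take the diagonal intersection of the $A_x$ --- the unique step that requires normality rather than mere $\kappa$-additivity, exactly as you flag. One point worth a sentence of justification is the claim that the trace of $\mu$ on the tail $Y_{>x}$ is again normal; this is true (extend a family indexed by $Y_{>x}$ to one indexed by all of $Y$ by setting $A_\gamma=Y$ for $\gamma\le x$, and observe that the full diagonal intersection agrees with the tail diagonal intersection on $Y_{>x}$), but an even simpler fix is to extend each $g_x$ arbitrarily to $[Y]^n$, apply the inductive hypothesis on $Y$ itself, and then intersect the resulting homogeneous set with $Y_{>x}$.

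Your closing paragraph also accurately reconstructs how the paper applies the theorem to obtain Lemma~\ref{lem:homogeneous}: take $\mathcal{R}=\R$, which is legitimate since $\kappa>\cont$ by Ulam's dichotomy (Theorem~\ref{thm:ulam_dichotomy}), and take $\mu$ to be a normal witnessing measure, whose existence is \cite[Theorem~10.20]{jech}.
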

Since $\kappa$ is two-valued measurable, Ulam's dichotomy implies $\kappa>\cont = |\R|$.
An application of Theorem \ref{lem:homogeneous_jech} with $\mathcal{R}=\R$
completes the proof of the Lemma.
%\end{proof}

\subsection{Proof of Lemma \ref{lem:bayes_optimal}}
%\begin{proof}[Proof of Lemma \ref{lem:bayes_optimal}]
Assume first that $\phi=\mu$ and let $(X,Y)\sim\mix_{\mu}$.
Note that $Y\sim \text{Bernoulli}(2/3)$ and $X\sim\mu$ is independent of $Y$.
Thus, for any classifier $h:\X\to\{0,1\}$,
\beq
\err_{\mix_{\mu}}(h) &=& 
\P[h(X)\neq Y] 
\\
&=&
\frac{2}{3}\cdot\P[h(X) = 0 \gn Y=1]
+ \frac{1}{3}\cdot \P[h(X) = 1 \gn Y=0]
\\
&=&
\frac{2}{3}\cdot\mu(h(X)=0) + 
\frac{1}{3}\cdot \mu(h(X)=1)
\\
&=& \frac{2}{3} - \frac{1}{3} \cdot\mu(h(X)=1)
\geq \frac{1}{3}.
\eeq
Hence, the Bayes-optimal error is $1/3$ (as demonstrated by the classifier $\hst(x)=1$) and is achieved if and only if $\mu(h(X)=1)=1$.

Assume now that $\phi=\nu\neq\mu$.
Then,
\beq
X|Y
\;\sim\;
\begin{cases}
\nu, & \text{if } Y=1;
\\
\mu, & \text{if } Y=0.
\end{cases}
\eeq
Thus, the error of a classifier $h$ is
\beq
\err_{\mix_{\nu}}(h) &=& 
\frac{2}{3}\cdot\P[h(X) = 0 \gn Y=1]
+ \frac{1}{3}\cdot \P[h(X) = 1 \gn Y=0]
\\
&=&
\frac{2}{3}\cdot\nu(h(X)=0) + 
\frac{1}{3}\cdot\mu(h(X)=1).
\eeq
Since both $\mu$ and $\nu$ are two-valued, Lemma \ref{lem:two_valued_properties} implies $\exists B\subseteq\X$ such that
\beq
\nu(B) = \mu(\X\setminus B) = 1.
\eeq
Thus, the Bayes-optimal error is $0$ (as demonstrated by $\hst(x) = \pred{x \in B}$) and is achieved if and only if $\nu(h(X)=0)=0$ and $\mu(h(X)=1)=0$.
%\end{proof}

\section{Total ordering in metric spaces}
\label{ap:total_order}
The following is due to Vladimir Pestov (via personal communication).

\begin{proposition}
\label{prop:total_order}
Every metric space $X$ admits a total order $\prec$ with the property that the graph of this order in $X\times X$ is Borel measurable, in particular, each initial segment is Borel measurable.
\end{proposition}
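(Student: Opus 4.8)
The plan is to obtain $\prec$ as the pullback, under a suitable injection of $X$ into a countable product of well‑ordered sets, of the lexicographic order on that product. The injection will be built from a $\sigma$‑discrete base of $X$; the subtlety is that, although the construction uses arbitrary (choice‑given) well‑orderings of possibly uncountable index sets, the graph of the resulting order can be expressed using only countably many Boolean operations applied to unions of open rectangles, and is therefore Borel.

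Concretely, by Bing's metrization theorem $X$ has a base $\mathcal B=\bigcup_{n\in\mathbb N}\mathcal B_n$ with each $\mathcal B_n$ a discrete family of open sets; since a discrete family is pairwise disjoint, each $\mathcal B_n$ is a disjoint family of open sets. Using the axiom of choice, fix for every $n$ a well‑ordering of $\mathcal B_n$ and write $\mathcal B_n=\{B^n_\xi:\xi<\lambda_n\}$. For $x\in X$ and $n\in\mathbb N$, let $\xi_n(x)\in\lambda_n\cup\{\infty\}$ (with $\infty$ a new top element) be the unique $\xi$ with $x\in B^n_\xi$ if such $\xi$ exists (well defined by disjointness of $\mathcal B_n$), and $\xi_n(x)=\infty$ if $x\notin\bigcup\mathcal B_n$. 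Since $\mathcal B$ is a base, the map $x\mapsto(\xi_n(x))_{n\in\mathbb N}$ is injective: if $x\neq y$, choose $B\in\mathcal B$ with $x\in B$ and $y\notin B$ (take $B$ inside the ball of radius $\rho(x,y)$ about $x$), say $B=B^n_\xi$; then $\xi_n(x)=\xi\neq\xi_n(y)$. Declare $x\prec y$ iff, with $m$ the least index for which $\xi_m(x)\neq\xi_m(y)$, one has $\xi_m(x)<\xi_m(y)$. This is precisely the lexicographic order on $\prod_{n}(\lambda_n\cup\{\infty\})$ transported to $X$ along this injection, hence a linear (total) order on $X$.

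It remains to show that $G:=\{(x,y)\in X\times X:x\prec y\}$ is Borel in the product topology. Writing $E_n:=\{(x,y):\xi_n(x)=\xi_n(y)\}$ and $L_m:=\{(x,y):\xi_m(x)<\xi_m(y)\}$, we have
\[
G=\bigcup_{m\in\mathbb N}\Big(\big(\textstyle\bigcap_{n<m}E_n\big)\cap L_m\Big),
\]
so it suffices that each $E_n$ and $L_m$ is Borel. For $E_n$: equality $\xi_n(x)=\xi_n(y)$ holds iff $x,y$ lie in a common $B^n_\xi$, or both lie outside $\bigcup\mathcal B_n$, so $E_n=\big(\bigcup_\xi B^n_\xi\times B^n_\xi\big)\cup\big(X\setminus\bigcup\mathcal B_n\big)^2$, an open set (any union of open rectangles is open) together with a closed set. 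For $L_m$: putting $U_m:=\bigcup\mathcal B_m$ (open), $\xi_m(x)<\xi_m(y)$ holds iff $x\in U_m$ and either $y\notin U_m$ or there are $\xi<\eta$ with $x\in B^m_\xi$, $y\in B^m_\eta$, whence
\[
L_m=(U_m\times X)\cap\Big(\big(X\times(X\setminus U_m)\big)\cup\textstyle\bigcup_{\xi<\eta}B^m_\xi\times B^m_\eta\Big),
\]
again Borel, since $\bigcup_{\xi<\eta}B^m_\xi\times B^m_\eta$ is a union of open rectangles and hence open. Thus $G$ is Borel, and every initial segment $\{y\in X:y\prec x_0\}$, being the $x_0$‑section of $G$ (the preimage of $G$ under the continuous map $y\mapsto(y,x_0)$), is Borel as well.

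The main obstacle is the non‑separable case: if $X$ is separable one may use a countable base, or a topological embedding into $[0,1]^{\mathbb N}$, and the lexicographic order already has $F_\sigma$ graph with no appeal to choice. In general, the well‑orderings of the possibly uncountable levels $\mathcal B_n$ seem unavoidable, and the key point making the argument go through is that disjointness of each $\mathcal B_n$ lets one rewrite both ``$\xi_n(x)=\xi_n(y)$'' and ``$\xi_n(x)<\xi_n(y)$'' in terms of unions of the open rectangles $B^n_\xi\times B^n_\eta$ — which are open no matter how many of them there are — rather than through uncountable unions or intersections that might fall outside the Borel $\sigma$‑algebra.
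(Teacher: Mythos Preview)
Your proof is correct and shares the same core idea as the paper's: impose the lexicographic order coming from coordinates in a countable product of well-ordered discrete index sets. The execution differs. The paper first invokes a Borel isomorphism of $X$ onto a subspace of the generalized Baire space $B(\tau)=\tau_{\mathrm{discrete}}^\omega$ (citing Stone), and then observes that the lexicographic order on $B(\tau)$ has \emph{open} graph, since around any pair $x\prec y$ one can place a product of two cylinder sets witnessing the order. You instead work directly in $X$ via Bing's $\sigma$-discrete base, defining the coordinate maps $\xi_n$ by hand and introducing the extra top value $\infty$ to handle points missing a level $\mathcal B_n$; this forces you to check Borel-ness of the pieces $E_n$ and $L_m$ explicitly, and the resulting graph is Borel but not open (because of the closed factor $(X\setminus U_n)$). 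Your route is more self-contained---no external embedding result needed---while the paper's is shorter once Stone's isomorphism is granted and yields the slightly stronger conclusion that the graph is open on $B(\tau)$ before pullback.
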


\begin{proof}
Let $\tau$ be the weight of $X$, that is, the smallest cardinality of a base.

For a cardinal $\tau$, denote $B(\tau)$ the generalized Baire space of weight $\tau$, that is, a countable topological product of copies of a discrete space of cardinality $\tau$:
\[B(\tau)=\tau_{discrete}^\omega.\]

An easy argument, using standard tools of descriptive set theory, shows that $X$ is Borel isomorphic to a subspace of $B(\tau)$. See e.g. lemma 3.3 in \cite{stone1962non}, although the lemma is establishing a much stronger result than that. (The lemma is about complete metric space $X$, but clearly the conclusion for arbitrary spaces follows by forming a completion first.)

It is enough to construct a Borel measurable order on $B(\tau)$. We will in fact construct a (strict) order which has an open graph. It is a  lexicographic order with regard to any total ordering on $\tau$, e.g., the canonical minimal well-ordering. Namely, an element $x=(x_n)$ is less than $y=(y_n)$, that is, $x\prec y$, if and only if $x_k<y_k$, where
\[k=\min\{n\colon x_n\neq y_n\}.\]
We will show that the graph of $\prec$,
\[\Gamma = \{(x,y)\in B(\tau)^2\colon x\prec y\},\]
is an open set in the topology of $B(\tau)$, thus finishing the argument. Let $(x,y)\in\Gamma$, that is, $x\prec y$. Define $k$ as above. Then $x,y$ can be written as $x=(x_1,\ldots,x_{k-1},x_k, \ldots)$, $y=(x_1,x_2,\ldots,x_{k-1},y_k,\ldots)$, where $x_k<y_k$. The cylinders
\[C_1 = \{z\in B(\tau)\colon z_1=x_1,\ldots,z_{k-1}=x_{k-1},z_k=x_k\}\]
and
\[C_2 = \{z\in B(\tau)\colon z_1=x_1,\ldots,z_{k-1}=x_{k-1},z_k=y_k\}\]
are open in the product topology on $B(\tau)$, so their product is open in $B(\tau)^2$, and also clearly $(x,y)\in C_1\times C_2$, and $C_1\times C_2\subseteq \Gamma$ (as each element of $C_1$ is strictly less than each element of $C_2$). 
%\qed
\end{proof}

\bibliographystyle{imsart-number}

\bibliography{refs}

\end{document}